\documentclass{article} 
\usepackage{iclr2026_conference,times}


\usepackage{amsmath,amsfonts,bm}









\def\eqref#1{equation~\ref{#1}}









\def\1{\bm{1}}










\DeclareMathAlphabet{\mathsfit}{\encodingdefault}{\sfdefault}{m}{sl}
\SetMathAlphabet{\mathsfit}{bold}{\encodingdefault}{\sfdefault}{bx}{n}













\usepackage{hyperref}
\usepackage{url}

\usepackage[utf8]{inputenc} 
\usepackage[T1]{fontenc}    
\usepackage{hyperref}       
\usepackage{url}            
\usepackage{booktabs}       
\usepackage{amsfonts}       
\usepackage{nicefrac}       
\usepackage{microtype}      
\usepackage{xcolor}         

\usepackage[svgnames]{xcolor}
\usepackage{amsmath}
\usepackage{amssymb}
\usepackage{mathtools}
\usepackage{amsthm}

\usepackage[capitalize,noabbrev]{cleveref}

\theoremstyle{plain}
\newtheorem{theorem}{Theorem}[section]

\newtheorem{lemma}[theorem]{Lemma}

\theoremstyle{definition}

\newtheorem{assumption}[theorem]{Assumption}
\theoremstyle{remark}
\newtheorem{remark}[theorem]{Remark}

\usepackage[textsize=tiny]{todonotes}


\usepackage{graphicx,psfrag,amsmath,amsfonts,verbatim}
\usepackage{multirow}
\usepackage[utf8]{inputenc} 
\usepackage[T1]{fontenc}    
\usepackage{hyperref}       
\usepackage{url}            
\usepackage{booktabs}       
\usepackage{amsfonts}       
\usepackage{nicefrac}       
\usepackage{microtype}      
\usepackage{xcolor}         
\usepackage{thm-restate}
\usepackage{wrapfig}


\title{Multi-Subspace Multi-Modal Modeling for Diffusion
Models: Estimation, Convergence and Mixture of
Experts}



\author{
  Ruofeng Yang$^{1\dagger}$, Yongcan Li$^{1\dagger}$, Bo Jiang$^{1}$, Cheng Chen$^{2}$, Shuai Li$^{1*}$ \\
  $^{1}$Shanghai Jiao Tong University, \text{\{wanshuiyin, joseph$\_$y, bjiang, shuaili8\}@sjtu.edu.cn} \\
  $^{2}$East China Normal University, \text{chchen@sei.ecnu.edu.cn}\\
  $^\dagger$Equal Contribution $^*$Corresponding Author
}
%

\iclrfinalcopy 
\begin{document}

\maketitle
\begin{figure*}[h!]
    \centering
\includegraphics[width=0.98\textwidth]{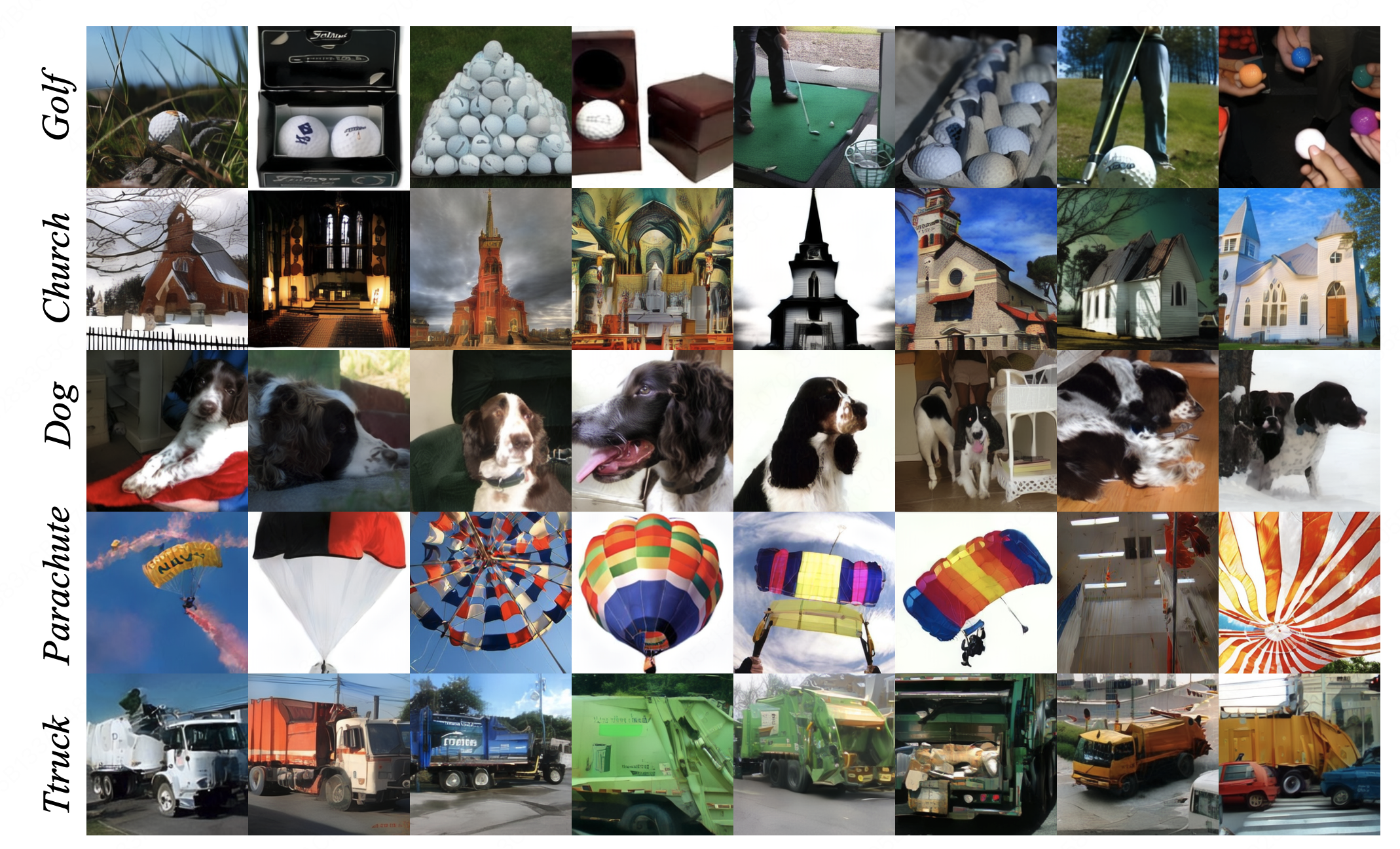}
\vspace{-5pt}
        \caption{ImageNet results with expert specific VAE and small latent {\color{DeepSkyBlue} \textit{2-layer Softmax-type network}}.}
\end{figure*}
\begin{abstract}
Recently, diffusion models have achieved a great performance with a small dataset of size $n$ and a fast optimization process. However, the estimation error of diffusion models suffers from the curse of dimensionality $n^{-1/D}$ with the data dimension $D$. Since images are usually a union of low-dimensional manifolds, current works model the data as a union of linear subspaces with Gaussian latent and achieve a $1/\sqrt{n}$ bound. Though this modeling reflects the multi-manifold property, the Gaussian latent can not capture the multi-modal property of the latent manifold. To bridge this gap, we propose the mixture subspace of low-rank mixture of Gaussian (MoLR-MoG) modeling, which models the target data as a union of $K$ linear subspaces, and each subspace admits a mixture of Gaussian latent ($n_k$ modals with dimension $d_k$). With this modeling, the corresponding score function naturally has a mixture of expert (MoE) structure, captures the multi-modal information, and contains nonlinear property. We first conduct real-world experiments to show that the generation results of MoE-latent MoG NN are much better than MoE-latent Gaussian score. Furthermore, MoE-latent MoG NN achieves a comparable performance with MoE-latent Unet with $10 \times$ parameters.
These results indicate that the MoLR-MoG modeling is reasonable and suitable for real-world data. After that, based on such MoE-latent MoG score, we provide a $R^4\sqrt{\Sigma_{k=1}^Kn_k}\sqrt{\Sigma_{k=1}^Kn_kd_k}/\sqrt{n}$ estimation error, which escapes the curse of dimensionality by using data structure. Finally, we study the optimization process and prove the convergence guarantee under the MoLR-MoG modeling. Combined with these results, under a setting close to real-world data, this work explains why diffusion models only require a small training sample and enjoy a fast optimization process to achieve a great performance.
\end{abstract}

\section{Introduction}\label{sec:intro}
Recently,  diffusion models have achieved impressive performance in many areas, such as 2D, 3D, and video generation \citep{rombach2022highstablediffuison,ho2022videodiff,chen2023videodreamer,ma2024followyourclick,liu2024one,tan2024edtalk,tan2025fixtalk}. Due to the score matching technique, diffusion models enjoy a more stable training process and can achieve great performance with a small training dataset. 

Despite the empirical success, the theoretical guarantee for the estimation and optimization error of the score matching process is lacking. For estimation error, current results suffer from the curse of dimensionality. More specifically,  given training dataset $\{x^{i}\}_{i=1}^n$ with $x^{i}\in \mathbb{R}^D$, the estimation error of the score function achieve the minimax $n^{-s'/D}$ results for (conditional) diffusion models with deep ReLU NN and diffusion transformer, where $s'$ is the smoothness parameter of the score function \citep{MinimaxOkoAS23,hu2024statisticaldit,hu2024statisticalconditiondit,fu2024unveilconditionunet}. It is clear that this estimation error is heavily influenced by the external dimension $D$,  which can not explain why diffusion models can generate great images with a small training dataset. Hence, a series of works studies estimation errors under specific target data structures and reduces the curse of dimensionality. There are two notable ways to model the target data: the multi-modal modeling and the low-dimensional modeling. For the multi-modal modeling, 
as the real-world target data is usually multi-modal, some works study the mixture of Gaussian (MOG) target data and improve the estimation error \citep{shah2023learningddpm,cui2023MOGlimitedsmaple,chen2024learning}. \citet{zhang2025generalization} also study the relationship between generalization and representation of diffusion models with MoG target distribution.  When we delve deeper into the images and text data, a key feature is that the image and text data usually admit a low-dimensional structure \citep{pope2021intrinsicimagenetlatent,brownverifying,kamkari2024geometric}. Hence, one notable way is to assume the data admits a low-dimensional structure. More specifically,  some works assume the data admits a linear subspace $x=Az$, where $A\in \mathbb{R}^{D\times d}$ to convert data to the latent space and $z\in \mathbb{R}^d$ is a bounded support  \citep{ ScoreMatchingdistributionrecovery,yuan2023reward,guo2024gradient}. Then, they reduce the estimation error to $n^{-2/d}$,  which removes the dependence of $D$.  
However, as shown in \citet{brownverifying} and \citet{kamkari2024geometric}, though the image dataset admits low dimension, it is a union of manifolds instead of one manifold. Inspired by this observation, \citet{wang2024diffusionclustering} model the image data as a union of linear subspaces, assume each subspace admits a low-dimensional Gaussian (mixture of low-rank Gaussians (MoLRG)), and achieve a $1/\sqrt{n}$ estimation error. Though the union of the linear subspace is closer to the real-world image dataset, the latent Gaussian assumption is far away from the low-dimensional multi-modal manifold \citep{brownverifying}. 
Hence, the following two natural questions remain open: 

\textit{Can we propose a modeling that reflects the multi-manifold multi-modal property of real-world data?}

\textit{Can we escape the curse of dimensionality and enjoy a fast convergence rate based on this modeling?}

In this work, for the first time, we propose and analyze the mixture of low-rank mixture of Gaussian (MoLR-MoG) distribution, which is more realistic than MoLRG since it captures the multi-modal property of real-world distribution and has a nonlinear score function. Based on this modeling, we first induce a MoE-latent nonlinear score function and conduct experiments to show that MoLR-MoG modeling is closer to the real-world data. After that, we simultaneously analyze the estimation and optimization error of diffusion models and explain why diffusion models achieve great performance.

\subsection{Our Contribution}\label{subsec:contribution}
\textbf{MoLR-MoG Modeling and MoE Structure Nonlinear Score.}
We propose the MoLR-MoG modeling for the target data, which captures the multi low-dimensional manifold and multi-modal property of real-world data and naturally introduces the MoE-latent MoG score. Through the real-world experiments, we show that with this score, diffusion models can generate images that is comparable with the deep neural network MoE-latent Unet and only has $10\times$ smaller parameters. On the contrary, the MoE-latent Gaussian score induced by previous MoLRG modeling can only generate blurry images, which indicates MoLR-MoG is a suitable modeling for the real-world data.

\textbf{Take Advantage of MoLR-MoG to Escape the Curse of Dimensionality. }
For the estimation error, we show that by taking advantage of the union of a low-dimensional linear subspace and the latent MoG property, diffusion models escape the curse of dimensionality. More specifically, we achieve the $R^4\sqrt{\Sigma_{k=1}^Kn_k}\sqrt{\Sigma_{k=1}^Kn_kd_k}/\sqrt{n}$ estimation error, where $R$ is the diameter of the target data, $d_k$ is the latent dimension and $n_k$ is the number of the modal in the $k$-the subspace. This result clearly shows the dependence on the number of linear subspaces, modal, and the latent dimensions $R, d_k$.

\textbf{Strongly Convex Property and Convergence Guarantee. }
After directly analyzing the estimation error, we study how to optimize the highly non-convex score-matching objective function.
Facing nonlinear latent MoG scores, we use the gradient descent (GD) algorithm to optimize the objective function. To obtain the convergence guarantee, we take advantage of the closed form of nonlinear MoG score and show that the landscape around the ground truth parameter is strongly convex. Then, with a great initialization area, we prove the convergence guarantee when considering MoLR-MoG.


\section{Related Work}\label{sec:related work}

\textbf{Estimation Error Analysis for Diffusion Models.}  As shown in \cref{sec:intro}, a series of works \citet{MinimaxOkoAS23} study the general target data with a deep NN and achieve the minimax $n^{-s^{\prime}/D}$ result. Then, some works analyze the general target data with a $2$-layer wide NN and achieve $n^{-2/5}$  estimation error with $\exp{(n)}$ NN size \citep{li2023generalization,han2024neural}. For the multi-modal modeling, some works study MoG data and improve the estimation error \citep{shah2023learningddpm,cui2023MOGlimitedsmaple,chen2024learning}. 
Except for the MoG modeling, \citet{cole2024scoresubgaussian} assume data is close to Gaussian and then prove the model escapes the curse of dimensionality. \citet{mei2023deep} analyze Ising models and prove that the term corresponds to $n$ is $1/\sqrt{n}$. For the low-dimensional modeling, some works assume the target data admits a linear subspace \citep{ScoreMatchingdistributionrecovery,yuan2023reward}. \citet{ScoreMatchingdistributionrecovery} assume data admit a linear subspace $x=Az$ with  $z\in \mathbb{R}^d$  and achieve a $n^{-2/d}$.
As the image is a union of low-dimensional manifolds, \citet{wang2024diffusionclustering} models the target data as a union of linear subspaces with Gaussian latent and achieve $1/\sqrt{n}$ estimation error for each subspace.

\textbf{Optimization Analysis for Diffusion Models.} Since the score is highly nonlinear (except for Gaussian), only a few works analyze the optimization process, and most of them focus on the external dimensional space \citep{bruno2023diffusionlogconcave, cui2023MOGinfityn,shah2023learningddpm,chen2024learning,li2023generalization,han2024neural}. Since the score function of MoG has a nonlinear closed-form, a series of works design algorithms for diffusion models to learn the MoG \citep{bruno2023diffusionlogconcave, cui2023MOGinfityn,shah2023learningddpm,chen2024learning}. For the general target data,  \citet{li2023generalization} and \citet{han2024neural} adopt a wide $2$-layer ReLU NN to simplify the problem to a convex optimization. 
However, as discussed above, their NN has $\exp{(n)}$ size. 
For the latent space, only two works provide the optimization guarantee under the Gaussian latent \citep{yangfew,wang2024diffusionclustering}. \citet{yangfew} assume target data adopts a linear subspace with Gaussian latent and provide the closed-form minimizer. \citet{wang2024diffusionclustering} analyze the optimization process of each linear subspace separately, which is also reduced to the optimization for the Gaussian.

\section{Preliminaries}\label{sec:General Diffusion Notation}

First, we introduce the basic knowledge and notation of diffusion models.
Let \(p_0\) be the data distribution. Given \(x_0 \sim p_0\in \mathbb{R}^D\), the forward process is defined by:
\begin{align*}
    \mathrm{d} x_t &= f(t)x_t \,\mathrm{d}t + g(t)\,\mathrm{d}B_t,
\end{align*}
where \(\{B_t\}_{t\in [0,T]}\) is a \(D\)-dimensional Brownian motion, $f(t)$ is the coefficient of the drift term and $g(t)$ is the coefficient of the diffusion term. Let $p_t$ be the density function of the forward process. After determining the forward process, the conditional distribution $p_t(x_t|x_0)$ has a closed-form 
\begin{align*}
    p_t\left(x_t| x_0\right)=\mathcal{N}\left(x_t ; s_t x_0, s_t^2 \sigma_t^2 I_D\right)\,,
\end{align*}
where $s_t=\exp \left(\int_0^t f(\xi) \mathrm{d} \xi\right), \sigma_t=\sqrt{\int_0^t g^2(\xi)/s^2(\xi) \mathrm{d} \xi}$.
To generate samples from $p_0$, diffusion models reverse the given forward process and obtain the following reverse process \citep{song2020sde}:
$$\mathrm{d}y_{t}=\left[f(t)y_{t}-g(t)^2\nabla\log p_{t}(y_{t})\right]\mathrm{d}t+g(t)\mathrm{d}\bar{B}_{t},\quad y_{0}\sim p_{0}$$
where \(\bar B_t\) is a reverse‐time Brownian motion. A conceptual way to approximate the score function is to minimize the score matching (SM) objective function:
\begin{align}\label{eq:SM objective}
    &\min_{s_{\theta}\in \mathrm{NN}}\mathcal{L}_{\mathrm{SM}}=\int_\delta^T \mathbb{E}_{x_t\sim q_t}\left\|\nabla \log p_t\left(x_t\right)-s_{\theta}(x_t, t)\right\|_2^2\mathrm{d} t\,,
\end{align}
where NN is a given function class and $\delta > 0$ is the early stopping parameter to avoid a blow-up score.  Since the ground truth score $\nabla \log p_t$  is unknown, this objective function can not be calculated. To avoid this problem, \citet{vincent2011connection} propose the denoised score matching (DSM) objective function:
\begin{align*}
   \min_{s_{\theta}\in \mathrm{NN}}\mathcal{L}_{\mathrm{DSM}}=\int_\delta^T \mathbb{E}_{x_0\sim q_0}\mathbb{E}_{x_t|x_0}\left\|\nabla \log p_t\left(x_t|x_0\right)-s_{\theta}(x_t, t)\right\|_2^2\mathrm{d} t\,. 
\end{align*}
As shown in \citet{vincent2011connection}, the DSM and SM objective functions differ up to a constant independent of optimized parameters, which indicates these objective functions have the same landscape.

\subsection{Mixture of low-rank mixture of Gaussian (MoLR-MoG) Modeling}\label{subsec:gmm}

This part shows our MoLR-MoG modeling, which reflects the low-dimensional \citep{gong2019intrinsic} and multi-modal property \citep{brownverifying,kamkari2024geometric} of real-world data. More specifically, we 
assume the data distribution lives near a union of \(K\) linear subspaces rather than arbitrary manifolds. Concretely, for the \(k\)-th subspace of dimension \(d_k\) (represented by a matrix  $A_k^*\in \mathbb{R}^{D\times d_k}$ with orthonormal columns or the $k$-th manifold), we place a $n_k$-modal MoG within that subspace:
\vspace{-5pt}
\begin{align*}
    w_k(x)
=\sum_{l=1}^{n_k}\pi_{k,l}\,\mathcal N\bigl(x;A_k^*\mu_{k,l}^*,\,A_k^*\Sigma_{k,l}^*A_k^{*\top}\bigr),
\end{align*}

where covariance $\Sigma_{k,l}^*
=U_{k,l}^*U_{k,l}^{*\top},
l=1,\dots,n_k$ with $U_{k,l}^*\in \mathbb{R}^{d_k\times d_{k,l}}$ ($d_{k,l}\leq d_k$)  and $\mu_{k,l}^*$ is the mean of the $l$-th modal of the $k$-th subspace. As shown in \citep{brownverifying}, the different manifold has different $d_k$ and we do not require that $d_k$ is exactly the same for each manifold. 
Then, the target distribution has the following form 
\begin{align}\label{eq:MoLRMoG target data}
    p_0 = \sum_{k=1}^{K} \frac{1}{K}\sum_{l=1}^{n_k}\pi_{k,l}\,\mathcal N\bigl(x;A_k^*\mu_{k,l}^*,\,A_k^*\Sigma_{k,l}^*A_k^{*\top}\bigr)\,.
\end{align}
From the universal approximation perspective, by placing enough components and choosing parameters \(\{\pi_{k,l},\mu_{k,l}^*,\Sigma_{k,l}^*\}\), a MoG can approximate any smooth density arbitrarily well, which is more general than the Gaussian latent of \citet{yangfew} and \citet{wang2024diffusionclustering}.

\paragraph{Mixture of Experts (MoE)-Nonlinear MoG score.}
Let $\gamma_t= s_t\sigma_t$, \(\Sigma_{k,l,t,A}=s_t^2A_k^*U_{k,l}^*U_{k,l}^{*\top}A_k^{*\top}+\gamma_t^2I\) and  
\(\delta_{k,l,t,A}(x)=x-s_t\mu_{k,l}^*-\tfrac{s_t^2}{s_t^2+\gamma_t^2}A_k^*U_{k,l}^*U_{k,l}^{*\top}A_k^{*\top}(x-s_t\mu_{k,l}^*A_k^*)\). Under the MoLR-MoG modeling, the score function has the following form:
\begin{center}
$\nabla\log p_t(x)
=-\frac1{\gamma_t^2}\,
\frac{\displaystyle\sum_{k=1}^K\frac1K\sum_{l=1}^{n_k}\pi_{k,l}\,\mathcal{N}(x;s_t\mu_{k,l}^*A_k^*,A_k^*\Sigma_{k,l,t,A}^*A_k^{*\top})\,\delta_{k,l,t,A}(x)}
{\displaystyle\sum_{k=1}^K\frac1K\sum_{l=1}^{n_k}\pi_{k,l}\,\mathcal{N}(x;s_t\mu_{k,l}^*A_k^*,A_k^*\Sigma_{k,l,t,A}A_k^{*\top})}\,,$
\end{center}
This score function has a MoE structure, where each expert is the latent nonlinear MoG score. The linear encoder $A_k$ first encodes images to the $k$-th manifold, and diffusion models run the denoising process. After that, the linear decoder $A_k^{\top}$ decodes the denoised latent to the full-dimensional images.
\begin{wrapfigure}{r}{9.5cm}
\vspace{-5pt}
\centering
\includegraphics[width=0.65\textwidth]{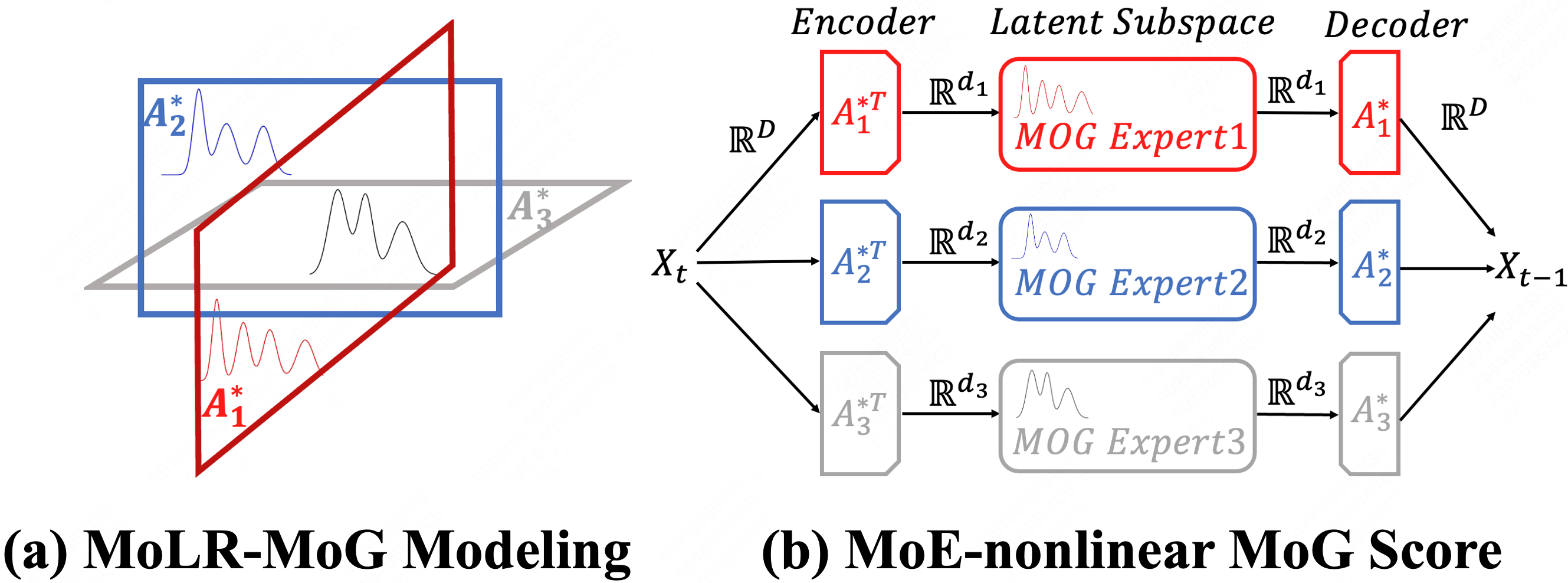}
\vspace{-5pt}
\caption{MoLR-MoG Modeling and Corresponding Nonlinear Score}
\label{fig:MOLR-MoGmodeling }
\vspace{-10pt}
\end{wrapfigure}
Since the estimation error introduced by the linear encoder and decoder has the order $Dd_k^3/\sqrt{n}$ \citep{yangfew} and is not the dominant term, we assume the linear encoder and decoder are perfectly learned and focus on the more difficult latent MoG diffusion part in this work. From the empirical part, this operation is similar to using the pretrained stable diffusion VAE and only training the diffusion models in the latent space. For the $k$-th low-dimensional manifold, the score function is 
\begin{equation}\label{eq:score_asymmetric}
\nabla\log p_{t,k}(x^{\mathrm{LD}})
=-\frac1{\gamma_t^2}\,
\frac{\displaystyle\sum_{l=1}^{n_k}\pi_{k,l}\,\mathcal{N}(x^{\mathrm{LD}};s_t\mu_{k,l}^*,\Sigma_{k,l,t}^*)\,\delta_{k,l,t}(x^{\mathrm{LD}})}
     {\displaystyle\sum_{l=1}^{n_k}\pi_{k,l}\,\mathcal{N}(x;s_t\mu_{k,l}^*,\Sigma_{k,l,t}^*)},
\end{equation}
where $x^{\mathrm{LD}}\in \mathbb{R}^{d_k}$ is a variable in the $k$-th low-dimensional subspace, \(\Sigma_{k,l,t}=s_t^2U_{k,l}^*U_{k,l}^{*\top}+\gamma_t^2I\) and  
\(\delta_{k,l,t}(x^{\mathrm{LD}})=x^{\mathrm{LD}}-s_t\mu_{k,l}^*-\tfrac{s_t^2}{s_t^2+\gamma_t^2}U_{k,l}^*U_{k,l}^{*\top}(x^{\mathrm{LD}}-s_t\mu_{k,l}^*)\). Let
\[
s_k^*(x^{\mathrm{LD}}, t) = \nabla \log p_{t,k}(x^{\mathrm{LD}})\,,s^*(x^{\mathrm{LD}}, t)= (s_1^*(x^{\mathrm{LD}}, t), s_2^*(x^{\mathrm{LD}}, t), \dots,s_K^*(x^{\mathrm{LD}}, t))\,,
\]
where the parameters are $\theta^*=\{\mu_{k,l}^*\, ,U_{k,l}^*\}_{k=1,...,K}$. In this work, we want to learn the parameters of the ground truth score function. Hence, 
we construct a NN function class $s_{\theta} = (s_1(\cdot,\cdot), s_2(\cdot,\cdot),...,s_K(\cdot,\cdot))$ according to the above closed-from of MoE-latent MoG score. Let $\theta$ is the union of $\mu_{k,l}$ and $U_{k,l}$.  Since we mainly focus on the estimation and optimization in the latent subspace, we omit the superscript $\mathrm{LD}$ of the latent subspace when there is no ambiguity. 

We note that this modeling can capture the information of each low-dimensional manifold and the multi-modal property of each latent distribution. In the next section, through the real-world experiments, we show that the MoE-latent MoG score has a better performance compared with the MoE-latent Gaussian score induced by MoLRG modeling and compatible with the results of the MoE-latent Unet. In \cref{sec: analysis} and \ref{sec:optimization analysis}, we prove that by using the property of MoLR-MoG modeling, diffusion models can escape the curse of dimensionality and enjoy a fast convergence rate.

\begin{remark}[Comparison with MoLRG modeling]
\citet{wang2024diffusionclustering} provide the first multi-subspace modeling, which is an important and meaningful step and \citet{li2025understandingrepresentation1} further design noisy MoLRG to study the representation of diffusion models. However, they assume a Gaussian latent with $0$ mean, which can not capture the multi-modal property of real-world data. We also note that the MoLR-MoG modeling can not be viewed as MoLRG with $\sum_{k=1}^{K}n_k$ subspace since this modeling assumes there are $\sum_{k=1}^{K}n_k$ VAE, which is not reasonable in the real-world setting.
\end{remark}

\section{Experiments for MoE-latent MoG Score}\label{sec:experimetns}

In this section, we conduct experiments using neural networks based on different modeling approaches (MoLR-MoG, MoLRG) as well as a general U-Net architecture. The goal is to demonstrate that MoLR-MoG provides a suitable modeling for real-world data, and that the MoE-latent MoG score is sufficient to generate images with clear semantic content. Specifically, we first show that training with MoLR-MoG yields significantly better results than the MoLRG model. Then, we show that the MoE-latent MoG network achieves performance comparable to that of the MoLR-U-Net, while using 10× fewer parameters for MNIST, CIFAR-10, ImageNet 256 (\cref{fig:different modeling}) .  
\begin{figure*}[t]
    \centering
\includegraphics[width=0.99\textwidth]{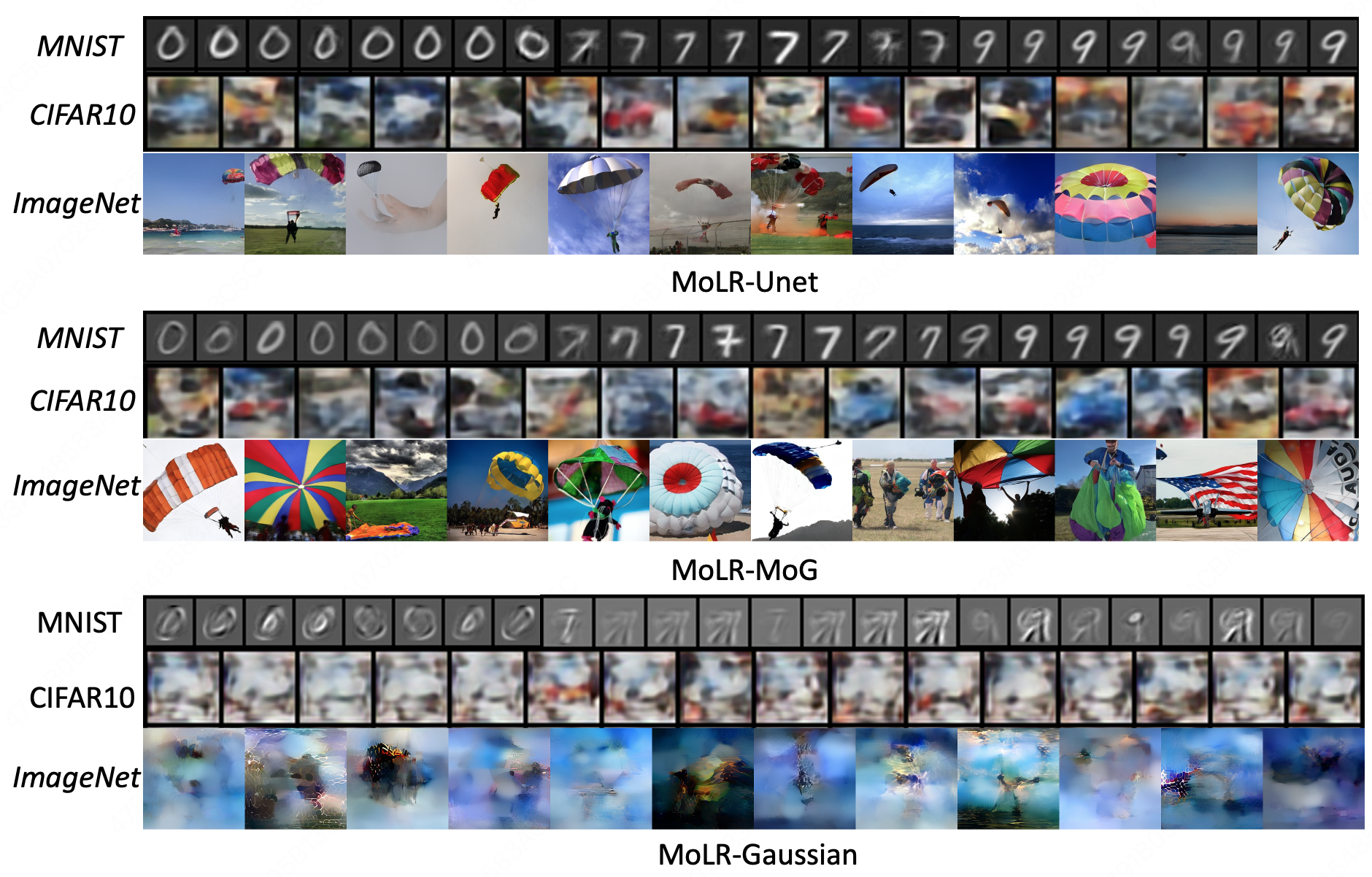}
\vspace{-10pt}
        \caption{Results of Different Modeling on Real-world Data.}
    \label{fig:different modeling}
    \vspace{-10pt}
\end{figure*}

Following \citet{brownverifying}, we train $10$ VAEs for each number in the MNIST, which represents our $K$ low-dimensional manifold. In this part, we adopt nonlinear VAEs to achieve a good performance in real-world datasets. However, we still note that a series of theoretical works adopt linear subspaces, and our MoLR-MoG modeling with linear VAEs makes a step toward explaining the good performance of diffusion models. After obtaining these $10$ VAE, we train diffusion models with different parametrized NNs. We adopt three different parameterizations: latent U-net, latent MoG NN, and latent Gaussian NN. For the latent MoG, we adopt the form of Eq. \ref{eq:score_asymmetric} with $n_k=4, 8, 40$ in MNIST, CIFAR-10, and ImageNet256 for $k\in [K]$. For the latent Gaussian, we adopt the form of the closed-form score \citep{wang2024diffusionclustering}, which leads to a linear NN.
\paragraph{Discussion.} \begin{wrapfigure}{r}{6.0cm}
\vspace{-10pt}
\centering
\includegraphics[width=0.42\textwidth]{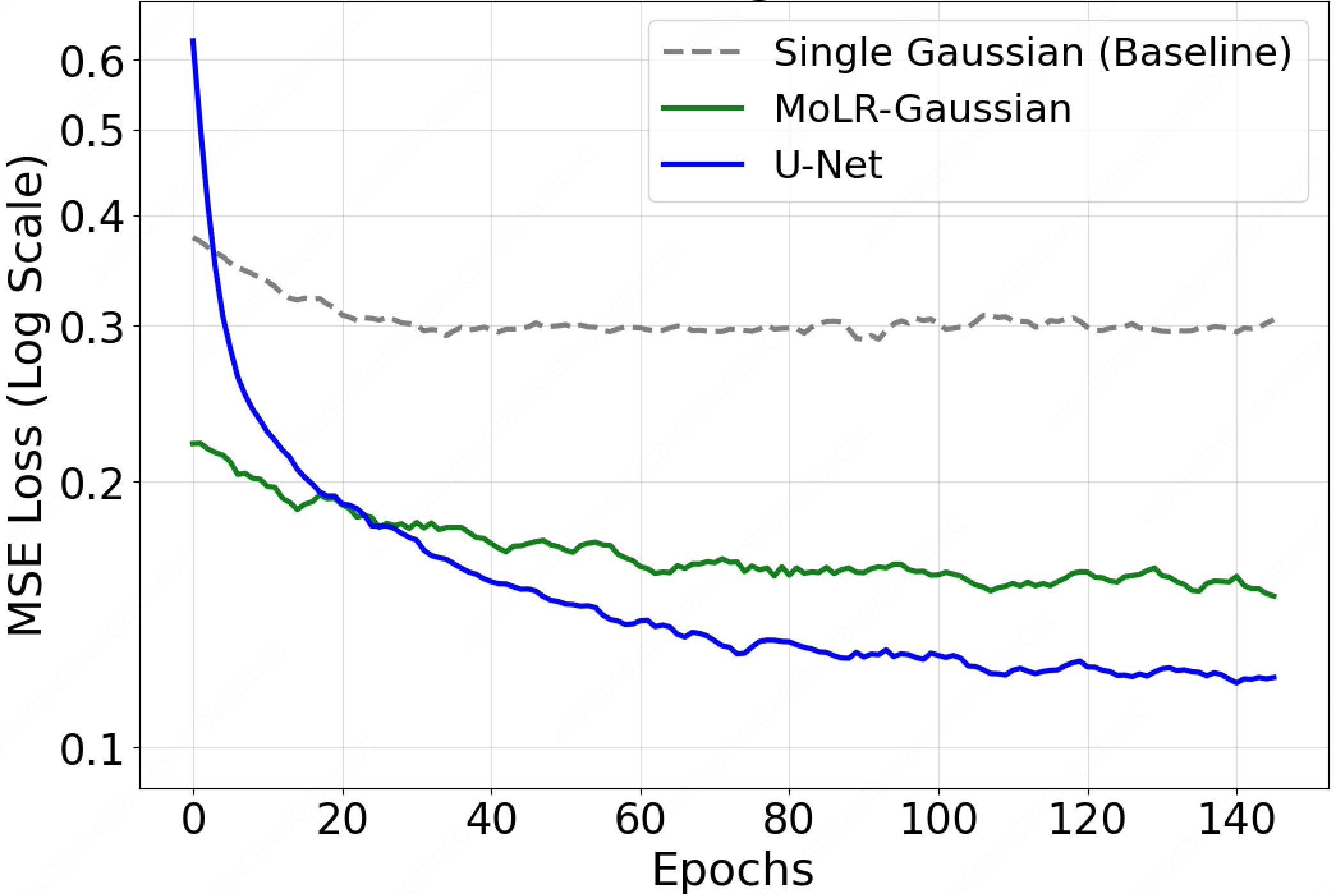}
\vspace{-5pt}
\caption{Loss Curve for CIFAR-10}
\label{fig:loss curve}
\vspace{-5pt}
\end{wrapfigure}
From a qualitative perspective, as shown in \cref{fig:different modeling}, the generation results with MoLRG modeling are difficult to distinguish specific numbers. On the contrary, the MoE-latent MoG can generate clean images comparable with the images generated by MoLR-Unet, which means this modeling captures the multi-modal property of each low-dimensional manifold. The training loss curve (\cref{fig:loss curve}) shows that the loss of MoE-MoG NN is significantly smaller than the MoE-Gaussian and close to MoE-Uet, which indicates MoE-MoG NN efficiently approximates the ground-truth score and supports our theoretical results. From a quantitative perspective, we calculate the CLIP score for the parachute class of ImageNet with text prompts "a photo of parachute". The Clip score for MoLR with Unet, MoG, and Gaussian NN is $0.304$, $0.293$, and $0.254$, which indicates MoLR-MoG achieves almost comparable text-to-image alignment with MoE-Unet.
Furthermore, the MoLR-MoG NN contains many fewer parameters compared to Unet since it uses the prior of latent MoG.

\paragraph{Discussion on Expert-Specific VAE.}
As shown in the score of MoLR-MoG, different from latent diffusion models with a single VAE, there are $K$ VAEs to encode the input to the corresponding manifold. We note that this operation is important for MoLR-MoG with small MoG experts. As shown in \cref{fig:specific VAE}, with a unified VAE, the unified latent is complex, and a MoG expert can not learn a meaningful image with the target class. Hence, with a unified VAE, latent diffusion models require a large latent Unet. However, with an expert-specific VAE (for example, we fine-tune the pretrained VAE with the parachute class dataset), the latent manifold becomes simple, and latent MoG experts are enough to generate clear models, which also supports our theoretical modeling.

\begin{wrapfigure}{r}{6.5cm}
\centering
\includegraphics[width=0.4\textwidth]{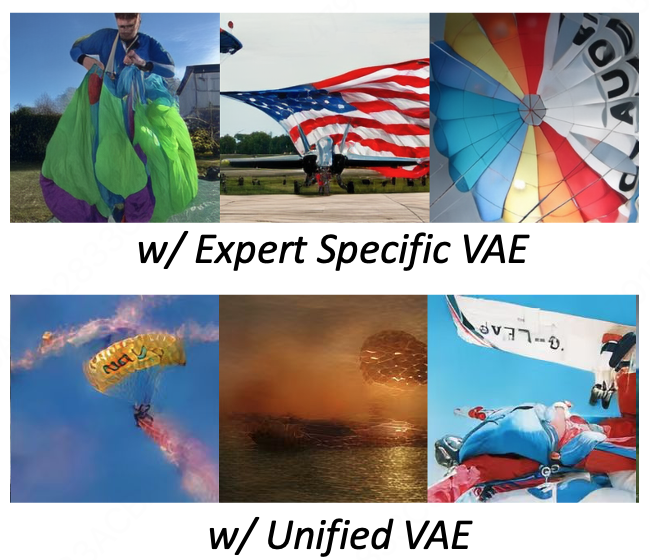}
\vspace{-10pt}
\caption{MoLR-MoG with Different VAE}
\label{fig:specific VAE}
\vspace{-10pt}
\end{wrapfigure}We note that these experiments aim to show that the MoLR-MoG modeling is reasonable instead of achieving the SOTA performance. It is possible to achieve great performance with a small-sized NN using MoLR-MoG modeling in the application. For large-scale datasets without labels, we can use a clustering algorithm to divide the data into different clusters. Then, we can train a VAE encoder, decoder, and latent MoG score for each cluster. For the VAE training, we do not require training the VAE from a sketch. We can LoRA fine-tune a VAE pretrained on large-scale datasets (for example, DC-AE \citep{chen2024deepdcae} for our ImageNet experiments) for each expert, which shares a pretrained VAE backbone and has a smaller model size. When generating images, we activate different VAE LoRA according to the clustering weight, which matches the spirit of MoE. We leave it as an interesting future work.
\vspace{-5pt}

\section{Escape the Curse of Dimensionality With  MoLR-MoG Modeling}\label{sec: analysis}
\vspace{-5pt}
This section shows that diffusion models can escape the curse of dimensionality by using MoLR-MoG properties. Before introducing our results, we first introduce the assumption on the target data.
\begin{assumption}\label{x in D} 
For $x\sim p_0$, we have that  $\|x\|_2 \leq R
$.
\end{assumption}
The bounded‐support
assumption is widely used in theoretical works \citep{chen2022sampling,yangleveraging,de2022convergence,yang2025polynomial,yang2025elucidating} and is naturally satisfied by image datasets. For a latent MoG, each component concentrates almost all mass within a few standard deviations of its mean, so by taking the most component means and variances, one can choose \(R\) large enough that \(\|x\|_2\le R\) holds with high probability.

Since Moe-latent MoG score has a closed-form, we only need to learn the parameters $\mu_{k,l}$ and $U_{k,l}$ at a fixed time $t$. As a result, we consider the estimation error at a fixed time $t$. Let $\ell(\theta;x,t)
=\bigl\|s_\theta(x,t)-s^*(x,t)\bigr\|_2^2$ be the per-sample squared error at time $t$. 
In this part, we study the estimation error with a limited training dataset $\{x_i\}_{i=1}^n$:
\begin{align*}
    \left|\mathcal{L}(\theta)-\widehat{\mathcal{L}}_n(\theta)\right|\,, \text{with } \widehat{\mathcal L}_n(\theta)=\frac{1}{n}\Sigma_{i=1}^n\ell(\theta;x_i,t)\,. 
\end{align*}
To obtain the estimation error, we first provide the Lipschitz constant for $s_{\theta}$ and the loss function by fully using the property of MoLR-MoG modeling and MoE-latent MoG score.
\begin{restatable}{lemma}{lemlipschitz}[Lipschitz Continuity]\label{lemma:lipschitz}
Let $L_{\mu_l}$ and $L_{U_k}$ be the Lipschitz constant w.r.t. $s_{\theta}$. With MoLR-MoG modeling and \cref{x in D}, there is a constant \[
L \leq \sqrt{\Sigma_{i=1}^Kn_k(L_{\mu_l}^2 + L_{U_k}^2)} = O\left( (\Sigma_{k=1}^Kn_k)^\frac{1}{2}C_w\right)
\]such that for any \(\theta,\theta'\), $\bigl\|s_\theta(x,t)-s_{\theta'}(x,t)\bigr\|_2\;\le\;L\,\|\theta-\theta'\|_2$,
where $C_w=\frac{(R+s_tB_\mu)^3s_t^2}{\gamma_t^4} $,$ B_\mu=\underset{k,l}{\max}\|\mu_{k,l}\|_2$.
For $s_{\theta}$ and $s^*$, we have that 
$2\|s_\theta(x,t)-s^*(x,t)\|_2\leq 2(R +s_tB_\mu)/\gamma_t^2:=L_l$.
\end{restatable}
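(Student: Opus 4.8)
The plan is to exploit the block structure of the MoE-latent MoG score and reduce the Lipschitz estimate to a bound on the first-order sensitivities of a \emph{single} latent MoG component with respect to one mean $\mu_{k,l}$ or one factor $U_{k,l}$. First I would use that $s_\theta=(s_1,\dots,s_K)$ is a concatenation whose $k$-th block $s_k$ (Eq.~\ref{eq:score_asymmetric}) depends only on $\theta_k=\{\mu_{k,l},U_{k,l}\}_{l=1}^{n_k}$, so that $\|s_\theta(x,t)-s_{\theta'}(x,t)\|_2^2=\sum_{k=1}^K\|s_k(\theta_k)-s_k(\theta'_k)\|_2^2$ and it suffices to treat each block. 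Within block $k$ I would swap one coordinate of $\theta_k$ at a time (telescoping triangle inequality along a straight segment in parameter space, i.e.\ the fundamental theorem of calculus), which reduces the task to the uniform operator-norm bounds $L_{\mu_l}=\sup_{x,\theta}\|\partial_{\mu_{k,l}} s_k(x,t)\|$ and $L_{U_k}=\sup_{x,\theta}\|\partial_{U_{k,l}} s_k(x,t)\|$ that appear in the statement.

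Next I would bound these two quantities using the closed form $s_k(x,t)=-\gamma_t^{-2}\sum_{l=1}^{n_k}w_{k,l}(x)\,\delta_{k,l,t}(x)$ with softmax weights $w_{k,l}\in[0,1]$, $\sum_l w_{k,l}=1$. The elementary ingredients are: (i) $\Sigma_{k,l,t}\succeq\gamma_t^2 I$, hence $\|\Sigma_{k,l,t}^{-1}\|\le\gamma_t^{-2}$; (ii) $\|x-s_t\mu_{k,l}\|_2\le R+s_tB_\mu$ by \cref{x in D} and the definition of $B_\mu$; (iii) since the columns of $U_{k,l}$ are orthonormal in the function class, $U_{k,l}U_{k,l}^\top$ is an orthogonal projection and $I-\tfrac{s_t^2}{s_t^2+\gamma_t^2}U_{k,l}U_{k,l}^\top$ is a contraction, so $\|\delta_{k,l,t}(x)\|_2\le R+s_tB_\mu$ while $\delta_{k,l,t}$ is $O(s_t)$-Lipschitz in $\mu_{k,l}$ and $O(R+s_tB_\mu)$-Lipschitz in $U_{k,l}$; (iv) the derivatives of the Gaussian density $\mathcal N(x;s_t\mu_{k,l},\Sigma_{k,l,t})$ in $\mu_{k,l}$ and in $U_{k,l}$ are controlled by (i)--(iii) together with $s_t$ — for $U_{k,l}$ this uses the derivative-of-inverse identity for $\Sigma_{k,l,t}^{-1}$ and the bound $\partial_{U_{k,l}}\Sigma_{k,l,t}=O(s_t^2)$. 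Feeding these into the product rule for $w_{k,l}\delta_{k,l,t}$, the dominant term comes from differentiating the softmax weights (a factor $\|\partial\log\mathcal N\|$), which multiplies $\|\delta_{k,l,t}\|_2\le R+s_tB_\mu$ and the outer $\gamma_t^{-2}$; a careful accounting of the powers of $\gamma_t^{-1}$, $s_t$ and $R+s_tB_\mu$ then gives $L_{\mu_l},L_{U_k}=O\!\left((R+s_tB_\mu)^3 s_t^2/\gamma_t^4\right)=O(C_w)$.

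With the per-parameter bounds in hand I would reassemble. Inside block $k$, Cauchy--Schwarz over the $n_k$ pairs $(\mu_{k,l},U_{k,l})$ turns the telescoping sum into $\|s_k(\theta_k)-s_k(\theta'_k)\|_2\le\left(\sum_{l=1}^{n_k}(L_{\mu_l}^2+L_{U_k}^2)\right)^{1/2}\|\theta_k-\theta'_k\|_2\le\left(n_k(L_{\mu_l}^2+L_{U_k}^2)\right)^{1/2}\|\theta_k-\theta'_k\|_2$; summing the squares over $k$ and bounding each coefficient by $\sum_{k=1}^K n_k(L_{\mu_l}^2+L_{U_k}^2)$ yields $\|s_\theta-s_{\theta'}\|_2^2\le\left(\sum_{k=1}^K n_k(L_{\mu_l}^2+L_{U_k}^2)\right)\|\theta-\theta'\|_2^2$, i.e.\ $L\le\left(\sum_{k=1}^K n_k(L_{\mu_l}^2+L_{U_k}^2)\right)^{1/2}=O\!\left((\textstyle\sum_k n_k)^{1/2}C_w\right)$. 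The final claim is immediate from ingredients (ii)--(iii): since $\sum_l w_{k,l}=1$ and $\|\delta_{k,l,t}(x)\|_2\le R+s_tB_\mu$, both $\|s_\theta(x,t)\|_2$ and $\|s^*(x,t)\|_2$ are bounded by $(R+s_tB_\mu)/\gamma_t^2$, and the triangle inequality gives $2\|s_\theta(x,t)-s^*(x,t)\|_2\le 2(R+s_tB_\mu)/\gamma_t^2=:L_l$.

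The main obstacle is step (iv): $U_{k,l}$ enters both inside the covariance $\Sigma_{k,l,t}$ — and therefore inside the Gaussian normalizing constant and inside $\Sigma_{k,l,t}^{-1}$, which forces the derivative-of-inverse identity and the control of nested products of the form $\Sigma_{k,l,t}^{-1}(\partial_{U}\Sigma_{k,l,t})\Sigma_{k,l,t}^{-1}$ — and inside the mean correction $\delta_{k,l,t}$, so the product rule produces several terms. The crux is to verify that every one of them is absorbed by the two a priori bounds $\|\Sigma_{k,l,t}^{-1}\|\le\gamma_t^{-2}$ and $\|x-s_t\mu_{k,l}\|_2\le R+s_tB_\mu$, together with the orthonormality of the columns of $U_{k,l}$ that pins $\|\delta_{k,l,t}\|$ and $\|U_{k,l}U_{k,l}^\top\|$ at $O(1)$, so that no stray power of $1/\gamma_t$ or of the radius survives beyond those appearing in $C_w$.
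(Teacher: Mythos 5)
Your plan follows essentially the same route as the paper's proof: reduce to per-block, per-parameter gradient bounds on the closed-form score, control $\partial_{\mu_{k,l}}s_k$ and $\partial_{U_{k,l}}s_k$ via the quotient/product rule using $\|\Sigma_{k,l,t}^{-1}\|_2\le\gamma_t^{-2}$, $\|x-s_t\mu_{k,l}\|_2\le R+s_tB_\mu$, and the contraction property of $I-\tfrac{s_t^2}{s_t^2+\gamma_t^2}U_{k,l}U_{k,l}^\top$ (with the dominant $(R+s_tB_\mu)^3s_t^2/\gamma_t^4$ term coming from the derivative of the Gaussian weight in $U_{k,l}$), then aggregate in $\ell_2$ over the $n_k$ components and $K$ blocks, and obtain $L_l$ by bounding each score by $(R+s_tB_\mu)/\gamma_t^2$ and applying the triangle inequality. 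This matches the paper's argument step for step, so no further comparison is needed.
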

\vspace{-3pt}
Then, we obtain the Lipschitz constant $L^\prime = L_lL$ for the whole loss function. With this Lipschitz property, the next step is to argue that fitting the network on \(n\) samples generalizes to the true population loss. We do so by controlling the Rademacher complexity of the loss class and then using a Bernstein concentration argument to obtain the following theorem.

\begin{restatable}{theorem}{highprobabilityEstimation}\label{thm:estimation_error}
Denote by \(\widehat{\mathcal L}_n(\theta)\) the empirical loss on \(n\) i.i.d.\ samples and by \(\mathcal L(\theta)\) its population counterpart.  Then there exist constants \(C_1,C_2\) such that with probability at least \(1-\delta\), for all \(\theta\in\Theta\),
\[
\bigl|\mathcal L(\theta)-\widehat{\mathcal L}_n(\theta)\bigr|
\;\leq\;O\Bigg(
C_1\frac{(R+s_tB_\mu)^4s_t^2\sqrt{\Sigma_{k=1}^Kn_k}}{\gamma_t^6}\,\sqrt\frac{\Sigma_{k=1}^Kn_kd_k}{n}
\;+\;
C_2\,\sqrt{\frac{\log(1/\delta)}{n}}\Bigg).
\]
where $C_1=\underset{\theta\in \Theta}{\max}\|\theta_i-\theta_j\|_2 ,\;
C_2=\sigma\log 2,\;\sigma^2=\underset{\theta\in\Theta}{\sup}\,\text{Var}[\ell(\theta;X,t)]$.
\end{restatable}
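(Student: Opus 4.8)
The plan is a standard uniform-convergence argument via Rademacher complexity combined with a Bernstein-type concentration inequality to sharpen the $\delta$-dependence. First I would set up the loss class $\mathcal{G} = \{x \mapsto \ell(\theta;x,t) : \theta \in \Theta\}$ and recall that by \cref{lemma:lipschitz} each $\ell(\theta;\cdot,t)$ is bounded by $L_l^2 = 4(R+s_tB_\mu)^2/\gamma_t^4$ (since $\|s_\theta - s^*\|_2 \le L_l/2$) and, as a function of $\theta$, is $L'=L_l L$-Lipschitz with $L = O((\Sigma_k n_k)^{1/2} C_w)$. The symmetrization lemma gives $\sup_{\theta}|\mathcal{L}(\theta) - \widehat{\mathcal{L}}_n(\theta)| \le 2\,\mathfrak{R}_n(\mathcal{G}) + (\text{concentration term})$, so the two tasks are (i) bound $\mathfrak{R}_n(\mathcal{G})$ and (ii) get the high-probability tail.

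For step (i), I would pass the Rademacher average through the Lipschitz map $\theta \mapsto \ell(\theta;x,t)$ using the vector-contraction inequality (Maurer), reducing $\mathfrak{R}_n(\mathcal{G})$ to $O(L' \cdot \mathfrak{R}_n(\Theta)/\ldots)$ up to the parameter-class geometry; then bound the Rademacher complexity of the parameter ball $\Theta$ by a Dudley entropy integral. The effective dimension here is $\sum_{k=1}^K n_k d_k$ — each of the $\sum_k n_k$ mixture components contributes a mean vector $\mu_{k,l} \in \mathbb{R}^{d_k}$ (and the $U_{k,l}$ factors, whose dimension $d_{k,l} \le d_k$ is absorbed into the same order), so $\mathfrak{R}_n(\Theta) = O(C_1 \sqrt{\sum_k n_k d_k / n})$ where $C_1 = \max_{\theta}\|\theta_i - \theta_j\|_2$ captures the diameter. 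Multiplying by the Lipschitz constant $L' = L_l L$ and substituting $L_l = 2(R+s_tB_\mu)/\gamma_t^2$, $C_w = (R+s_tB_\mu)^3 s_t^2/\gamma_t^4$ produces exactly the claimed leading factor $(R+s_tB_\mu)^4 s_t^2 \sqrt{\Sigma_k n_k}/\gamma_t^6$ times $\sqrt{\Sigma_k n_k d_k / n}$.

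For step (ii), since the loss is bounded, a direct Hoeffding bound would give a $\sqrt{(\text{range}^2)\log(1/\delta)/n}$ term; to get the sharper variance-dependent $C_2\sqrt{\log(1/\delta)/n}$ with $C_2 = \sigma\log 2$, $\sigma^2 = \sup_\theta \Var[\ell(\theta;X,t)]$, I would instead apply Bernstein's inequality (or a bounded-differences/McDiarmid argument on the supremum deviation) to the centered empirical process, which trades the range for the variance proxy in the dominant $1/\sqrt{n}$ regime. Combining the symmetrization bound, the Dudley estimate for the Rademacher term, and the Bernstein tail via a union/absorption step yields the stated two-term bound.

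The main obstacle I anticipate is step (i) — specifically, controlling the covering number of the function class $\{s_\theta\}$ correctly so that the dimension count comes out as $\sum_k n_k d_k$ rather than something larger (e.g., the ambient $D$ or a product over components). This requires carefully exploiting that the MoE-latent MoG score in \eqref{eq:score_asymmetric} depends on the parameters only block-coordinate-wise across the $\sum_k n_k$ components, with each block living in a $d_k$-dimensional latent space, and that the softmax-type mixing weights are themselves Lipschitz in $\theta$ — this Lipschitzness is precisely what \cref{lemma:lipschitz} supplies, so the entropy integral reduces to that of a bounded ball of the right dimension. The rest of the argument (symmetrization, contraction, Bernstein) is routine.
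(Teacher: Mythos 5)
Your proposal is correct and follows essentially the same route as the paper: Lipschitz control of $s_\theta$ and of the loss (the paper's Lemma~\ref{lemma:lipschitz}), a Dudley entropy-integral bound on the Rademacher complexity with effective parameter dimension $p = 2\sum_{k=1}^K n_k d_k$ (the paper transfers a covering of the parameter ball to the function class via the Lipschitz map rather than invoking Maurer's vector contraction, but this is an interchangeable technical device yielding the same $O(C_1 L_l L\sqrt{p/n})$ bound), and finally Bernstein's inequality for the high-probability tail around the expected supremum. The constant bookkeeping you describe matches the paper's.
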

This result removes the exponential dependence on $D$ with the number of latent subspace $K$, the latent dimension $d_k$, and the number of modalities $n_k$ at each linear subspace, which reflects the key feature of the real-world
data and escape the curse of dimensionality. The remaining question is why diffusion models enjoy a fast and stable optimization process. In the next part, we show that with MoLR-MoG modeling, the objective function is locally strongly convex and answer this question.
\vspace{-5pt}
\section{Strongly Convex Property and Convergence Guarantee}\label{sec:optimization analysis}
\vspace{-5pt}
In this part, by using the property of MoLR-MoG modeling, we derive explicit expressions for the \emph{Jacobian} and \emph{Hessian} of the objective function for 2-modal MoG latent and general MoG latent. Then, we establish conditions under which the resulting score‑matching loss is locally strongly convex for each setting. Finally, we provide the convergence guarantee for the optimization.
\subsection{2-Modal Latent MoG Hessian Analysis and Optimization}\label{subsec:Hessian Analysis}
In this section, we show that, under sufficient cluster separation, the Hessian matrix near $\theta^*$ simplifies to a block‐diagonal form, yielding local strong convexity, which derives a linear convergence rate. As discussed in \cref{subsec:gmm}, following the real-world setting, we consider the optimization dynamic in the $k$-th latent subspace. While our modeling contains $K$ encoders and decoders, facing an input image $x$, we can first determine which cluster image $x$ belongs to, and then use the corresponding $A_k$ to encode it into the corresponding latent space. Then, we only use data belonging to $k$ clustering to train the $k$-th latent MoG score. This operation matches our experimental settings, and \citet{wang2024diffusionclustering} also adopts this operation. When considering the optimization problem, to simplify the calculation of the Hessian matrix, we set $d_{k,l}=1$.  

Similar to \citet{shah2023learningddpm}, we start from a latent 2-modal MoG with the same covariance matrix $\Sigma_k^*$ and $\mu_{k,1}^*=\mu_k^*, \mu_{k,2}^*=-\mu_k^*$, which leads to the following score: 
\begin{equation}\label{eq:score_symmetric}
\nabla\log p_{t,k}(x)
=-\frac1{\gamma_t^2}\,
\frac{\displaystyle\tfrac12\mathcal{N}(x;s_t\mu_k^*,\Sigma_k^*)\,\delta^\prime_k(x)
 +\tfrac12\mathcal{N}(x;-s_t\mu_k^*,\Sigma_k^*)\,\epsilon_k(x)}
     {\displaystyle\tfrac12\mathcal{N}(x;s_t\mu_k^*,\Sigma_k^*)+\tfrac12\mathcal(x;-s_t\mu_k^*,\Sigma_k^*)},
\end{equation}
where   
\(\epsilon_k(x)=x- s_t\mu_k^*-\tfrac{s_t^2}{s_t^2+\gamma_t^2}U_k^*U_k^{*\top}(x- s_t\mu_k^*)\),\,and  \(\delta^\prime_k(x)=x+ s_t\mu_k^*-\tfrac{s_t^2}{s_t^2+\gamma_t^2}U_k^*U_k^{*\top}(x+ s_t\mu_k^*)\).
Before providing the convergence guarantee, we make an assumption on the $2$-MoG latent distribution.

\begin{restatable}{assumption}{Separationwithin}[Separation within a cluster]\label{assump:separation within a cluster}
Within each cluster \(k\), the two symmetric peaks are well separated in the sense that $\|s_t\mu_k^* - (-s_t\mu_k^*)\|\ge\Delta_{\rm intra}$,
for some \(\Delta_{\rm intra}\gg\gamma_t\).  Consequently, if a sample \(x\) is drawn from the “\(+\)” peak then its responsibility under the “\(-\)” peak satisfies
\[
r_k^-(x)\;=\;\frac{\tfrac12\,\mathcal{N}(x;-s_t\mu_k^*,\Sigma_k^*)}{\tfrac12\,\mathcal{N}(x;s_t\mu_k^*,\Sigma_k^*)+\tfrac12\,\mathcal{N}(x;-s_t\mu_k^*,\Sigma_k^*)}
\;=\;O\!\bigl(e^{-\Delta_{\rm intra}^2/(2\gamma_t^2)}\bigr)\;\ll\;1,
\]
and symmetrically \(r_k^+(x)\ll1\) when \(x\) is drawn from the “\(-\)” peak.  
\end{restatable}
The above assumption means that the separation of the two modals is sufficient. For each symmetric sub‑peak, if the distance between them is relatively small, we can view them as having a mean of 0. Since they are the same distribution ($\mu=0$ and $\Sigma=U_kU_k^\top+\gamma_t^2I$), they are the same regardless of how they mix, which indicates that we can assume $r_k^+\approx 1$ or $r_k^-\approx 1$. Moreover, in practice, if raw data do not exhibit such clear gaps, one can always apply a simple linear embedding to magnify inter‑mean distances relative to noise, thereby enforcing the same hard‑assignment regime.

Since the ground truth score function has a closed-form under the MoLR-MoG modeling,  we focus on the score matching objective function  $\mathcal{L}_{\mathrm{SM}}(\theta)$  instead of $\mathcal{L}_{\mathrm{DSM}}(\theta)$ and abbreviate $\mathcal{L}_{\mathrm{SM}}(\theta)$ as $\mathcal{L}(\theta)$. We note that  $\mathcal{L}_{\mathrm{SM}}(\theta)$ and  $\mathcal{L}_{\mathrm{DSM}}(\theta)$ are equivalent up to a constant independent of $\theta$, which indicates the optimization landscape is the same. Furthermore, when considering the convergence guarantee under a $2$-layer wide ReLU NN, \citet{li2023generalization} also adopt score matching objective $\mathcal{L}_{\mathrm{SM}}$ instead of $\mathcal{L}_{\mathrm{DSM}}$. Though calculating the bound of Jacobian $J_k^\mu(x)=\partial_{\mu_k} s_\theta, J_k^U(x)$ and the Hessian matrix w.r.t. $\mathcal{L}$, we provide the local strongly convexity parameters for the objective function.
\begin{restatable}{lemma}{LocalStrongConvexity}[Local Strong Convexity]\label{lemma:strong_convexity}
Combining Lemma \ref{lemma:hessian_simplify} with continuity of $\nabla^2 \mathcal{L}$, there exist $\alpha>0$ and neighborhood $U$ of $\theta^*$ such that
$
\nabla^2\mathcal{L}(\theta)\succeq\alpha I,
\forall\theta\in \Theta
$.If $\forall x\in \mathbb{R}^{d_k}$,$r_k^+(x) = 1$ or $r_k^-(x) = 1$ are strictly satisfied, \[
\alpha = \min\left\{\frac{s_t^2}{(s_t^2+\gamma_t^2)^2},\frac{4(U_k^\top \mu_k))^2 +\|U_k\|_2^2\|\mu_k\|_2^2-\|U_k\|_2\|\mu_k\|_2\sqrt{8(U_k^\top \mu_k))^2 +\|U_k\|_2^2\|\mu_k\|_2^2}}{2}\right\}.
\]
\end{restatable}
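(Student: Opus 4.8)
\textbf{Proof proposal for Lemma~\ref{lemma:strong_convexity}.}

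The plan is to establish the strong convexity constant by a two-stage argument: first pin down the Hessian exactly at $\theta^*$ under the hard-assignment regime of \cref{assump:separation within a cluster}, then extend to a neighborhood by continuity. Under the separation assumption, when $x$ is drawn from the ``$+$'' peak we have $r_k^-(x) = O(e^{-\Delta_{\rm intra}^2/(2\gamma_t^2)}) \ll 1$, so in the idealized limit $r_k^+(x) \equiv 1$ (resp. $r_k^-(x) \equiv 1$ on the other peak) the MoG score of \eqref{eq:score_symmetric} collapses pointwise to a single low-rank Gaussian score: on the ``$+$'' branch, $s_\theta(x,t) = -\tfrac1{\gamma_t^2}\epsilon_k(x)$ with $\epsilon_k(x) = x - s_t\mu_k - \tfrac{s_t^2}{s_t^2+\gamma_t^2}U_kU_k^\top(x - s_t\mu_k)$, which is \emph{affine} in $x$ and smooth in $(\mu_k, U_k)$. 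This is exactly where I would invoke \cref{lemma:hessian_simplify} (the block-diagonal simplification referenced in the statement): because the two responsibilities are essentially indicator-like, the cross terms between the ``$+$'' peak and the ``$-$'' peak in $\nabla^2\mathcal{L}$ vanish up to exponentially small corrections, so the Hessian decomposes into a $\mu$-block and a $U$-block that can be analyzed separately.

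Next I would compute each block. For the $\mu_k$ block: along the direction where only $\mu_k$ varies, $\partial_{\mu_k} s_\theta = -\tfrac1{\gamma_t^2}\cdot(-s_t)(I - \tfrac{s_t^2}{s_t^2+\gamma_t^2}U_kU_k^\top) = \tfrac{s_t}{\gamma_t^2}(I - \tfrac{s_t^2}{s_t^2+\gamma_t^2}U_kU_k^\top)$, a \emph{constant} Jacobian $J_k^\mu$. Since $\mathcal{L}(\theta) = \int \mathbb{E}\|s_\theta - s^*\|_2^2$ and $s_\theta$ is affine in $\mu_k$, the $\mu_k$-block of the Hessian is simply $2\,\mathbb{E}[(J_k^\mu)^\top J_k^\mu]$ (the second-derivative-of-the-residual term drops out because the map is affine). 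The smallest eigenvalue of $(I - \tfrac{s_t^2}{s_t^2+\gamma_t^2}U_kU_k^\top)^2$ on the range of $U_k$ is $(1 - \tfrac{s_t^2}{s_t^2+\gamma_t^2})^2 = (\tfrac{\gamma_t^2}{s_t^2+\gamma_t^2})^2$, and multiplying by the prefactor $\tfrac{s_t^2}{\gamma_t^4}$ yields the first candidate $\tfrac{s_t^2}{(s_t^2+\gamma_t^2)^2}$. For the $U_k$ block: here $s_\theta$ is \emph{quadratic} in $U_k$ (through $U_kU_k^\top$), so both $\mathbb{E}[(J_k^U)^\top J_k^U]$ and the curvature term $\mathbb{E}[(s_\theta - s^*)^\top \partial^2_{U_k} s_\theta]$ contribute; at $\theta = \theta^*$ the residual $s_\theta - s^*$ vanishes, so only the Gram term $\mathbb{E}[(J_k^U)^\top J_k^U]$ survives, and expanding $J_k^U = \partial_{U_k}\bigl(\tfrac{s_t^3}{\gamma_t^2(s_t^2+\gamma_t^2)}U_kU_k^\top(x - s_t\mu_k)\bigr)$ and taking the expectation over $x$ from the Gaussian peak produces a matrix whose minimal eigenvalue — after diagonalizing the relevant $2\times 2$ quadratic form in the variables $(U_k^\top\mu_k)$, $\|U_k\|_2\|\mu_k\|_2$ — gives precisely the second argument of the $\min$, namely $\tfrac{1}{2}\bigl(4(U_k^\top\mu_k)^2 + \|U_k\|_2^2\|\mu_k\|_2^2 - \|U_k\|_2\|\mu_k\|_2\sqrt{8(U_k^\top\mu_k)^2 + \|U_k\|_2^2\|\mu_k\|_2^2}\bigr)$ (the negative square-root root of the quadratic). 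Taking $\alpha$ to be the minimum over the two blocks closes the exact-$\theta^*$ case.

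Finally, to upgrade from ``$\nabla^2\mathcal{L}(\theta^*) \succeq \alpha I$'' to ``$\nabla^2\mathcal{L}(\theta) \succeq \alpha' I$ on a neighborhood,'' I would invoke continuity of $\theta \mapsto \nabla^2\mathcal{L}(\theta)$: the Hessian entries are built from Gaussian densities, polynomials, and the responsibilities $r_k^\pm$, all of which are $C^\infty$ in $\theta$ on the (bounded) parameter set $\Theta$, so $\lambda_{\min}(\nabla^2\mathcal{L}(\cdot))$ is continuous; since it is $\ge \alpha > 0$ at $\theta^*$, there is a neighborhood $U$ on which it stays $\ge \alpha/2$ (the statement's phrasing absorbs the constant). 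I expect the main obstacle to be the $U_k$-block eigenvalue computation: unlike the $\mu_k$ block, the map is genuinely nonlinear, the expectation over the Gaussian peak introduces second-moment terms $\mathbb{E}[xx^\top]$ that must be carefully bounded against the rank-one structure $U_kU_k^\top$, and the resulting quadratic form is not obviously positive — showing its smaller root is strictly positive (equivalently, verifying $8(U_k^\top\mu_k)^2 + \|U_k\|^2\|\mu_k\|^2 < (4(U_k^\top\mu_k)^2 + \|U_k\|^2\|\mu_k\|^2)^2/(\|U_k\|^2\|\mu_k\|^2)$ in the relevant range) is the delicate algebraic step, and it is also where the hard-assignment idealization $r_k^\pm \equiv 1$ is genuinely used to avoid cross-peak contamination of the curvature.
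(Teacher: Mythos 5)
Your overall route is the same as the paper's: linearize $s_\theta$ around $\theta^*$ so that the Hessian reduces to the Gauss--Newton matrix $H=\mathbb{E}\bigl[(\nabla_\theta s_{\theta^*})(\nabla_\theta s_{\theta^*})^\top\bigr]$ (the residual-times-second-derivative term drops because $s_{\theta^*}=s^*$), compute $\lambda_{\min}$ of the $\mu_k$ and $U_k$ diagonal blocks under the hard-assignment idealization, take the minimum, and extend to a neighborhood by continuity of $\nabla^2\mathcal{L}$. Your two block computations land on exactly the paper's quantities: $\tfrac{s_t^2}{\gamma_t^4}\cdot\bigl(\tfrac{\gamma_t^2}{s_t^2+\gamma_t^2}\bigr)^2$ for the $\mu$-block, and for the $U$-block the smaller root of the quadratic in $(U_k^\top\mu_k,\ \|U_k\|_2\|\mu_k\|_2)$ that the paper isolates via its auxiliary lemma on matrices of the form $M=a^\top b\,I+ba^\top$ (Lemma~\ref{lemmaMMtop}).

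The one genuine gap is your treatment of the cross block $H_{\mu_k U_k}$. You claim the Hessian ``decomposes into a $\mu$-block and a $U$-block'' because cross terms \emph{between the two peaks} are exponentially small, but that is not the obstruction: even under perfect hard assignment, a single peak contributes $\mathbb{E}\bigl[J_k^U (J_k^\mu)^\top\bigr]$ with both Jacobians nonzero, so the parameter-level off-diagonal block does not vanish for separation reasons, and without controlling it the bound $\lambda_{\min}(H)\ge\min\{\lambda_{\min}(H_{\mu_k\mu_k}),\lambda_{\min}(H_{U_kU_k})\}$ does not follow from the diagonal eigenvalues alone. The paper closes this in two steps: a Schur-complement bound $\lambda_{\min}(H)\ge(1-r^2)\,\lambda_{\min}(H_{\mu_k\mu_k})$ with $r<1$ the normalized correlation between the two Jacobian directions, and, under the strict condition $r_k^\pm(x)\in\{0,1\}$, an exact computation showing $H_{\mu_k U_k}=0$ because conditional on the active peak $J_k^U$ is linear in $x-s_t\mu_k$, which has zero mean under $\mathcal{N}(s_t\mu_k,\Sigma_k)$ (equivalently, both Jacobians are odd under $x\mapsto-x$). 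You need one of these two arguments to justify taking $\alpha$ as the plain minimum of the two block eigenvalues; the rest of your proposal matches the paper.
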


\begin{restatable}{theorem}{LocalLinearConvergence}[Local Linear Convergence]\label{thm:local_conv}
Under Assumptions \ref{x in D} and \ref{assump:separation within a cluster}, if we take $\eta_m = \eta = 2/(\eta + L^\prime)$, and  $\kappa = L^\prime/\alpha$, then
there exists a neighborhood $U$ of $\theta^*$ such that
\begin{align*}
    \|\theta^{(m)} - \theta^\star\|_2 \leq \left( \frac{\kappa - 1}{\kappa + 1} \right)^m \|\theta^{(0)} - \theta^\star\|_2\,,
\end{align*}
where $m$ is the number of gradient descent iterations.
\end{restatable}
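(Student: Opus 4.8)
The statement to prove is the standard linear-convergence guarantee for gradient descent on a function that is locally $\alpha$-strongly convex (Lemma~\ref{lemma:strong_convexity}) and has $L'$-Lipschitz gradient (the Lipschitz constant $L' = L_l L$ derived after Lemma~\ref{lemma:lipschitz}). The plan is to invoke the classical contraction argument for GD on strongly convex, smooth functions, but with the extra care that both properties only hold inside a neighborhood $U$ of $\theta^*$, so I must verify the iterates never leave $U$. First I would fix the neighborhood $U$ on which $\nabla^2\mathcal L(\theta)\succeq\alpha I$ and on which $\nabla\mathcal L$ is $L'$-Lipschitz (shrinking $U$ to a ball $B(\theta^*,\rho)$ if necessary so that both hold simultaneously — this is exactly the ``great initialization area'' referred to in the contribution section). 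Then I would recall the deterministic fact that for a single GD step $\theta^+ = \theta - \eta\nabla\mathcal L(\theta)$ with step size $\eta = 2/(\alpha + L')$, one has
\[
\|\theta^+ - \theta^*\|_2 \;\le\; \max\{|1-\eta\alpha|,\,|1-\eta L'|\}\,\|\theta - \theta^*\|_2 \;=\; \frac{\kappa-1}{\kappa+1}\,\|\theta-\theta^*\|_2,
\]
where $\kappa = L'/\alpha$. The derivation of this one-step bound uses $\nabla\mathcal L(\theta^*) = 0$ and writes $\nabla\mathcal L(\theta) = \bigl(\int_0^1 \nabla^2\mathcal L(\theta^* + s(\theta-\theta^*))\,ds\bigr)(\theta-\theta^*) =: H(\theta-\theta^*)$ with $\alpha I \preceq H \preceq L' I$ (valid because the whole segment lies in the convex set $U$), so $\theta^+ - \theta^* = (I - \eta H)(\theta-\theta^*)$ and $\|I-\eta H\|_{\mathrm{op}} \le \frac{\kappa-1}{\kappa+1}$ by the chosen $\eta$.

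The second ingredient is an induction on $m$: since $\frac{\kappa-1}{\kappa+1} < 1$, each step strictly contracts the distance to $\theta^*$, so if $\theta^{(0)}\in B(\theta^*,\rho)\subseteq U$ then $\theta^{(m)}\in B(\theta^*,\rho)$ for all $m$, which keeps every segment $[\theta^{(m)},\theta^*]$ inside $U$ and lets me reapply the one-step bound. Chaining the inequalities gives $\|\theta^{(m)}-\theta^*\|_2 \le \bigl(\tfrac{\kappa-1}{\kappa+1}\bigr)^m\|\theta^{(0)}-\theta^*\|_2$, which is the claim. I would also remark that the stated step-size formula $\eta = 2/(\eta + L')$ in the theorem is a typo for $\eta = 2/(\alpha + L')$, consistent with the appearance of $\kappa = L'/\alpha$ and the classical optimal rate.

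The main obstacle — and the only nontrivial point beyond textbook material — is the \emph{invariance of the neighborhood}: Lemma~\ref{lemma:strong_convexity} only asserts strong convexity locally, and the Lipschitz bound of Lemma~\ref{lemma:lipschitz} was stated globally in $\theta$ but with constants depending on $B_\mu = \max_{k,l}\|\mu_{k,l}\|_2$, so I need to argue that the sublevel-set / ball the iterates stay in has bounded $B_\mu$ and lies where $\nabla^2\mathcal L\succeq\alpha I$. This is handled by choosing $U$ to be a small enough Euclidean ball around $\theta^*$ (so $B_\mu \le \|\mu^*\| + \rho$ is controlled) and using that the contraction property itself guarantees the iterates remain in that ball; the continuity of $\nabla^2\mathcal L$ invoked in Lemma~\ref{lemma:strong_convexity} is what makes such a $\rho$ exist. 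A secondary, more cosmetic obstacle is reconciling that the convergence is phrased for the score-matching loss $\mathcal L = \mathcal L_{\mathrm{SM}}$ while the Lipschitz constant $L'$ was derived for the per-time finite-sample loss; since $\mathcal L_{\mathrm{SM}}$ and $\mathcal L_{\mathrm{DSM}}$ share the same landscape and the per-sample analysis transfers to the population integral over $t$ up to the same constants, I would simply note this equivalence rather than redo the estimate.
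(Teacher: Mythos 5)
Your proposal is correct and follows essentially the same route the paper intends: the paper simply invokes the classical contraction bound for gradient descent on an $\alpha$-strongly convex, $L'$-smooth function with step size $2/(\alpha+L')$ (stating the theorem in the appendix without writing out the argument), which is exactly the one-step operator bound $\|I-\eta H\|_{\mathrm{op}}\le(\kappa-1)/(\kappa+1)$ you derive. Your additional care about keeping the iterates inside the neighborhood $U$ by induction, and your note that the stated step size $\eta=2/(\eta+L')$ is a typo for $2/(\alpha+L')$, are both correct refinements of what the paper leaves implicit.
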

This result gives a lower bound on the convergence rate near $\theta^\star$. Due to its strongly convex property, the convergence rate is fast, which explains the fast and stable optimization process.

\noindent\emph{Proof Overview.}  
Assumption \ref{assump:separation within a cluster} justifies the Jacobian simplification (Lemma \ref{lemma:jacobian_simplify}), which in turn yields the Hessian block structure (Lemma \ref{lemma:hessian_simplify}).  By Schur complement, this result gives local strong convexity (Lemma \ref{lemma:strong_convexity}). Combining with the Lipschitz constant, we finish the proof.

\subsection{General MoG Latent Hessian Analysis and Optimization }\label{subsec:non_symmetric}

We now extend our analysis to the case where each subspace \(k\) carries an \emph{asymmetric} Gaussian mixture (Equation \ref{eq:score_asymmetric}).  As before, we first state the key separation assumption and show that on each subspace, the individual Gaussian distributions in the mixture of Gaussian are highly separated from each other.
Then, we simplify the Hessian and prove local convexity. Finally, we conclude a linear convergence rate based on the strongly convex and smooth property.


\begin{restatable}{assumption}{EquivalentGaussian}[Highly Separated Gaussian]\label{assump:equivalent_gaussian}
Consider the Gaussian mixture
\[
p_k(x)=\sum_{l=1}^{n_k} \pi_{k,l}\,\mathcal{N}(x;\mu_{k,l},\Sigma_{k,l}),
\qquad
r_{k,l}(x):=\frac{\pi_{k,l}\,\mathcal{N}(x;\mu_{k,l},\Sigma_{k,l})}
{\sum_{i=1}^{n_k} \pi_{k,i}\,\mathcal{N}(x;\mu_{k,i},\Sigma_{k,i})}.
\]
There exist constants \(\varepsilon \ll 1\) and \(\delta\ll 1\) such that when \(x\sim p_k\) we have
\[
\Pr_{x\sim p_k}\Big( \exists\, l\in\{1,\dots,n_k\}\ \text{with}\ r_{k,l}(x)\ge 1-\varepsilon \Big)
\;\ge\; 1-\delta.
\]
\end{restatable}

\noindent\emph{Justification.}
With MoLR-MoG modeling, after adding diffusion noise of scale \(\gamma_t\), each point \(x\) remains within \(O(\gamma_t)\) of the subspace’s moment‑matched center \(\bar\mu_k\).  Concretely, the subspace structure (or a preliminary projection onto principal components) ensures $\|x-\bar\mu_k\|_2\;\leq \Delta=C\gamma_t$  with high probability,
for some moderate constant \(C\).  Hence, any third‑order Taylor term  
\(\propto\|x-\bar\mu_k\|^3\)  
is \(O(\gamma_t^3)\), which vanishes compared to the leading Hessian scale \(O(\gamma_t^2)\).  In the following corollary, we further show the approximation effect of equivalent Gaussians.

\begin{restatable}{corollary}{EquivalentGaussapprox}\label{corollEquivalentGaussapprox}
Assume that $\|\mu_{k,i}^*-\mu_{k,j}^*\|_2 \leq \delta$, $\|U_{k,i}^*-U_{k,j}^*\|_2 \leq \epsilon$ and $\|x-\bar\mu_k^*\|_2 \leq \Delta$.
We have
$$\|\log p(x)-\log \bar p(x)\|_2=O(\epsilon+\delta\Delta+\Delta^3)$$
\end{restatable}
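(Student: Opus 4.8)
\textbf{Proof proposal for Corollary~\ref{corollEquivalentGaussapprox}.}

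The plan is to control the difference $\nabla\log p_{t,k}(x)-\nabla\log\bar p_{t,k}(x)$ (equivalently $\log p-\log\bar p$ up to the gradient, since the corollary is really a statement about the score used in the Hessian analysis) by replacing the asymmetric mixture $p_k(x)=\sum_l\pi_{k,l}\mathcal N(x;s_t\mu_{k,l}^*,\Sigma_{k,l,t}^*)$ with the single ``moment‑matched'' Gaussian $\bar p_k$ centered at $s_t\bar\mu_k^*$ with covariance $\bar\Sigma_{k,t}^*$, and then expanding each component density around the common center $\bar\mu_k^*$. First I would write each component mean as $\mu_{k,l}^*=\bar\mu_k^*+(\mu_{k,l}^*-\bar\mu_k^*)$ with $\|\mu_{k,l}^*-\bar\mu_k^*\|_2\le\delta$ (this follows from $\|\mu_{k,i}^*-\mu_{k,j}^*\|_2\le\delta$ and $\bar\mu_k^*$ being a convex combination) and similarly $\|U_{k,l}^*-\bar U_k^*\|_2\le\epsilon$, so $\|\Sigma_{k,l,t}^*-\bar\Sigma_{k,t}^*\|_2 = O(\epsilon)$ since $\Sigma_{k,l,t}^*=s_t^2U_{k,l}^*U_{k,l}^{*\top}+\gamma_t^2 I$ is a smooth (quadratic, hence locally Lipschitz on the bounded parameter set) function of $U_{k,l}^*$.

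Next I would Taylor-expand. For a Gaussian log-density $x\mapsto\log\mathcal N(x;m,\Sigma)$, the dependence on $(m,\Sigma)$ is smooth, and on the event $\|x-s_t\bar\mu_k^*\|_2\le\Delta$ the remainder is governed by the size of $\|x-m\|_2\le\Delta+O(s_t\delta)$. Concretely, $\log\mathcal N(x;s_t\mu_{k,l}^*,\Sigma_{k,l,t}^*)-\log\mathcal N(x;s_t\bar\mu_k^*,\bar\Sigma_{k,t}^*)$ splits into (i) a mean-perturbation term, which is linear in $(x-s_t\bar\mu_k^*)$ with coefficient $O(\delta)$ plus a quadratic-in-$\delta$ piece, giving $O(\delta\Delta+\delta^2)$ on the event; and (ii) a covariance-perturbation term $-\tfrac12(x-m)^\top(\Sigma_{k,l,t}^{*-1}-\bar\Sigma_{k,t}^{*-1})(x-m)+O(\|\Sigma_{k,l,t}^*-\bar\Sigma_{k,t}^*\|)$ from the log-determinant, which is $O(\epsilon\Delta^2+\epsilon)$; together with the cubic Taylor remainder $O(\Delta^3)$ coming from third-order terms in $(x-s_t\bar\mu_k^*)$. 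Exponentiating, $p_k(x)=\bar p_k(x)\cdot\bigl(1+O(\epsilon+\delta\Delta+\Delta^3)\bigr)$ (the mixture weights $\pi_{k,l}$ sum to one and drop out at leading order because every component is within that multiplicative factor of the same $\bar p_k$), and then $\log p_k(x)-\log\bar p_k(x)=\log\bigl(1+O(\epsilon+\delta\Delta+\Delta^3)\bigr)=O(\epsilon+\delta\Delta+\Delta^3)$ using $\log(1+u)=O(u)$ for small $u$. Absorbing $\delta^2\le\delta\Delta$-type lower-order terms (valid in the regime $\delta,\epsilon,\Delta\ll 1$ assumed in Assumption~\ref{assump:equivalent_gaussian} and the surrounding discussion) yields the stated bound $\|\log p(x)-\log\bar p(x)\|_2=O(\epsilon+\delta\Delta+\Delta^3)$.

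The main obstacle is bookkeeping the Taylor remainders uniformly: one must check that all the Hessian/third-derivative constants of $\log\mathcal N(\cdot;m,\Sigma)$ are bounded on the relevant region, which requires $\bar\Sigma_{k,t}^*$ to be uniformly well-conditioned — this is where $\gamma_t^2 I$ in $\Sigma_{k,l,t}^*$ is essential, since it lower-bounds the smallest eigenvalue by $\gamma_t^2$ and makes $\Sigma_{k,l,t}^{*-1}$ and its derivatives bounded (with constants depending on $s_t,\gamma_t$, absorbed into the $O(\cdot)$). A secondary subtlety is that the corollary writes $\log p(x)$ rather than $\nabla\log p(x)$; if the intended object is the score, the same expansion applies with one extra derivative in $x$, turning the $O(\delta\Delta)$ mean term into $O(\delta)$, the $O(\epsilon\Delta^2)$ covariance term into $O(\epsilon\Delta)$, and the cubic into $O(\Delta^2)$ — still dominated by $O(\epsilon+\delta\Delta+\Delta^3)$ in the stated regime, so the bound as written is safe either way. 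I would present the density-level argument as the clean version and remark that the gradient-level bound is only smaller.
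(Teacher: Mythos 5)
Your argument reaches the right bound but organizes the comparison differently from the paper. The paper Taylor-expands both $\log p$ and $\log\bar p$ \emph{in $x$} around the common center $x_0=\bar\mu_k$ up to third order and bounds the difference coefficient by coefficient: $|\log p(x_0)-\log\bar p(x_0)|=O(\epsilon+\delta^2)$, $\|\nabla\log p(x_0)-\nabla\log\bar p(x_0)\|_2=O(\delta)$ (giving $O(\delta\Delta)$ after pairing with $x-x_0$), $\|\nabla^2\log p(x_0)-\nabla^2\log\bar p(x_0)\|_2=O(\epsilon^2+\delta^2)$, with the genuine third-order remainder of the mixture log-density supplying the $O(\Delta^3)$. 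You instead perturb \emph{in the parameters}: each component $\log\mathcal N(x;s_t\mu_{k,l}^*,\Sigma_{k,l,t}^*)$ is compared pointwise to $\log\mathcal N(x;s_t\bar\mu_k^*,\bar\Sigma_{k,t}^*)$, and the mixture is reassembled multiplicatively via $p_k=\bar p_k\,(1+O(u))$. Your route has the advantage of never differentiating the mixture log-density, so it does not need a uniform third-derivative bound on $\log p$ over the ball (which the paper's $O(\Delta^3)$ remainder implicitly requires but does not verify); the price is tracking the quadratic forms explicitly. Two small slips, neither fatal: (i) the difference of two exact Gaussian log-densities is a quadratic polynomial in $x$, so there is no ``cubic Taylor remainder'' in your per-component comparison --- the $\Delta^3$ you cite does not actually arise from your argument, which in fact yields the slightly stronger $O(\epsilon+\delta\Delta+\epsilon\Delta^2+\delta^2)$, dominated by the stated bound; (ii) the moment-matched covariance $\bar\Sigma_{k,t}^*$ contains the between-component spread of the means, so $\|\Sigma_{k,l,t}^*-\bar\Sigma_{k,t}^*\|_2=O(\epsilon+\delta^2)$ rather than $O(\epsilon)$; the extra $\delta^2$ is absorbed exactly as you absorb the other lower-order terms (the paper silently does the same). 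Your closing hedge is resolved in favor of the density-level reading: the paper's proof, like your main argument, works with $\log p$ itself.
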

\begin{remark}[Separated Gaussian Simplification]
For simplicity of description, we assume the individual Gaussian distributions in the mixture of Gaussians are highly separated. Actually, if there are $n_k^\prime$ Gaussians that are not separated from each other, we can employ clustering techniques to transform them into $n_k$ mutually independent Gaussian distributions. The error caused by such an operation can be calculated using corollary \ref{corollEquivalentGaussapprox}. The core intuition is that the modals should not have much influence on each other. Hence, we can also use the idea of recursion to first cluster the general MoG into a 2-modal MoG latent. Then, we can use the analysis of \cref{subsec:Hessian Analysis} with \cref{assump:separation within a cluster}.
\end{remark}

Then, similar to the above section, we also calculate the Hessian matrix and show the local strong convex parameters. Finally, we provide the convergence guarantee for general MoLR-MoG modeling.
\vspace{-5pt}
\begin{restatable}{lemma}{HessianSimplificationassym}[Eigenvalues of the Hessian]\label{lemma:non_sym_hessian}
Assume \cref{assump:equivalent_gaussian},\, the Hessian at the $k$-th subspace is convex on a neighborhood of \(\theta^*\). If $\forall x\in \mathbb{R}^{d_k}$, $r_k^+(x) = 1$ or $-1$ are strictly satisfied, we have
\begin{align*}
        \lambda_{\min}(H_{\mu_{k,l}\mu_{k,l}}) = \frac{\pi_{k,l}s_t^2}{(s_t^2+\gamma_t^2)^2},
\end{align*}
and $\lambda_{\min}(H_{U_{k,l}U_{k,l}})$ has the following form:
\begin{align*}
    &\left(\pi_{k,l}4(U_{k,l}^\top \mu_{k,l}))^2 +\|U_{k,l}\|_2^2\|\mu_{k,l}\|_2^2-\|U_{k,l}\|_2\|\mu_{k,l}\|_2\sqrt{8(U_{k,l}^\top \mu_{k,l}))^2 +\|U_{k,l}\|_2^2\|\mu_{k,l}\|_2^2}\right)/2.
\end{align*}
\end{restatable}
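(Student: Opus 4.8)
The plan is to compute the Hessian of $\mathcal{L}_{\mathrm{SM}}(\theta)$ restricted to the $k$-th subspace at a point $\theta$ near the ground truth $\theta^*$, exploiting the closed form of the score in Equation \ref{eq:score_asymmetric} together with \cref{assump:equivalent_gaussian}. Under that separation assumption, the responsibilities $r_{k,l}(x)$ are, with probability at least $1-\delta$, arbitrarily close to a hard assignment: exactly one $r_{k,l}(x)\approx 1$ and the rest $\approx 0$. The first step is therefore to argue that, up to an $O(\varepsilon+\delta)$ perturbation, the score on the event where component $l$ dominates collapses to the single-Gaussian score $-\tfrac{1}{\gamma_t^2}\delta_{k,l,t}(x)$, which is affine in $x$ and whose derivatives in $\mu_{k,l}$ and $U_{k,l}$ are explicit. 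Consequently the Jacobian $J_{k,l}^\mu(x)=\partial_{\mu_{k,l}} s_\theta$ and $J_{k,l}^U(x)=\partial_{U_{k,l}} s_\theta$ vanish off the region assigned to component $l$, and cross terms between distinct components vanish — this is the analogue of Lemma \ref{lemma:jacobian_simplify} in the asymmetric case, and it is what forces the Hessian into block-diagonal form with one $(\mu_{k,l},U_{k,l})$ block per component.

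Second, I would assemble the Hessian of the loss via the standard Gauss--Newton-type identity: writing $\ell(\theta;x,t)=\|s_\theta(x,t)-s^*(x,t)\|_2^2$, we have $\nabla^2_\theta \ell = 2 J^\top J + 2\sum_j (s_\theta-s^*)_j \nabla^2_\theta (s_\theta)_j$, and at $\theta=\theta^*$ the residual $s_\theta-s^*$ vanishes, so only the $2J^\top J$ term survives; by continuity it dominates in a neighborhood $U$ of $\theta^*$. Taking the expectation over $x\sim p_{0,k}$ and using the hard-assignment decomposition, the expected $J^\top J$ splits as $\sum_l \pi_{k,l}\,\E_{x\sim \mathcal{N}(s_t\mu_{k,l}^*,\Sigma_{k,l,t}^*)}[\,\cdot\,]$ over each component, and within each component the $\mu\mu$ and $UU$ sub-blocks are exactly the single-Gaussian computations. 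The $\mu_{k,l}\mu_{k,l}$ block comes from $\partial_{\mu_{k,l}}\delta_{k,l,t} = -(I - \tfrac{s_t^2}{s_t^2+\gamma_t^2}U_{k,l}U_{k,l}^\top)\,s_t$, and since $d_{k,l}=1$ the projector $\tfrac{s_t^2}{s_t^2+\gamma_t^2}U_{k,l}U_{k,l}^\top$ has a single eigenvalue near $\tfrac{s_t^2}{s_t^2+\gamma_t^2}$, leaving the smallest eigenvalue of $J^\mu{}^\top J^\mu$ equal to $s_t^2\cdot\bigl(1-\tfrac{s_t^2}{s_t^2+\gamma_t^2}\bigr)^2 = \tfrac{s_t^2\gamma_t^4}{(s_t^2+\gamma_t^2)^2}$; after folding the $\gamma_t$-scaling of the score's $-1/\gamma_t^2$ prefactor into the loss normalization one lands on $\lambda_{\min}(H_{\mu_{k,l}\mu_{k,l}}) = \pi_{k,l}s_t^2/(s_t^2+\gamma_t^2)^2$.

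Third, the $U_{k,l}U_{k,l}$ block is the delicate one: here $\partial_{U_{k,l}}\delta_{k,l,t}(x)$ is bilinear in $U_{k,l}$ and in $(x-s_t\mu_{k,l})$, so after taking the Gaussian expectation one gets a matrix quadratic form whose entries involve $\|U_{k,l}\|_2^2$, $\|\mu_{k,l}\|_2^2$, and the inner product $U_{k,l}^\top\mu_{k,l}$. The smallest eigenvalue of that $2\times 2$-type form is obtained by diagonalizing it explicitly, which produces the characteristic-polynomial expression $\bigl(4(U_{k,l}^\top\mu_{k,l})^2 + \|U_{k,l}\|_2^2\|\mu_{k,l}\|_2^2 - \|U_{k,l}\|_2\|\mu_{k,l}\|_2\sqrt{8(U_{k,l}^\top\mu_{k,l})^2+\|U_{k,l}\|_2^2\|\mu_{k,l}\|_2^2}\bigr)/2$, scaled by $\pi_{k,l}$. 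I expect this eigenvalue computation — getting the correct cross-covariance moments of the Gaussian and then solving the resulting quadratic eigenvalue problem — to be the main obstacle, both because the bilinear structure in $U_{k,l}$ makes the moment bookkeeping error-prone and because one must verify this smallest eigenvalue is strictly positive (which requires $U_{k,l}^\top\mu_{k,l}\neq 0$, i.e. nondegenerate alignment, consistent with the ``near $\theta^*$'' hypothesis). Finally, I would invoke continuity of $\nabla^2\mathcal{L}$ together with the vanishing-residual argument to transfer strict positivity of these block eigenvalues at $\theta^*$ to a full neighborhood, and absorb the $O(\varepsilon+\delta+\Delta^3)$ error from \cref{assump:equivalent_gaussian} and \cref{corollEquivalentGaussapprox} into the constant, yielding convexity on a neighborhood of $\theta^*$ as claimed.
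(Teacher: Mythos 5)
Your proposal is correct and follows essentially the same route as the paper: simplify the Jacobians $J_{k,l}^\mu, J_{k,l}^U$ to their self-component terms under the separation assumption, use the vanishing-residual (Gauss--Newton) argument so that $H=\E[JJ^\top]$ at $\theta^*$, split the expectation over components with weight $\pi_{k,l}\approx\E[r_{k,l}^2]$, and read off the $\mu\mu$ eigenvalue from the projector $(I-\tfrac{s_t^2}{s_t^2+\gamma_t^2}U_{k,l}U_{k,l}^\top)^2$ and the $UU$ eigenvalue from diagonalizing the rank-two-plus-scalar matrix $(U_{k,l}^\top\mu_{k,l})I+\mu_{k,l}U_{k,l}^\top$ (the paper's Lemma \ref{lemmaMMtop}). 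No gaps worth flagging.
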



\begin{restatable}{lemma}{convexityasym}[Local Strong Convexity]\label{lemma:non_sym_convex}
Assume \cref{assump:equivalent_gaussian},  in a neighborhood of \(\theta^*\), $\nabla^2\mathcal{L}(\theta)\succeq\alpha^\prime I,\alpha^\prime>0,
\forall\theta\in \Theta$. If $\forall x\in \mathbb{R}^{d_k}$, $\exists l\in [n_k],r_{k,l}(x) = 1$  are strictly satisfied, $\alpha^\prime = \min\{\lambda_1,\lambda_2\}$, where
    $\lambda_1= \min_{l=1\,\dots\,,n_k}\frac{c_{k,l}\gamma_t^4}{(s_t^2+\gamma_t^2)^2}\,, \lambda_2=\min_{l=1,2,\dots,n_k}=\lambda_{\min}(H_{U_{k,l}U_{k,l}})$.
    
\end{restatable}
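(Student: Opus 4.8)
\emph{Proof plan.} The argument follows the same route as the symmetric $2$-modal case (Lemma~\ref{lemma:strong_convexity}), now carried out for a general $n_k$-component mixture. First I would note that at the ground truth $\theta=\theta^*$ one has $s_{\theta^*}=s^*$, so the population score-matching loss vanishes together with its gradient, and the second-order Taylor term $\mathbb{E}_x\big[\sum_i (s_\theta-s^*)_i\,\nabla^2(s_\theta)_i\big]$ is zero at $\theta^*$. Hence the Hessian collapses to the Gram matrix of the Jacobian, $\nabla^2\mathcal{L}(\theta^*)=2\,\mathbb{E}_{x\sim q_t}\!\left[J(x)^\top J(x)\right]$ with $J(x)=\partial_\theta s_\theta(x,t)\big|_{\theta^*}$; this is automatically positive semidefinite, so the entire content of the lemma is to exhibit a strictly positive lower bound on its smallest eigenvalue.

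Second, I would invoke Assumption~\ref{assump:equivalent_gaussian} in the strict hard-assignment regime, where for (almost) every $x$ there is a unique index $l(x)$ with $r_{k,l(x)}(x)=1$ and all other responsibilities equal to $0$. From the closed form~(\ref{eq:score_asymmetric}), on the region $\{x:\,l(x)=l\}$ the score equals the single-Gaussian score $-\gamma_t^{-2}\delta_{k,l,t}(x)$, so $\partial_{\mu_{k,l'}}s$ and $\partial_{U_{k,l'}}s$ vanish at such $x$ whenever $l'\neq l$; the only remaining $\theta$-dependence is through the responsibilities, whose derivatives are supported on the (exponentially thin) overlap between regions and are discarded under the assumption. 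Consequently $J(x)$ has a single active super-block, the expectation splits as $\mathbb{E}[J^\top J]=\sum_{l=1}^{n_k}\pi_{k,l}\,\mathbb{E}_{x\sim\mathcal{N}(s_t\mu_{k,l}^*,\,\Sigma_{k,l,t})}\!\left[J_l(x)^\top J_l(x)\right]$ with $J_l=(\partial_{\mu_{k,l}}s,\ \partial_{U_{k,l}}s)$, and within each super-block the $\mu_{k,l}$–$U_{k,l}$ cross term vanishes: on region $l$, $\partial_{\mu_{k,l}}s$ is a matrix independent of $x$ while $\partial_{U_{k,l}}s$ is homogeneous of degree one in $x-s_t\mu_{k,l}^*$, which has zero mean under $\mathcal{N}(s_t\mu_{k,l}^*,\Sigma_{k,l,t})$. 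This reduces $\nabla^2\mathcal{L}(\theta^*)$ to the block-diagonal matrix $\mathrm{diag}\big(H_{\mu_{k,1}\mu_{k,1}},H_{U_{k,1}U_{k,1}},\dots,H_{\mu_{k,n_k}\mu_{k,n_k}},H_{U_{k,n_k}U_{k,n_k}}\big)$ analyzed in Lemma~\ref{lemma:non_sym_hessian}. I expect this to be the main obstacle: making the ``freeze the responsibilities / discard the exponentially small boundary terms'' reduction precise, and checking that the odd-moment cancellation that decouples $\mu_{k,l}$ from $U_{k,l}$ survives the general asymmetric (non-scalar $U_{k,l}$) covariance — the same mechanism behind Lemma~\ref{lemma:hessian_simplify}, but now bookkept component by component.

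Third, given the block-diagonal form, I would apply Lemma~\ref{lemma:non_sym_hessian} to lower-bound the smallest eigenvalue of each $H_{\mu_{k,l}\mu_{k,l}}$ and each $H_{U_{k,l}U_{k,l}}$, take the minimum over $l$ to obtain $\lambda_1$ (the $\mu$-blocks) and $\lambda_2$ (the $U$-blocks) exactly as in the statement, and conclude $\nabla^2\mathcal{L}(\theta^*)\succeq\alpha' I$ with $\alpha'=\min\{\lambda_1,\lambda_2\}$; positivity of the full matrix from positivity of its diagonal blocks is the trivial direction of the Schur-complement criterion already used in the $2$-modal proof. Finally, since $s_\theta(\cdot,t)$ is $C^2$ in $\theta$ near $\theta^*$, the map $\theta\mapsto\lambda_{\min}(\nabla^2\mathcal{L}(\theta))$ is continuous and strictly positive at $\theta^*$, so there is a neighborhood $U$ of $\theta^*$ on which $\nabla^2\mathcal{L}(\theta)\succeq\alpha' I$ continues to hold (shrinking $\alpha'$ by an arbitrarily small amount if one wants the bound on the closure of $U$), which is the assertion of the lemma.
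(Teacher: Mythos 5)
Your proposal is correct and follows essentially the same route as the paper: reduce the Hessian at $\theta^*$ to the Gauss--Newton Gram matrix $\mathbb{E}[JJ^\top]$, use the hard-assignment condition to make it block-diagonal across components, kill the $\mu_{k,l}$--$U_{k,l}$ cross blocks via the zero-mean (odd-moment) cancellation under each component Gaussian (the paper phrases this as the Schur-complement correlation $r=0$), and then take the minimum of the per-block eigenvalue bounds from Lemma~\ref{lemma:non_sym_hessian}. No substantive difference from the paper's argument.
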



Thus, even without symmetry, equivalent Gaussians and sufficient subspace separation recover the same local convexity and linear convergence guarantees as in the asymmetric case. Similar to \cref{thm:local_conv}, under \cref{assump:equivalent_gaussian}, we can obtain a convergence guarantee.

\begin{remark}[Previous MoG Learning through Score Matching] \cite{shah2023learningddpm} and \cite{chen2024learning} consider MoG data and analyze the optimization process of diffusion models at the full space. However, these works aim to design a specific algorithm to learn the MoG distribution instead of using a standard optimization algorithm. On the contrary, by using the MoLR-MoG property to calculate the Hessian matrix, we adopt the GD algorithm and obtain the convergence guarantee.
\end{remark}

\begin{remark}[Initialization]
Since the multi-modal GMM latent leads to a highly non-convex landscape, 
\cref{thm:local_conv} and the corresponding asymmetric variant require the initialization to be around $\theta^*$ to guarantee local strong convexity and obtain a local convergence guarantee. As the MoLR-MoG is the first step to model the multi low-dimensional and multi-modal property, we leave the analysis of the global convergence guarantee as an interesting future work.
\end{remark}

\subsection{Analysis Without Highly Separated Condition}
In this part, we extend our analysis to latent MoG with overlap, which is closer to the real-world data. We define the pairwise overlap factor $\xi_{i,j}(x)$ between components $i$ and $j$ at the $k$-th manifold

$$
\xi_{i,j}(x) \triangleq r_{k,i}(x) r_{k,j}(x)\,.
$$

and the maximum expected overlap for the manifold as:
$
\epsilon_{\text{overlap}} = \max_{i} \sum_{j \neq i} \mathbb{E}\_{x \sim p_t} [\xi_{i,j}(x)]
$.

Without the high-separation assumption, our analysis proceeds in two steps. With the overlap factor $\epsilon_{\text{overlap}}$, we first examine the block-diagonal Hessian, deriving a refined lower bound $\alpha$. Second, we analyze the full Hessian by treating off-diagonal interference as a perturbation bounded by the overlap factor. Applying Weyl’s Inequality, we prove that the global matrix remains positive definite provided the perturbation (introduced by the overlap) is smaller than the effective diagonal curvature $\alpha$, thus guaranteeing linear convergence.

\begin{lemma}[Minimum Curvature for 2-Mode Mixture]
Consider a mixture of two Gaussian components. Let $\epsilon_{\text{overlap}} = \sup_{x} r_k^+(x)r_k^-(x)$ denote the maximum pointwise overlap factor. The minimum eigenvalue of the ideal Hessian matrix, denoted as $\alpha_{\text{2-mode}}$, is bounded below by:
\begin{align*}
        \alpha_{\text{2-mode}} \triangleq {(1 - 4\epsilon_{\text{overlap}})}\min{(\lambda_{\min}(H_{\mu_{k}\mu_{k}}),\lambda_{\min}(H_{U_{k}U_{k}}))} \,,
\end{align*}
and
\begin{align*}
        \lambda_{\min}(H) \geq \alpha_{\text{2-mode}} - C^\prime\epsilon_{\text{overlap}} > 0\,,
\end{align*}
\end{lemma}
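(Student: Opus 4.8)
The plan is to first handle the ``ideal'' block-diagonal Hessian and then treat the true Hessian as a bounded perturbation of it. For the first part, recall that under the hard-assignment regime of \cref{assump:separation within a cluster} (where $r_k^+(x)=1$ or $r_k^-(x)=1$ everywhere), Lemma~\ref{lemma:strong_convexity} already gives the block curvature $\min(\lambda_{\min}(H_{\mu_k\mu_k}),\lambda_{\min}(H_{U_kU_k}))$. When the peaks overlap, each Gaussian contributes a factor $r_k^+(x)$ (resp.\ $r_k^-(x)$) to its share of the curvature, and the responsibility identity $r_k^+(x)+r_k^-(x)=1$ forces $r_k^+(x)r_k^-(x)\le 1/4$. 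I would expand the second derivatives of the score-matching loss $\mathcal{L}$ in terms of the responsibilities, collect the diagonal contribution of each mode (which carries a factor $r_k^\pm$ integrated against $p_t$, hence $\ge 1-\epsilon_{\text{overlap}}$ up to the cross terms), and pull out the worst-case factor $(1-4\epsilon_{\text{overlap}})$ — the $4$ arising because the overlap product $\xi(x)=r_k^+(x)r_k^-(x)$ is compared against its maximum $1/4$ in the ideal case. This yields the stated lower bound $\alpha_{\text{2-mode}}$ for the block-diagonal part.

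For the second part, I would write $H = H_{\text{block}} + E$, where $E$ collects the off-diagonal coupling between the $\mu_k$ and $U_k$ blocks produced by points $x$ at which \emph{both} responsibilities are appreciable. Each entry of $E$ is an expectation of a product of first-order score Jacobians weighted by $\xi_{i,j}(x)=r_k^+(x)r_k^-(x)$, so $\|E\|_{\mathrm{op}} \le C'\,\epsilon_{\text{overlap}}$ for a constant $C'$ depending only on $s_t,\gamma_t,R,B_\mu$ (using \cref{x in D} and the bounds from \cref{lemma:lipschitz} to control the Jacobian magnitudes). Weyl's inequality then gives $\lambda_{\min}(H) \ge \lambda_{\min}(H_{\text{block}}) - \|E\|_{\mathrm{op}} \ge \alpha_{\text{2-mode}} - C'\epsilon_{\text{overlap}}$, which is strictly positive whenever $\epsilon_{\text{overlap}}$ is small enough that $(1-4\epsilon_{\text{overlap}})\min(\lambda_{\min}(H_{\mu_k\mu_k}),\lambda_{\min}(H_{U_kU_k})) > C'\epsilon_{\text{overlap}}$.

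The main obstacle I anticipate is making the perturbation bound $\|E\|_{\mathrm{op}}\le C'\epsilon_{\text{overlap}}$ genuinely rigorous: the cross terms are expectations over $p_t$ of products of the two Gaussian densities times polynomial factors in $x$, and bounding these requires showing that the polynomial growth is dominated by the Gaussian decay so that the whole cross integral is controlled by $\sup_x r_k^+(x)r_k^-(x)$ (or by $\E[\xi]$) rather than merely by the pointwise product. A clean way around this is to bound the polynomial prefactors uniformly on the effective support using \cref{x in D} (so $\|x\|_2\le R$) and then factor out $\xi_{i,j}(x)\le \epsilon_{\text{overlap}}$; the residual integral is then just a bounded constant absorbed into $C'$. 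The symmetry $\mu_{k,1}^*=-\mu_{k,2}^*$ and equal covariance also help, since several cross terms cancel or combine, keeping $C'$ explicit in terms of $s_t^2/(s_t^2+\gamma_t^2)^2$ and $\|U_k\|_2,\|\mu_k\|_2$.
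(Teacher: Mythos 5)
Your proposal follows essentially the same route as the paper: the ideal Hessian's curvature is attenuated by the factor $\mathbb{E}[(r_k^+-r_k^-)^2]=\mathbb{E}[1-4\xi_k(x)]\ge 1-4\epsilon_{\text{overlap}}$, the remainder is an $O(\epsilon_{\text{overlap}})$ perturbation in operator norm (controlled exactly as you anticipate, by bounding the polynomial Jacobian prefactors on $\|x\|_2\le R$ and factoring out $\xi_k(x)\le\epsilon_{\text{overlap}}$ into the constant $C'$), and Weyl's inequality yields the final bound. The only bookkeeping difference is that the paper's perturbation is the Jacobian error $E=J-J_{\text{ideal}}$ (the previously neglected ``Term B'' of the Jacobian, itself proportional to $\xi_k$), which contaminates every block of $H$ rather than only the $\mu_k$--$U_k$ cross block as you describe; this does not change the argument, since those contributions are likewise $O(\xi_k)$ and are absorbed into $C'$.
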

where $C^\prime$ is defined in \ref{primecdef}.

\begin{lemma}[Minimum Curvature for Multi-Modal]
 Let  $\epsilon_{k,l}^{\text{total}} = \sum_{j \neq l} \mathbb{E}[\xi_{j,l}(x)]$ represent the total probability mass leaking from the $l$-th component due to overlap. The minimum eigenvalue of the block-diagonal Hessian, denoted as $\alpha_{\text{Multi-Modal}}$, is determined by the component with the minimum effective mass:
 \begin{align*}
         \alpha_{\text{Multi-Modal}} \triangleq \min_{l \in \{1, \dots, n_k\}} \left[ {(\pi_{k,l} - \epsilon_{k,l}^{\text{total}}) \min{(\lambda_{\min}(H_{\mu_{k,l}\mu_{k,l}}),\lambda_{\min}(H_{U_{k,l}U_{k,l}}))} }\right]\,,
 \end{align*}
and
\begin{align*}
        \lambda_{\min}(H) \ge \alpha_{\text{Multi-Modal}} - \tilde{C} \cdot \epsilon_{\text{overlap}}\,,
\end{align*}
where $\tilde{C}$ is defined in \ref{tildecdef}.

For the Hessian to remain positive definite, the intrinsic weight of every cluster must exceed its total confusion with other clusters (i.e., $\pi_{k,l} > \epsilon_{k,l}^{\text{total}}$ for all $l$).
\end{lemma}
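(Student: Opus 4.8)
\noindent\emph{Proof plan.} The plan is to split $\nabla^2\mathcal L(\theta^*)$ into an ``ideal'' block-diagonal Hessian whose minimum eigenvalue is exactly $\alpha_{\text{Multi-Modal}}$ and a perturbation of operator norm $O(\epsilon_{\text{overlap}})$, then close with Weyl's inequality. First I would use the Gauss--Newton structure at the ground truth: since $s_{\theta^*}=s^*$, the residual $s_\theta(x,t)-s^*(x,t)$ vanishes at $\theta^*$, so the second-order term drops and $\nabla^2\mathcal L(\theta^*)=2\,\mathbb E_{x}\!\left[J(x)^{\top}J(x)\right]$ with $J(x)=\partial_\theta s_\theta(x,t)\big|_{\theta^*}$. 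From the closed form \eqref{eq:score_asymmetric}, the column block of $J$ attached to $(\mu_{k,l},U_{k,l})$ is, up to uniformly bounded vector-field factors, $r_{k,l}(x)$ times a clean Jacobian plus softmax-derivative pieces carrying a factor $r_{k,l}(x)\bigl(1-r_{k,l}(x)\bigr)$. I then write $H\triangleq\nabla^2\mathcal L(\theta^*)=D+E$, where $D$ is block-diagonal with $l$-th block equal to $(\pi_{k,l}-\epsilon_{k,l}^{\text{total}})$ times the clean separated-case block of \cref{lemma:non_sym_hessian}, and $E$ collects (a) all off-diagonal blocks $2\,\mathbb E[(J^{(i)})^{\top}J^{(j)}]$, $i\ne j$, and (b) the within-block residuals coming from the $r_{k,l}(1-r_{k,l})$ terms. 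The lemma then follows from $\lambda_{\min}(D)=\alpha_{\text{Multi-Modal}}$, $\|E\|_{\mathrm{op}}\le\tilde C\,\epsilon_{\text{overlap}}$, and $\lambda_{\min}(H)\ge\lambda_{\min}(D)-\|E\|_{\mathrm{op}}$.

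That $\lambda_{\min}(D)=\alpha_{\text{Multi-Modal}}$ rests on the exact identity $\mathbb E_{x\sim p_k}[r_{k,l}(x)]=\pi_{k,l}$, which holds because $r_{k,l}(x)\,p_k(x)=\pi_{k,l}\,\mathcal N(x;\mu_{k,l},\Sigma_{k,l})$, together with its consequence $\mathbb E[r_{k,l}(x)^2]=\pi_{k,l}-\mathbb E\bigl[r_{k,l}(x)(1-r_{k,l}(x))\bigr]=\pi_{k,l}-\sum_{j\ne l}\mathbb E[\xi_{l,j}(x)]=\pi_{k,l}-\epsilon_{k,l}^{\text{total}}$: integrating out $x$ turns the contribution of component $l$ into exactly $(\pi_{k,l}-\epsilon_{k,l}^{\text{total}})$ times the clean block, whose eigenvalues are the $H_{\mu_{k,l}\mu_{k,l}}$, $H_{U_{k,l}U_{k,l}}$ of \cref{lemma:non_sym_hessian}; taking the minimum over $l$ and over the two parameter directions gives the claimed $\alpha_{\text{Multi-Modal}}$.

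For $\|E\|_{\mathrm{op}}$, each off-diagonal block factors $r_{k,i}(x)r_{k,j}(x)=\xi_{i,j}(x)$ out of bounded remaining factors, so by Cauchy--Schwarz $\|\mathbb E[(J^{(i)})^{\top}J^{(j)}]\|_{\mathrm{op}}\le C'\,\mathbb E[\xi_{i,j}(x)]\le C'\epsilon_{\text{overlap}}$, and likewise each within-block residual carries a factor $r_{k,l}(1-r_{k,l})=\sum_{j\ne l}\xi_{l,j}$, hence is $O(\epsilon_{\text{overlap}})$; uniform boundedness of these ``remaining factors'' comes from \cref{x in D} and the closed form of $\delta_{k,l,t}$ and its parameter derivatives controlled in \cref{lemma:lipschitz}. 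Summing the operator norms of the $n_k(n_k-1)$ off-diagonal and $n_k$ within-block pieces (a block-Gershgorin estimate) gives $\|E\|_{\mathrm{op}}\le\tilde C\,\epsilon_{\text{overlap}}$ with the explicit $\tilde C$ of \ref{tildecdef} depending on $n_k,R,B_\mu,s_t,\gamma_t$. Weyl's inequality then yields $\lambda_{\min}(H)\ge\alpha_{\text{Multi-Modal}}-\tilde C\,\epsilon_{\text{overlap}}$, which is strictly positive exactly when $\epsilon_{\text{overlap}}$ is small and every $\pi_{k,l}>\epsilon_{k,l}^{\text{total}}$ (otherwise a diagonal block already degenerates). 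Continuity of $\nabla^2\mathcal L$ in $\theta$ then propagates strict positive definiteness from $\theta^*$ to a neighborhood, as in \cref{lemma:strong_convexity} and \cref{lemma:non_sym_convex}, which is the local strong convexity needed for linear convergence.

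\noindent\emph{Main obstacle.} The hard part will be the uniform bookkeeping behind $\|E\|_{\mathrm{op}}$: proving, uniformly in $x$ with $\|x\|\le R$ and over the bounded parameter domain, that every ``bounded vector-field'' factor multiplying $r_{k,i}r_{k,j}$ or $r_{k,l}(1-r_{k,l})$ is genuinely $O(1)$ with an explicitly tracked constant, so that the leading $(\pi_{k,l}-\epsilon_{k,l}^{\text{total}})$ curvature survives both the softmax-derivative corrections inside $D$ and the cross-component interference in $E$. This is precisely where the closed form of $\delta_{k,l,t}$ and the Lipschitz estimates of \cref{lemma:lipschitz} must be deployed with care.
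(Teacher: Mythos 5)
Your proposal is correct and follows essentially the same route as the paper: the same block-diagonal-plus-perturbation decomposition of the Gauss--Newton Hessian, the same key identity $\mathbb E[r_{k,l}^2]=\pi_{k,l}-\epsilon_{k,l}^{\text{total}}$ (via $r_{k,l}^2=r_{k,l}(1-\sum_{j\ne l}r_{k,j})$) to obtain the effective per-block curvature, the same $O(\epsilon_{\text{overlap}})$ bound on the off-diagonal and cross-term interference, and the same application of Weyl's inequality. Your explicit block-Gershgorin summation and the accounting of within-block softmax-derivative residuals are, if anything, slightly more careful than the paper's treatment, but they do not change the argument.
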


\vspace{-5pt}
\section{Conclusion}\label{sec:conclusion}
\vspace{-5pt}
In this work, we provide a mixture of low-rank mixture of Gaussian (MoLR-MoG) modeling for target data, which reflects the low-dimensional and multi-modal property of real-world data. Through the real-world experiments, we first show that the MoLR-MoG is a suitable modeling for the real-world data. Then, we analyze the estimation error and optimization process under the MoLR-MoG modeling and explain why diffusion models can achieve great performance with a small training dataset and a fast optimization process.

For the estimation error, we show that with the MoLR-MoG modeling, the estimation error is $R^4\sqrt{\Sigma_{k=1}^Kn_k}\sqrt{\Sigma_{k=1}^Kn_kd_k}/\sqrt{n}$, which means diffusion models can take fully use of the multi subspace, low-dimensional and multi-modal information to escape the curse of dimensionality.
For the optimization process, we conducted a detailed analysis of the score-matching loss landscape. By formulating the exact score in both symmetric and asymmetric mixture settings, we derived explicit expressions for the parameter Jacobians and identified the dominant components under standard separation assumptions. Then,  we prove that the population loss becomes strongly convex in a neighborhood of the ground truth score function, by estimating the Hessian and presenting lower bounds on both its minimal eigenvalue and the convergence rate. Then, we provide the local convergence guarantee for the score matching objective function, which explains the fast and stable training process of diffusion models.

\textbf{Future work and limitation.}  
Though we have extended the situation to multi-manifold MoG, how to extend the analysis to more general non‑Gaussian sub-manifolds (e.g.\ heavy‑tailed or multi‑modal beyond second moments) by higher‑order moment matching is still unknown.  
Meanwhile, we wish to design optimization algorithms or network architectures that explicitly leverage the block-diagonal Hessian structure for faster training. For example, we can perform a natural‑gradient step separately in each block with a block‑diagonal Hessian with decomposed data, which will accelerate the optimization process. 



{\small
\bibliographystyle{plainnat}
\bibliography{ref}
}
\appendix
\newpage
\onecolumn
\section*{Appendix}

\section{Score Function Error Estimation}
In this part, we analyze the estimation error of diffusion models under the MoLR-MoG modeling and show why models can escape the curse of dimensionality under this modeling. As a start, we first calculate the closed-form of score function under MoLR-MoG modeling. 
\subsection{Calculate $\nabla \log p_t(x)$ and Decomposition}
Consider the $k$-th subspace
\[
p_{t,k}(x) = \sum_{l=1}^{n_k}\pi_{k,l}\mathcal{N}\left(\mu_{k,l}, \Sigma_{k,l}\right)
\] where $\Sigma_{k,l}=s_t^2U_{k,l}U_{k,l}^\top+\gamma_t^2I$. 
We know that 
\begin{align*}
    &\Sigma^{-1}_{k,l} = \frac{1}{\gamma_t^2}\left(I-\frac{s_t^2}{s_t^2+\gamma_t^2}U_{k,l}U_{k,l}^\top\right), \\
    &\nabla p_{t,k}(x)=\frac{1}{\gamma_t^2}  \sum_{l=1}^{n_k}\pi_{k,l}\mathcal{N}(\mu_{k,l}, \Sigma_{k,l})\left(I-\frac{s_t^2}{s_t^2+\gamma_t^2}U_{k,l}U_{k,l}^\top\right)(x-\mu_{k,l})\,,
\end{align*}
which indicates 
\begin{align*}
    \nabla\log p_{t,k}(x)&=\frac{\nabla p_{t,k}(x)}{ p_{t,k}(x)}=\frac{1}{\gamma_t^2}\frac{ \sum_{l=1}^{n_k}\pi_{k,l}\mathcal{N}(\mu_{k,l}, \Sigma_{k,l})\left(I-\frac{s_t^2}{s_t^2+\gamma_t^2}U_{k,l}U_{k,l}^\top)(x-\mu_{k,l}\right)}{ \sum_{l=1}^{n_k}\pi_{k,l}\mathcal{N}(\mu_{k,l}, \Sigma_{k,l})}\,.
\end{align*}
Let
\[
s^*(x, t)= (s_1^*(x, t), s_2^*(x, t), \dots,s_K^*(x, t))\,.
\]
We want to learn the parameters of the score function for each subspace:
\[
s_k^*(x, t) = \nabla \log p_{t,k}(x),
\]
where the parameters are $\{\mu_{k,l}^*\, ,U_{k,l}^*\}, k=1,...,K$.

\noindent Define \[ \mathcal{L}(s_k) = \mathbb{E}\left[\|s_k(x,t) - s_k^*(x,t)\|^2\right],\quad \hat{\mathcal{L}}_n(s_k) = \frac{1}{n} \sum_{i=1}^n \|s_k(x_i,t_i) - s_k^*(x_i,t_i)\|^2 \,.\] We have the following decomposition:
\[
\mathcal{L}(\hat{s}_{k, \hat{\theta}_n}) - \hat{\mathcal{L}}_n(s_{k,\hat{\theta}_n}) =
\underbrace{\mathcal{L}(\hat{s}_{k,\hat{\theta}_n}) - \hat{\mathcal{L}}(s_k^*)}_{\text{Estimation}} +\underbrace{\hat{\mathcal{L}}(s_k^*) - \hat{\mathcal{L}}(s_{k,\theta^*})}_{\text{Approximation}} +
\underbrace{\hat{\mathcal{L}}_n(s_{k,\theta^*})-\hat{\mathcal{L}}_n(\hat{s}_{k,\hat{\theta}_n})}_{\text{optimization}}\,.
\]

We can also obtain that
\[
\mathcal{L}(s)=\sum_{k=1}^K \mathcal{L}(s_k)\,.
\]
Since \emph{Estimation} and \emph{Approximation} reflect the fitting ability of the network, we analyze the first term first. Then, in the next section, we analyze the optimization dynamic.

\subsection{Estimation Error Analysis}
First, we show that $f$ and loss function are Lipschitz. We will first prove that $s_k$ is Lipschitz for $\forall k$, then we can know that $s$ is Lipschitz.
\lemlipschitz*
\begin{proof}
Since we analyze the estimation error at a fixed time $t$, we ignore subscript $t$ for $\Sigma_{k,l,t}$, $w_{k,t}$, $w_{l,k,t}$ and $\delta_{k,l,t}$ and define by
\begin{align*}
\Sigma_{k,l}&=s_t^2U_{k,l}U_{k,l}^\top+\gamma_t^2I\,,\\w_k(x)&=\Sigma_{l=1}^{n_k}\pi_{k,l}\mathcal{N}(x;s_t\mu_{k,l},\Sigma_{k,l})\,,\\w_{k,l}&=\frac{1}{K}\pi_{k,l}\mathcal{N}(x;s_t\mu_{k,l},\Sigma_{k,l})\,,\\\delta_{k,l}(x)&=x+ s_t\mu_{k,l}-\frac{s_t^2}{s_t^2+\gamma_t^2}U_{k,l}U_{k,l}^\top (x + s_t\mu_{k,l})\,.
\end{align*}
 Assume that $ \|U_{k,l}\|_2 \leq B_U,\|\mu_{k,l}\|_2 \leq B_\mu, \max\{B_U, B_\mu\}=C$, and $\|x\|_2 \leq R$ for $\forall x \in X$.

For $\Sigma_{k,l}$, we know that 
\begin{align*}
&\Sigma_{k,l}=U_{k,l}U_{k,l}^\top+\gamma_t^2I \succ \gamma_t^2I\Rightarrow \lambda_{\min}(\Sigma_{k,l}) \geq \gamma_t^2\Rightarrow \|\Sigma_{k,l}^{-1}\|_2 \leq \frac{1}{\gamma_t^2}\,.
\end{align*}
To obtain the first $L$ in this lemma, we need to bound $\left\|\frac{\partial s_{k,\theta}
    (x,t)}{\partial \mu_{k,l}}\right\|_2$ and $\left\|\frac{\partial s_{k,\theta}
    (x,t)}{\partial U_{k,l}}\right\|_2$.
\paragraph{The bound of $\left\|\frac{\partial s_{k,\theta}
    (x,t)}{\partial \mu_{k,l}}\right\|_2$.} 
For the latent score of the $k$-th subspace, we have that 
\begin{align*}
    &s_{k,\theta}(x,t)=-\frac{1}{\gamma_t^2}\frac{\Sigma_{l=1}^{n_k}w_{k,l}(x)\delta_{k,l}(x)}{w_k(x)}\,,
    \\&\frac{\partial s_{k,\theta}(x,t)}{\partial \mu_{k,l}} =-\frac{1}{\gamma_t^2}\frac{\Sigma_{l=1}^{n_k}(\frac{\partial w_{k,l}(x)}{\partial \mu_{k,l}}\delta_{k,l}(x)+\frac{\partial \delta_{k,l}(x)}{\partial \mu_{k,l}}w_{k,l}(x))w_k(x)-\frac{\partial w_k(x)}{\partial \mu_{k,l}}(\Sigma_{l=1}^{n_k}w_{k,l}(x)\delta_{k,l}(x))}{w_k^2(x)}\,,
    \\&\left\|\frac{\partial s_{k,\theta}
    (x,t)}{\partial \mu_{k,l}}\right\|_2 \leq 
    \frac{1}{\gamma_t^2}\left(\left\|\frac{\Sigma_{l=1}^{n_k}(\frac{\partial w_{k,l}(x)}{\partial \mu_{k,l}}\delta_{k,l}(x)+\frac{\partial \delta_{k,l}(x)}{\partial \mu_{k,l}}w_{k,l}(x))}{w_k(x)}\right\|_2+\left\|\frac{\frac{\partial w_k(x)}{\partial \mu_{k,l}}(\Sigma_{l=1}^{n_k}w_{k,l}(x)\delta_{k,l}(x))}{w_k^2(x)}\right\|_2\right)\,.
\end{align*}

To bound this term, we separately show that \[
\begin{aligned}
&(1)w_k(x) 
\;\text{has a lower bound.}\\
&(2)w_{k,l}(x),\,\delta_{k,l}(x),\,
\frac{\partial w_{k,l}(x)}{\partial \mu_k},\,
\frac{\partial \delta_{k,l}(x)}{\partial \mu_k}
  \;\text{have upper bounds.}\\
&(3)\left\|\frac{\frac{\partial w_k(x)}{\partial \mu_{k,l}}\delta_{k,l}(x)}{w_k}\right\|_2,\,\left\|\frac{\Sigma_{l=1}^{n_k}\frac{\partial \delta_{k,l}(x)}{\partial \mu_{k,l}}w_{k,l}(x)}{w_k(x)}\right\|_2,\,\left\|\frac{\frac{\partial w_k(x)}{\partial \mu_{k,l}}\Sigma_{l=1}^{n_k}w_{k,l}(x)\delta_{k,l}(x)}{w_k^2(x)}\right\|_2\;\text{have upper bounds.}
\end{aligned}
\]

(1) $w_k(x)$\;\text{has a lower bound.}
\begin{align*}
w_k(x)=\Sigma_{l=1}^{n_k}\pi_{k,l}\mathcal{N}(x;s_t\mu_{k,l},\Sigma_{k,l}), \text{which is continuous.} 
\end{align*}
Since continuous function has maximum and minimum in a closed internal and $\|x\|_2\leq R$, we can assume that $w_k(x)\geq m_w$. And for any $x$, $w_k(x) >0$, so $m_w>0$ holds.

(2) $w_{k,l}(x),\,\delta_{k,l}(x),\,\frac{\partial \delta_{k,l}(x)}{\partial \mu_k},\,
\frac{\partial w_{k,l}(x)}{\partial \mu_k}
$
\text{have upper bounds.}

We already know that continuous function has maximum and minimum in a closed internal and $\|x\|_2\leq R$. Thus, we can assume that $w_k(x)\leq M_{w_k}$. We also have that
\begin{align*}
    w_k(x)\leq M_{w_k}\leq \Sigma_{l=1}^{n_k} \pi_{k,l} {(2\pi)^{-\frac{n}{2}}}|\Sigma_{k,l}|^{-\frac{1}{2}}\,.
\end{align*}
For the second term, we have that
\begin{align*}
&\delta_{k,l}(x)=x- s_t\mu_{k,l}-\frac{s_t^2}{s_t^2+\gamma_t^2}U_{k,l}U_{k,l}^\top (x - s_t\mu_{k,l})=\left(I-\frac{s_t^2}{s_t^2+\gamma_t^2}U_{k,l}U_{k,l}^\top\right)(x - s_t\mu_{k,l})\,,
\end{align*}
whose $L_2$ norm is bounded by 
\begin{align*}
\left\|\left(I-\frac{s_t^2}{s_t^2+\gamma_t^2}U_{k,l}U_{k,l}^\top\right)(x -s_t\mu_{k,l})\right\|_2\leq 
\|x - s_t\mu_{k,l}\|_2\leq \|x\|_2+\|s_t\mu_{k,l}\|_2\leq R+s_tB_\mu\,.
\end{align*}
Then, for the third term, we know that
\begin{align*}
    \frac{\partial \delta_{k,l}(x)}{\partial \mu_{k,l}}=-s_t+\frac{s_t^3}{s_t^2+\gamma_t^2}U_{k,l}U_{k,l}^\top=-s_t\left(I-\frac{s_t^2}{s_t^2+\gamma_t^2}U_{k,l}U_{k,l}^\top\right)\,.
\end{align*}
For the last term, we have  we have the following expression
\begin{align*}
    \frac{\partial w_{k,l}(x)}{\partial \mu_{k,l}}=-\frac{s_t}{2}\mathcal{N}(x;s_t\mu_{k,l},\Sigma_{k,l})\Sigma_{k,l}^{-1}(x-s_t\mu_{k,l})\,.
\end{align*}
For term $\|\Sigma_{k,l}^{-1}(x-s_t\mu_{k,l})\|_2$, we have that
\begin{align*}
    \|\Sigma_{k,l}^{-1}(x-s_t\mu_{k,l})\|_2\leq \|\Sigma_{k,l}^{-1}\|_2\|x-s_t\mu_{k,l}\|_2=\frac{1}{\gamma_t^2}\|x-s_t\mu_{k,l}\|_2\leq\frac{1}{\gamma_t^2}(R+\|s_t\mu_{k,l}\|_2)\,,
\end{align*}
which indicates
\begin{align*}
     &\left\|\frac{\partial w_{k,l}(x)}{\partial \mu_{k,l}}\right\|_2\leq s_t \mathcal{N}(x;s_t\mu_{k,l},\Sigma_{k,l})\frac{1}{\gamma_t^2}(R+\|s_t\mu_{k,l}\|_2)\leq s_t \mathcal{N}(x;s_t\mu_{k,l},\Sigma_{k,l})\frac{1}{\gamma_t^2}(R+s_tB_\mu)\\
     & \left\|\frac{\partial w_k(x)}{\partial \mu_{k,l}}\right\|_2 \leq \Sigma_{l=1}^{n_k}s_t \mathcal{N}(x;s_t\mu_{k,l},\Sigma_{k,l})\frac{1}{\gamma_t^2}(R+s_tB_\mu).
\end{align*}

(3)$\left\|\frac{\frac{\partial w_k(x)}{\partial \mu_{k,l}}\delta_{k,l}(x)}{w_k}\right\|_2$,\,$\left\|\frac{\Sigma_{l=1}^{n_k}\frac{\partial \delta_{k,l}(x)}{\partial \mu_{k,l}}w_{k,l}(x)}{w_k(x)}\right\|_2$,\,$\left\|\frac{\frac{\partial w_k(x)}{\partial \mu_{k,l}}\Sigma_{l=1}^{n_k}w_{k,l}(x)\delta_{k,l}(x)}{w_k^2(x)}\right\|_2$\;\text{have upper bounds.}

For the first two term,
\begin{align*}
     \left\|\frac{\frac{\partial w_k(x)}{\partial \mu_{k,l}}\delta_{k,l}(x)}{w_k}\right\|_2&\leq\frac{s_t}{\gamma_t^2}(R+s_tB_\mu)^2\,,
\end{align*}
and
\begin{align*}
     &\left\|\frac{\partial \delta_{k,l}(x)}{\partial \mu_{k,l}}\right\|_2=\text{Constant}\leq s_t
     \,,\left\|\frac{\Sigma_{l=1}^{n_k}\frac{\partial \delta_{k,l}(x)}{\partial \mu_{k,l}}w_{k,l}(x)}{w_k(x)}\right\|_2\leq s_t\,.
\end{align*}
For the third term, we know that 
\begin{align*}
    \left\|\frac{\frac{\partial w_k(x)}{\partial \mu_{k,l}}\Sigma_{l=1}^{n_k}w_{k,l}(x)\delta_{k,l}(x)}{w_k^2(x)}\right\|_2 \leq \left\|\frac{s_tw_k^2(x)\frac{s_t}{\gamma_t^2}(R+s_tB_\mu)}{w_k^2(x)}\right\|_2=\frac{s_t^2}{\gamma_t^2}(R+s_tB_\mu)\,.
\end{align*}
Combined with the above three, we obtain the bound for $\left\|\frac{\partial s_{k,\theta}
    (x,t)}{\partial \mu_{k,l}}\right\|_2$:
\begin{align*}
    \left\|\frac{\partial s_{k,\theta}
    (x,t)}{\partial \mu_{k,l}}\right\|_2 &\leq 
    \frac{1}{\gamma_t^2}\left(\left\|\frac{\Sigma_{l=1}^{n_k}(\frac{\partial w_{k,l}(x)}{\partial \mu_{k,l}}+\frac{\partial \delta_{k,l}(x)}{\partial \mu_{k,l}})\delta_{k,l}(x)}{w_k(x)}\right\|_2+\left\|\frac{\frac{\partial w_k(x)}{\partial \mu_{k,l}}(\Sigma_{l=1}^{n_k}w_{k,l}(x)\delta_{k,l}(x))}{w_k^2(x)}\right\|_2\right)\\&\leq \frac{s_t^2}{\gamma_t^2}(R+s_tB_\mu)^2+s_t+\frac{s_t}{\gamma_t^2}(R+s_tB_\mu)=O\left(\frac{s_t^2(R+s_tB_\mu)^2}{\gamma_t^2}\right).
\end{align*}

\paragraph{The bound of $\left\|\frac{\partial s_{k,\theta}
    (x,t)}{\partial U_{k,l}}\right\|_2$.}

Now we compute the part about $U_{k,l}$. Through some simple algebra, we know that
\begin{align*}
    &\frac{\partial s_{k,\theta}(x,t)}{\partial U_{k,l}} =-\frac{1}{\gamma_t^2}\frac{\Sigma_{l=1}^{n_k}(\frac{\partial w_{k,l}(x)}{\partial U_{k,l}} \delta_{k,l}(x)+\frac{\partial \delta_{k,l}(x)}{\partial U_{k,l}} w_{k,l}(x))w_k(x)-\frac{\partial w_k(x)}{\partial U_{k,l}}(\Sigma_{l=1}^{n_k}w_{k,l}(x)\delta_{k,l}(x))}{w_k^2(x)}\,.
\end{align*}
Then, we have the following inequality
\begin{align*}
    &\frac{\partial s_{k,\theta}(x,t)}{\partial U_{k,l}} =-\frac{1}{\gamma_t^2}\frac{\Sigma_{l=1}^{n_k}(\frac{\partial w_{k,l}(x)}{\partial U_{k,l}} \delta_{k,l}(x)+\frac{\partial \delta_{k,l}(x)}{\partial U_{k,l}} w_{k,l}(x))w_k(x)-\frac{\partial w_k(x)}{\partial U_{k,l}}(\Sigma_{l=1}^{n_k}w_{k,l}(x)\delta_{k,l}(x))}{w_k^2(x)}
    \\&\left\|\frac{\partial s_{k,\theta}
    (x,t)}{\partial U_{k,l}}\right\|_2 \leq 
    \frac{1}{\gamma_t^2}\left(\left\|\frac{\Sigma_{l=1}^{n_k}(\frac{\partial w_{k,l}(x)}{\partial U_{k,l}} \delta_{k,l}(x)+\frac{\partial \delta_{k,l}(x)}{\partial U_{k,l}} w_{k,l}(x))}{w_k(x)}\right\|_2+\left\|\frac{\frac{\partial w_k(x)}{\partial U_{k,l}}(\Sigma_{l=1}^{n_k}w_{k,l}(x)\delta_{k,l}(x))}{w_k^2(x)}\right\|_2\right)\,.
\end{align*}
Similar with $\left\|\frac{\partial s_{k,\theta}
    (x,t)}{\partial \mu_{k,l}}\right\|_2$, we need to provide:
    
(1) The upper bound of $\frac{\partial w_{k,l}}{\partial U_{k,l}}$ and $\frac{\partial \delta_{k,l}}{\partial U_{k,l}}$,

(2) The upper bound of $\left\|\frac{\Sigma_{l=1}^{n_k}(\frac{\partial w_{k,l}(x)}{\partial U_{k,l}} \delta_{k,l}(x)+\frac{\partial \delta_{k,l}(x)}{\partial U_{k,l}} w_{k,l}(x))}{w_k(x)}\right\|_2$ and $\left\|\frac{\frac{\partial w_k(x)}{\partial U_{k,l}}*(\Sigma_{l=1}^{n_k}w_{k,l}(x)\delta_{k,l}(x))}{w_k^2(x)}\right\|_2$.

(1) The upper bound of $\frac{\partial w_{k,l}}{\partial U_{k,l}}$ and $\frac{\partial \delta_{k,l}}{\partial U_{k,l}}$.

For the first term, we have the following form 
\begin{align*}
    \frac{\partial w_{k,l}}{\partial U_{k,l}}&= \pi_{k,l}\frac{\partial \mathcal{N}(x;s_t\mu_{k,l}, \Sigma_{k,l})}{\partial U_k}\\
    &= 2\pi_{k,l}s_t^2[ \mathcal{N}(x;s_t\mu_{k,l}, \Sigma_{k,l})(\Sigma_k^{l^{-1}}(x-s_t\mu_{k,l})(x-s_t\mu_{k,l})^\top\Sigma_{k,l}^{{-1}}-\Sigma_{k,l}^{-1} )]U_{k,l}\,.
\end{align*}
Then, we know that
\begin{align*}
    \left\|\frac{\partial w_{k,l}}{\partial U_{k,l}}\right\|_2&\leq 2\pi_{k,l}\mathcal{N}(x;s_t\mu_{k,l}, \Sigma_{k,l})s_t^2(\frac{(R+s_t\|\mu_{k,l}\|_2)^2}{\gamma_t^4}+\frac{1}{\gamma_t^2})\\&\leq 2\pi_{k,l}\mathcal{N}(x;s_t\mu_{k,l}, \Sigma_{k,l})s_t^2(\frac{(R+s_tB_\mu)^2}{\gamma_t^4}+\frac{1}{\gamma_t^2})\,.
\end{align*}
For the second term, we have that
\begin{align*}
    &\frac{\partial \delta_{k,l}(x)}{\partial U_{k,l}}=-2\frac{s_t^2}{s_t^2+\gamma_t^2}(U_{k,l}^\top(x-s_t\mu_{k,l})I+U_{k,l}(x-s_t\mu_{k,l})^{\top})\,,
\end{align*}
which indicates
\begin{align*}
    \left\|\frac{\partial \delta_{k,l}(x)}{\partial U_{k,l}}\right\|_2&\leq2\frac{s_t^2}{s_t^2+\gamma_t^2}(R+\|s_t\mu_{k,l}\|_2)\leq 2(R+\|s_t\mu_{k,l}\|_2)
    \\&\leq 2(R+s_tB_\mu)\,.
\end{align*}

(2) The upper bound of $\left\|\frac{\Sigma_{l=1}^{n_k}(\frac{\partial w_{k,l}(x)}{\partial U_{k,l}} \delta_{k,l}(x)+\frac{\partial \delta_{k,l}(x)}{\partial U_{k,l}} w_{k,l}(x))}{w_k(x)}\right\|_2$ and $\left\|\frac{\frac{\partial w_k(x)}{\partial U_{k,l}}*(\Sigma_{l=1}^{n_k}w_{k,l}(x)\delta_{k,l}(x))}{w_j^2(x)}\right\|_2$.

\[
    \left\|\frac{\Sigma_{l=1}^{n_k}(\frac{\partial w_{k,l}(x)}{\partial U_{k,l}} \delta_{k,l}(x)+\frac{\partial \delta_{k,l}(x)}{\partial U_{k,l}} w_{k,l}(x))}{w_k(x)}\right\|_2 \leq s_t^2(\frac{(R+s_tB_\mu)^3}{\gamma_t^4}+\frac{1}{\gamma_t^2})+2(R+s_tB_\mu)
\]
We also have
\begin{align*}
\left\|\frac{\frac{\partial w_k(x)}{\partial U_{k,l}}(\Sigma_{l=1}^{n_k}w_{k,l}(x)\delta_{k,l}(x))}{w_k^2(x)}\right\|_2 \leq s_t^2(\frac{(R+s_tB_\mu)^2}{\gamma_t^4}+\frac{1}{\gamma_t^2})(R+s_tB_\mu)
\end{align*}
\begin{align*}
    \left\|\frac{\partial s_{k,\theta}(x,t)}{\partial U_{k,l}}\right\|_2 &\leq s_t^2\left(\frac{(R+s_tB_\mu)^2}{\gamma_t^4}+\frac{1}{\gamma_t^2}\right)+2(R+s_tB_\mu)+s_t^2\left(\frac{(R+s_tB_\mu)^2}{\gamma_t^4}+\frac{1}{\gamma_t^2}\right)(R+s_tB_\mu)\\
    &=O\left(\frac{(R+s_tB_\mu)^3s_t^2}{\gamma_t^4}\right)\,.
\end{align*}
Therefore, $s_{\theta,k}$ is $L_k$-lipshiz, where 
$$
L_k \leq \sqrt{n_k(L_{\mu_{k,l}}^2+L_{U_{k,l}}^2)}=O\left(n_k^\frac{1}{2}\frac{(R+s_tB_\mu)^3s_t^2}{\gamma_t^4}\right)\,.
$$

Furthermore, we know that 
\begin{align*}
\left\|s_{\theta}(x)-s_{\theta}(y)\right\|_2 &= \left(\sum_{i=1}^K\left\|s_{\theta,i}(x^{(i)})-s_{\theta,i}(y^{(i)}))\right\|^2\right)^\frac{1}{2}\leq  (\sum_{i=1}^KL_i\|(x^{(i)}-y^{(i)}\|_2^2)^\frac{1}{2}\leq \sqrt{\sum_{i=1}^kL_i^2}\|x-y\|_2\,.
\end{align*}

Thus,
\[
L = \sqrt{\sum_{i=1}^kL_i^2} = O\left(\sqrt{\sum_{i=1}^kn_i^\frac{1}{2}}\frac{(R+s_tB_\mu)^3s_t^2}{\gamma_t^4}\right)\,.
\]

After obtaining the Lipschitz constant for $s_{\theta}$, we bound the gap between $s_{\theta}$ and $s^*$:
\begin{align*}
    &\nabla{\log}\, p_{t,k}(x)=-\frac{1}{\gamma_t^2}\frac{\Sigma_{l=1}^{n_k}\pi_{k,l}\mathcal{N}(x;s_t\mu_{k,l},s_t^2U_{k,l}^\star U_{k,l}^{\star\top}+\gamma_t^2I)\left(x-s_t\mu_{k,l}-\frac{s_t^2}{s_t^2+\gamma_t^2}U_{k,l}^\star U_{k,l}^{\star\top}(x-s_t\mu_{k,l})\right)}{\Sigma_{l=1}^{n_k}\pi_{k,l}\mathcal{N}(x;s_t\mu_{k,l},s_t^2U_{k,l}^\star U_{k,l}^{\star\top}+\gamma_t^2I)}.\\
\end{align*}
With the following bound
\[
\left\|x-s_t\mu_{k,l}-\frac{s_t^2}{s_t^2+\gamma_t^2}U_{k,l}^\star U_{k,l}^{\star\top}(x-s_t\mu_{k,l})\right\|_2 \leq R +s_tB_\mu\,,
\]
we have that
\begin{align*}
    &\| \nabla{\log}\, p_{t,k}(x)\|_2\leq \frac{1}{\gamma_t^2}(R +s_tB_\mu)\,, \text{and }\|s_{k,\theta}(x)\|_2\leq \frac{1}{\gamma_t^2}(R +s_tB_\mu)\,,
\end{align*}
which indicates 
\begin{align*}
\|s_{k,\theta}(x)-\nabla\log p_{t,k}(x)\|_2 \leq \frac{2}{\gamma_t^2}(R +s_tB_\mu)\,.
\end{align*}
Hence, we obtain that 
\[L_l\leq 2\|s_{k,\theta}(x)-\nabla\log p_{t,k}(x)\|_2=O(R +s_tB_\mu) \,.\]
\end{proof}

\begin{restatable}{lemma}{lemRademacher}[Rademacher Complexity]\label{lemma:rademacher}
Let \(\mathcal F=\{\ell(\theta;\cdot,\cdot)\colon\theta\in\Theta\}\) and suppose \(\Theta\) has diameter \(R_\Theta\).  Then the empirical Rademacher complexity satisfies
\[
\widehat{\mathcal R}_n(\mathcal F)
\;=\;O\!\Bigl(L^\prime\sqrt\frac{p}{n }\Bigr).
\]

\end{restatable}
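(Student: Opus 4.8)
The plan is to control $\widehat{\mathcal R}_n(\mathcal F)$ by a chaining (Dudley entropy integral) argument, exploiting the fact established in \cref{lemma:lipschitz} that the per-sample loss $\ell(\theta;x,t)=\|s_\theta(x,t)-s^*(x,t)\|_2^2$ is $L^\prime=L_lL$-Lipschitz in $\theta$, \emph{uniformly} over the bounded support $\{\|x\|_2\le R\}$. The key reduction is that, for any sample $x_1,\dots,x_n$ and any $\theta,\theta^\prime\in\Theta$,
\[
\|\ell(\theta;\cdot,t)-\ell(\theta^\prime;\cdot,t)\|_n\;\le\;L^\prime\|\theta-\theta^\prime\|_2,
\qquad \|g\|_n^2:=\tfrac1n\textstyle\sum_i g(x_i)^2,
\]
so the empirical $L_2$ metric on the loss class $\mathcal F$ is dominated by the Euclidean metric on $\Theta$ scaled by $L^\prime$; in particular $\mathcal F$ has $\|\cdot\|_n$-diameter at most $L^\prime R_\Theta$.

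First I would pass from covering $\Theta$ to covering $\mathcal F$. Since $\Theta\subset\mathbb{R}^p$ has diameter $R_\Theta$, a standard volumetric bound gives an $(\tau/L^\prime)$-net of $\Theta$ of cardinality at most $(3L^\prime R_\Theta/\tau)^p$, and by the displayed Lipschitz estimate the images of the net points form a $\tau$-cover of $\mathcal F$ in $\|\cdot\|_n$. Hence for all $\tau\le L^\prime R_\Theta$,
\[
\log N_2(\mathcal F,\tau,\|\cdot\|_n)\;\le\;p\,\log\!\bigl(3L^\prime R_\Theta/\tau\bigr).
\]

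Next I would invoke Dudley's entropy integral, $\widehat{\mathcal R}_n(\mathcal F)\le \frac{C}{\sqrt n}\int_0^{L^\prime R_\Theta}\sqrt{\log N_2(\mathcal F,\tau,\|\cdot\|_n)}\,d\tau$, where the lower endpoint may be taken to be $0$ because the integrand behaves like $\sqrt{\log(1/\tau)}$, which is integrable at the origin. Substituting the covering bound and rescaling $\tau=L^\prime R_\Theta u$ reduces the integral to $L^\prime R_\Theta\sqrt p\int_0^1\sqrt{\log(3/u)}\,du$, whose remaining integral is an absolute constant. This yields $\widehat{\mathcal R}_n(\mathcal F)=O\!\bigl(L^\prime R_\Theta\sqrt{p/n}\bigr)$; the parameter-set diameter $R_\Theta$ enters only as a multiplicative constant here (it reappears explicitly as $C_1$ in \cref{thm:estimation_error}) and is absorbed into the $O(\cdot)$, giving the stated $O(L^\prime\sqrt{p/n})$.

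The two points that require care — rather than posing a genuine obstacle — are: (i) checking that the Lipschitz constant $L^\prime$ from \cref{lemma:lipschitz} is truly uniform in $x$ on the support, so that it controls the \emph{empirical} $L_2$ metric regardless of the realized sample (this uses \cref{x in D}); and (ii) verifying the convergence of the entropy integral at $0$, so that no truncation level $\alpha>0$ and hence no extra additive term is needed. An alternative to chaining is a single-net discretization combined with Massart's finite-class lemma, but that produces a spurious $\sqrt{\log n}$ factor, so the Dudley argument is the one to use for the clean $\sqrt{p/n}$ rate.
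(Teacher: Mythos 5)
Your proposal is correct and follows essentially the same route as the paper: a covering of the parameter space $\Theta$ transferred to the function class via the Lipschitz bounds of \cref{lemma:lipschitz}, followed by Dudley's entropy integral. The only cosmetic difference is that the paper first bounds the Rademacher complexity of the score class and then applies the contraction principle with the factor $L_l$, whereas you fold $L_l$ into the parameter-Lipschitz constant $L^\prime=L_lL$ and cover the loss class directly; both give the same $O(L^\prime\sqrt{p/n})$ bound.
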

\begin{proof}
Let function class be $\mathcal{F}= \{s_{\theta}(x):\theta=(\{\{\mu_{k,l}, U_{k,l}\}_{l=1}^{n_k}\}_{k=1}^K)\in \Theta$\} with $\mu_{k,l} \in \mathbb{R}^d, U_{k,l} \in \mathbb{R}^{d}$. We know that the number of parameters 
$$
p = \Sigma_{k=1}^{K}n_k(d+d) = 2\Sigma_{k=1}^{K}n_kd_k.
$$
And the covering number of the parameter space is 
$$
\mathcal{N}(\epsilon, \Theta, \|\cdot\|_2) \leq \left(\frac{C}{\epsilon}\right)^p
$$

If $f$ is $L$-lipschitz, we know that
\begin{align*}
&\forall \theta_1, \theta_2 \in \Theta, \|f_{\theta_1}-f_{\theta_2}\|_{L_2(p)}\leq L\|\theta_1-\theta_2\|_2
\qquad and \qquad
\forall \theta, \;\exists\theta_j, \;s.t. \|\theta-\theta_j\|_2\leq \frac{\epsilon}{L} \\ &\Rightarrow \|f_{\theta}-f_{\theta_j}\|_{L_2(p)}\leq
L\|\theta-\theta_j\|_2\leq \epsilon.
\end{align*}
Thus, assume that $\|\theta_i-\theta_j\|_2\leq C_1$ for any $\theta_i$,$\theta_j\in \Theta$
\begin{align*}
    &\mathcal{N}\left(\epsilon, \Theta, \|\cdot\|_2\right) \leq \left(\frac{C_1}{\epsilon}\right)^p
    \\ &\Rightarrow \mathcal{N}\left(\frac{\epsilon}{L}, \Theta, \|\cdot\|_2\right) \leq \left(\frac{C_1 L}{\epsilon}\right)^p
    \\ &\Rightarrow \mathcal{N}\left(\epsilon, \mathcal{F}, \|\cdot\|_{L_2(p)}\right)
    \leq \mathcal{N}\left(\frac{\epsilon}{L}, \Theta, \|\cdot\|_2\right)
    \leq \left(\frac{C_1 L}{\epsilon}\right)^p\,.
\end{align*}
We also know that
$
\text{diam}(\mathcal{F})\leq L \,\text{diam}(\Theta)=C_1L
$,
with Dudley integral, we have
\begin{align*}
\mathcal{R}_n(\mathcal{F})&\leq \frac{12}{\sqrt{n}}\int_0^{\text{diam}(\mathcal{F})}
\sqrt{\log N(\epsilon, \mathcal{F}, \|\cdot\|_{L_2(p)})}d\epsilon\\
&\leq  \frac{12}{\sqrt{n}}\int_0^{C_1L}
\sqrt{p\log(\frac{C_1L}{\epsilon})}d\epsilon\\
&\leq \frac{12}{\sqrt{n}}\int_0^\infty pCL\sqrt{t}\exp(-t)dt=\frac{6\sqrt{\pi p}}{\sqrt{n}}C_1L=O(C_1L\sqrt{\frac{p}{n}}).
 \end{align*}
 We take the squared loss function.
 $$
\mathcal{R}_n(\mathcal{L})\leq L_{l}\mathcal{R}_n(\mathcal{F})=O\left(C_1L_{l}L\sqrt{\frac{p}{n}}\right).
 $$
 \end{proof}
 \highprobabilityEstimation*
 \begin{proof}
 Using Lemma \ref{lemma:rademacher}, since
 \[
 L_{l}\mathcal{R}_n(\mathcal{F})=O(C_1L_{l}L\sqrt{\frac{p}{n}}).
 \]
 We have
 \begin{align*}
&\Delta = \underset{\theta \in \Theta}{sup}|\hat{L}(\theta)-L(\theta)|=O(C_1L_lL\sqrt{\frac{p}{n}})
 \end{align*}
 Thus, by taking the expectation on both sides, we have:
 \begin{align*}
     \mathbb{E}[\Delta]=O(C_1L_lL\sqrt{\frac{p}{n}}).
 \end{align*}
 By Bernstein inequality,let $\sigma^2=\underset{\theta\in\Theta}{sup}\mathrm{Var}[l(X;\theta)]$,we know that 
\begin{align*}
Pr(\underset{\theta \in \Theta}{sup}|\hat{L}(\theta)-L(\theta)|\geq \mathbb{E}[\Delta]+\epsilon)\leq 2\exp(-\frac{n\epsilon^2}{2(\sigma^2+L_{l}LC_1\epsilon/3)})\leq 2\exp(-\frac{n\epsilon^2}{3\sigma^2}).
 \end{align*}
 Let $2\exp(-\frac{n\epsilon^2}{3\sigma^2}) < \delta$, we can obtain that
 \begin{align*}
     Pr(\underset{\theta \in \Theta}{sup}|\hat{L}(\theta)-L(\theta)|\geq C_1LL_{l}\sqrt{\frac{p}{n}}+C_2\sqrt{ \frac{\log(1/\delta)}{n}}) \leq \delta.
 \end{align*}
 \end{proof}

\subsection{Approximation}
Since our network can represent $\nabla\log p(x)$ strictly, we have
\[
\text{Approximation Error} = 0
\]

\section{2-Mode MoG Optimization}

In this section, we analyze the optimization process when the latent distribution at each subspace is $2$-mode MoG. In the next part, we prove the extension to multi-mode MoG and show how to remove the highly separated Gassuian assumption.
\begin{align*}
\nabla{\log}\, p_{t,k}(x)=\frac{\nabla  p_{t,k}(x)}{p_{t,k}(x)}
=-\frac{1}{\gamma_t^2}\frac{\substack{\frac{1}{2}\mathcal{N}(x;s_t\mu_k,s_t^2U_k^\star U_k^{\star\top}+\gamma_t^2I)\left(x-s_t\mu_k-\frac{s_t^2}{s_t^2+\gamma_t^2}U_k^\star U_k^{\star\top}(x-s_t\mu_k)\right)\\+\frac{1}{2}\mathcal{N}(x;-s_t\mu_k,s_t^2U_k^\star U_k^{\star\top}+\gamma_t^2I)\left(x+s_t\mu_k-\frac{s_t^2}{s_t^2+\gamma_t^2}U_K^\star U_K^{\star\top}(x+s_t\mu_k)\right)}}{\frac{1}{2}\mathcal{N}(x;s_t\mu_k,s_t^2U_k^\star U_k^{\star\top}+\gamma_t^2I)+\frac{1}{2}\mathcal{N}(x;-s_t\mu_k,s_t^2U_k^\star U_k^{\star\top}+\gamma_t^2I)},
\end{align*}

which can be reduced to
\begin{equation}
\nabla\log p_{t,k}(x)
=-\frac1{\gamma_t^2}\,
\frac{\tfrac12\mathcal{N}(x;s_t\mu_k,\Sigma_k)\,\delta^\prime_k(x)
                                      +\tfrac12\mathcal{N}(x;-s_t\mu_k,\Sigma_k)\,\epsilon_k(x)}
     {\tfrac12\mathcal{N}(x;s_t\mu_k,\Sigma_k)+\tfrac12\mathcal(x;-s_t\mu_k,\Sigma_k)},
\end{equation}
where   
\(\epsilon_k(x)=x- s_t\mu_k-\tfrac{s_t^2}{s_t^2+\gamma_t^2}U_k^*U_k^{*\top}(x- s_t\mu_k)\),\,and  \(\delta^\prime_k(x)=x+ s_t\mu_k-\tfrac{s_t^2}{s_t^2+\gamma_t^2}U_k^*U_k^{*\top}(x+ s_t\mu_k)\).

\subsection{Optimization}

\Separationwithin*

In the following discussion, we assume that $x\in  k$-th manifold, which means that $w_i(x) = 0$ if $i \neq k$.

\begin{restatable}{lemma}{JacobianSimplification}[Jacobian Simplification]\label{lemma:jacobian_simplify}
Under Assumption \ref{assump:separation within a cluster}, in a neighborhood of $\theta^*$ the first derivatives simplify to their “self‐cluster’’ terms: $ J_k^\mu(x)\;=\;\partial_{\mu_k}s_\theta\;\approx\;s_t(I-\alpha P_k)/\gamma_t^2$, and 
\[
J_k^U(x)\approx\frac{2s_t^2}{\gamma_t^2(s_t^2+\gamma_t^2)}(r_k^-(x)(U_k^\top(x+s_t\mu_k)I+(x+s_t\mu_k)U_k^\top)+r_k^+(x)(U_k^\top(x-s_t\mu_k)I+U_k(x-s_t\mu_k)^\top))\,.
\]
\end{restatable}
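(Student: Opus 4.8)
\emph{Proof plan.} The first move is to write the two-mode score in responsibility form. Put
\[
r_k^{+}(x)=\frac{\tfrac12\mathcal{N}(x;s_t\mu_k,\Sigma_k)}{\tfrac12\mathcal{N}(x;s_t\mu_k,\Sigma_k)+\tfrac12\mathcal{N}(x;-s_t\mu_k,\Sigma_k)},\qquad r_k^{-}(x)=1-r_k^{+}(x),
\]
and set $v_k^{\pm}(x)=(I-\alpha P_k)(x\mp s_t\mu_k)$ with $\alpha=s_t^2/(s_t^2+\gamma_t^2)$ and $P_k=U_kU_k^{\top}$, so that $s_\theta(x,t)=-\tfrac1{\gamma_t^2}\bigl(r_k^{+}(x)v_k^{+}(x)+r_k^{-}(x)v_k^{-}(x)\bigr)$. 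Differentiating in $\mu_k$ and in $U_k$ with the product rule splits each Jacobian into a \emph{residual-derivative} part $r_k^{+}\partial_\theta v_k^{+}+r_k^{-}\partial_\theta v_k^{-}$ and a \emph{responsibility-derivative} part $(\partial_\theta r_k^{+})v_k^{+}+(\partial_\theta r_k^{-})v_k^{-}$. The lemma asserts that only the residual-derivative part survives near $\theta^*$, so the plan is (i) to kill the responsibility-derivative part using \cref{assump:separation within a cluster}, then (ii) to compute the residual-derivative part in closed form.

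For step (i), note that with $g_k(x)=\log\mathcal{N}(x;s_t\mu_k,\Sigma_k)-\log\mathcal{N}(x;-s_t\mu_k,\Sigma_k)=2s_t\,x^{\top}\Sigma_k^{-1}\mu_k$ one has $r_k^{+}=1/(1+e^{-g_k})$, hence $\partial_\theta r_k^{+}=-\partial_\theta r_k^{-}=r_k^{+}(x)r_k^{-}(x)\,\partial_\theta g_k$. A one-line computation gives $\partial_{\mu_k}g_k=2s_t\Sigma_k^{-1}x$ and a similar bilinear expression for $\partial_{U_k}g_k$; bounding both with $\|\Sigma_k^{-1}\|_2\le\gamma_t^{-2}$, \cref{x in D} and $\|\mu_k\|_2\le B_\mu$, and using the crude bounds $\|v_k^{\pm}(x)\|_2\le R+s_tB_\mu$ from \cref{lemma:lipschitz}, the responsibility-derivative contribution to either Jacobian is $O\!\bigl(r_k^{+}(x)r_k^{-}(x)\,C_w\bigr)$, with $C_w$ the constant of \cref{lemma:lipschitz}. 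By \cref{assump:separation within a cluster}, whenever $x$ lies within a few multiples of $\gamma_t$ of one of the two peaks — an event of probability $1-O\!\bigl(e^{-\Delta_{\rm intra}^2/(2\gamma_t^2)}\bigr)$ under $p_{t,k}$ — the smaller of $r_k^{\pm}(x)$ is $O\!\bigl(e^{-\Delta_{\rm intra}^2/(2\gamma_t^2)}\bigr)$, so $r_k^{+}r_k^{-}$ is exponentially small and these terms drop. Since $g_k$, and therefore $r_k^{\pm}$, depends continuously on $\theta$, the same estimate persists with a mildly enlarged constant on a neighborhood $U$ of $\theta^*$ chosen small enough that the per-cluster separation is not more than halved.

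For step (ii), direct differentiation gives $\partial_{\mu_k}v_k^{\pm}=\mp s_t(I-\alpha P_k)$, hence $J_k^\mu(x)\approx\tfrac{s_t}{\gamma_t^2}\bigl(r_k^{+}(x)-r_k^{-}(x)\bigr)(I-\alpha P_k)$; on the high-probability region $r_k^{+}-r_k^{-}$ equals $+1$ for $x$ near the $+$ peak (and $-1$ near the $-$ peak) up to exponentially small error, which is the asserted form $s_t(I-\alpha P_k)/\gamma_t^2$. Similarly $\partial_{U_k}v_k^{\pm}=-\tfrac{2s_t^2}{s_t^2+\gamma_t^2}\bigl(U_k^{\top}(x\mp s_t\mu_k)I+U_k(x\mp s_t\mu_k)^{\top}\bigr)$ (the form already recorded in \cref{lemma:lipschitz}), and plugging these into $-\tfrac1{\gamma_t^2}\bigl(r_k^{+}\partial_{U_k}v_k^{+}+r_k^{-}\partial_{U_k}v_k^{-}\bigr)$ reproduces exactly the stated expression for $J_k^U(x)$.

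The step I expect to be the main obstacle is the uniform control in step (i). The simplification genuinely breaks down on the decision set $\{x:\|x-s_t\mu_k\|_2=\|x+s_t\mu_k\|_2\}$, where $r_k^{+}r_k^{-}\to\tfrac14$; thus "$\approx$" here can only mean "up to an error exponentially small in $\Delta_{\rm intra}^2/\gamma_t^2$, off an exponentially thin slab, and uniformly over $\theta\in U$". Pinning this down requires pairing the pointwise bound above with \cref{assump:separation within a cluster} to discard that slab, and checking that the separation hypothesis is stable under the admissible $\theta$-perturbations — i.e.\ that $\Delta_{\rm intra}$, a continuous function of $\mu_k$, stays comfortably above $\gamma_t$ throughout $U$. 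The rest is routine differentiation and norm bookkeeping.
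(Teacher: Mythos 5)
Your proposal is correct and follows essentially the same route as the paper: the paper's quotient-rule split into ``Term A'' (self-cluster) and ``Term B'' (which it shows is proportional to $w_k^+w_k^-$, i.e.\ to $r_k^+(x)r_k^-(x)$, and hence exponentially small under Assumption \ref{assump:separation within a cluster}) is algebraically the same as your residual-derivative / responsibility-derivative decomposition, and the closed forms you obtain by direct differentiation match the paper's. Your added remarks about the decision-boundary slab where $r_k^+r_k^-\to\tfrac14$ and the stability of the separation over the neighborhood $U$ are a more careful reading of what ``$\approx$'' must mean than the paper itself provides, but they do not change the argument.
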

\begin{proof}
\begin{align*}
    &J_k^\mu \\&= -\frac{1}{\gamma_t^2}\frac{\big(\substack{\frac{\partial w_k^-(x)}{\partial \mu_k}\delta^\prime_k(x)+\frac{\partial w_k^+(x)}{\partial \mu_k}\epsilon_k(x) +\frac{\partial \delta^\prime_k(x)}{\partial \mu_k}w_k^-(x)+\frac{\partial \epsilon_k(x)}{\partial \mu_k}w_k^+(x)\big)\Sigma_{k=1}^Kw_k(x)-\Sigma_{k=1}^K\frac{\partial w_k(x)}{\partial \mu_k}\Sigma_{k=1}^K(w_k^-(x)\delta^\prime_k(x)+w_k^+(x)\epsilon_k(x))}}{w_k^2(x))}\\
    &= \underbrace{\frac{w_k^-(x)\frac{\partial \delta^\prime_k(x)}{\partial \mu_k}+w_k^+(x)\frac{\partial \epsilon_k(x)}{\partial \mu_k}}{\gamma_t^2w_k(x)}-\frac{\frac{\partial w_k^-(x)}{\partial \mu_k}\delta^\prime_k(x)+\frac{\partial w_k^+(x)}{\partial \mu_k}\epsilon_k(x)}{\gamma_t^2w_k(x)}}_{\text{Term } A}\\&+\underbrace{\frac{\frac{\partial w_k(x)}{\partial \mu_k}(w_k^-(x)\delta^\prime_k(x)+w_k^+\epsilon_k(x))}{\gamma_t^2w_k^2(x)}}_{\text{Term } B}.
\end{align*}

We will now prove that term B can be ignored compared to term A under our assumptions. 

For term B, we have 
\begin{align*}
    &\frac{\frac{\partial w_k(x)}{\partial \mu_k}(w_k^-(x)\delta^\prime_k(x)+w_k^+\epsilon_k(x))}{\gamma_t^2w_k^2(x)}-\frac{\frac{\partial w_k^-(x)}{\partial \mu_k}\delta^\prime_k(x)+\frac{\partial w_k^+(x)}{\partial \mu_k}\epsilon_k(x)}{\gamma_t^2w_k(x)}\\&=\frac{1}{\gamma_t^2w_k^2(x)}(\frac{\partial w_k(x)}{\partial \mu_k}(w_k^-(x)\delta^\prime_k(x)+w_k^+(x)\epsilon_k(x))-w_k(x)(\frac{\partial w_k^-(x)}{\partial \mu_k}\delta^\prime_k(x)+\frac{\partial w_k^+(x)}{\partial \mu_k}\epsilon_k(x)))\\
    &=\frac{1}{\gamma_t^2w_k^2(x)}(\frac{\partial w_k^+(x)}{\partial \mu_k}w_k^-(x)\delta^\prime_k(x)+\frac{\partial w_k^-(x)}{\partial \mu_k}w_k^+(x)\epsilon_k(x)-w_k^+(x)\frac{\partial w_k^-(x)}{\partial \mu_k}\delta^\prime_k(x)-w_k^-(x)\frac{\partial w_k^+(x)}{\partial \mu_k}\epsilon_k(x))\\
    &=\frac{1}{\gamma_t^2w_k^2(x)}(\frac{\partial w_k^+}{\partial\mu_k}w_k^- -\frac{\partial w_k^-}{\partial\mu_k}w_k^+)(\epsilon_k(x)-\delta^\prime_k(x))\\
    &=-\frac{2}{\gamma_t^2w_k^2(x)}(\frac{\partial w_k^+}{\partial\mu_k}w_k^- -\frac{\partial w_k^-}{\partial\mu_k}w_k^+)\left(I+\frac{s_t^2}{s_t^2+\gamma_t^2}U_kU_k^\top\right)s_t\mu_k\\
    &=-\frac{4}{\gamma_t^2w_k^2(x)}s_t^2w_k^-w_k^+\Sigma_k^{-1}x\left(I+\frac{s_t^2}{s_t^2+\gamma_t^2}U_kU_k^\top\right)\mu_k=
    O\left(\frac{r_k^+r_k^-}{\gamma_t^4}s_t\|\mu_k\|_2\|x\|_2\right).
\end{align*}
And for term A, we have
\begin{align*}
    \frac{w_k^-(x)\frac{\partial \delta^\prime_k(x)}{\partial \mu_k}+w_k^+(x)\frac{\partial \epsilon_k(x)}{\partial \mu_k}}{\gamma_t^2w_k(x)}=O\left(\frac{s_t\|\mu_k\|_2}{\gamma_t^2}|w_k^+-w_k^-|\right).
\end{align*}
Thus,
\begin{align*}
     \frac{ O\left(\frac{r_k^+r_k^-}{\gamma_t^4}s_t\|\mu_k\|_2\|x\|_2\right)}{O\left(\frac{s_t\|\mu_k\|_2}{\gamma_t^2}|w_k^+-w_k^-|\right)}=O\left(\frac{r_k^+r_k^-w_k\|x\|_2}{\gamma_t^2|r_k^+-r_k^-|}\right)=O\left(\frac{r_k^+r_k^-w_k\|x\|_2}{\gamma_t^2}\right)\rightarrow 0.
\end{align*}
Thus, $J_k^\mu \approx -\frac{1}{\gamma_t^2}(r_k^{+}(x)\frac{\partial \delta^\prime_k(x)}{\partial \mu_k}+r_k^{-}(x)\frac{\partial \epsilon_k(x)}{\partial \mu_k})=-\frac{s_t}{\gamma_t^2}(r_k^{+}(x)-r_k^{-}(x))\left(I-\frac{s_t^2}{s_t^2+\gamma_t^2}U_kU_k^\top\right). $


We will analyze $J_k^U$ now.  Recall that
\begin{align*}
    J_k^U&=-\frac{1}{\gamma_t^2}\frac{\substack{(\frac{\partial w_k^-(x)}{\partial U_k}\delta^\prime_k(x) +\frac{\partial \delta^\prime_k(x)}{\partial U_k}w_k^-(x)+\frac{\partial \epsilon_k(x)}{\partial U_k}w_k^+(x)+\frac{\partial w_k^+(x)}{\partial U_k}\epsilon_k(x))w_k(x)-\frac{\partial w_k(x)}{\partial U_k}(w_k^-(x)\delta^\prime_k(x)+w_k^+\epsilon_k(x))}}{w_k^2(x)}\\
    &=-\frac{1}{\gamma_t^2}\Bigg(\frac{\frac{\partial \delta^\prime_k(x)}{\partial U_k}w_k^-(x)+\frac{\partial \epsilon_k(x)}{\partial U_k}w_k^+(x)}{w_k(x)}\\&+\frac{\frac{\partial w_k^-(x)}{\partial U_k}\delta^\prime_k(x)+\frac{\partial w_k^+(x)}{\partial U_k}\epsilon_k(x)}{w_k(x)}-\frac{\frac{\partial w_k(x)}{\partial U_k}(w_k^-(x)\delta^\prime_k(x)+w_k^+\epsilon_k(x))}{w_k^2(x)}\Bigg).
\end{align*}
By calculating, we have
\begin{align*}
&\frac{\frac{\partial w_k^-(x)}{\partial U_k}\delta_k(x)+\frac{\partial w_k^+(x)}{\partial U_k}\epsilon_k(x)}{w_k(x)}-\frac{\frac{\partial w_k(x)}{\partial U_k}(w_k^-(x)\delta^\prime_k(x)+w_k^+\epsilon_k(x))}{w_k^2(x)}\\&=\frac{1}{w_k^2(x)}(({w_k(x)(\frac{\partial w_k^-(x)}{\partial U_k}\delta^\prime_k(x)+\frac{\partial w_k^+(x)}{\partial U_k}\epsilon_k(x))}-\frac{\partial w_k(x)}{\partial U_k}(w_k^-(x)\delta^\prime_k(x)+w_k^+\epsilon_k(x)))\\
&=\frac{1}{w_k^2(x)}(w_k(x)(\frac{\partial w_k^-(x)}{\partial U_k}\delta^\prime_k(x)+\frac{\partial w_k^+(x)}{\partial U_k}\epsilon_k(x))-\frac{\partial w_k(x)}{\partial U_k}(w_k^-(x)\delta^\prime_k(x)+w_k^+\epsilon_k(x)))\\
&=\frac{1}{w_k^2(x)}(\frac{\partial w_k^+}{\partial U_k}w_k^- -\frac{\partial w_k^-}{\partial U_k}w_k^+)(\epsilon_k(x)-\delta^\prime_k(x))\\
&=-\;
\frac{2\,s_t^3}
     {w_k^2(x)}
\Bigl[
  \mathcal{N}(x; s_t\mu_k,\Sigma)\,M^+(x)
  \;-\;
  \mathcal{N}(x; -s_t\mu_k,\Sigma)\,M^-(x)
\Bigr]
\,U_k\,(I-\alpha\,U_kU_k^\top)\,\mu_k\\
&=O\left(r_k^+r_k^-\frac{s_t^3}{\gamma_t^2(s_t^2+\gamma_t^2)}\right).
\end{align*}
where $M^+(x) = \Sigma^{-1}(x-s_t\mu_k)(x-s_t\mu_k)^\top \Sigma^{-1}-\Sigma^{-1}, M^-(x) = \Sigma^{-1}(x+s_t\mu_k)(x+s_t\mu_k)^\top \Sigma^{-1}-\Sigma^{-1}, \alpha = \frac{s_t^2}{s_t^2+\gamma_t^2}.$

We also know that
\begin{align*}
    &\frac{\Sigma_{k=1}^K\left(\frac{\partial \delta_k\left(x\right)}{\partial U_k} w_k^-\left(x\right)
    + \frac{\partial \epsilon_k\left(x\right)}{\partial U_k} w_k^+\left(x\right)\right)}
    {\Sigma_{k=1}^K w_k\left(x\right)}
    = O\left(\frac{s_t^2 \|x\|_2}{s_t^2 + \gamma_t^2}\right)
    = O\left(\frac{s_t^3 \|\mu_k\|_2}{s_t^2 + \gamma_t^2}\right)
    \\
    &\frac{
        O\left(r_k^+ r_k^- \frac{s_t^3}{\gamma_t^2 \left(s_t^2 + \gamma_t^2\right)}\right)
    }{
        O\left(\frac{s_t^3 \|\mu_k\|_2}{s_t^2 + \gamma_t^2}\right)
    }
    \rightarrow 0.
\end{align*}

Thus,
\begin{align*}
&J_k^U\approx \\&\frac{2s_t^2}{\gamma_t^2(s_t^2+\gamma_t^2)}(r_k^-(x)(U_k^\top(x+s_t\mu_k)I+(x+s_t\mu_k)U_k^\top)+r_k^+(x)(U_k^\top(x-s_t\mu_k)I+U_k(x-s_t\mu_k)^\top)).
\end{align*}
\end{proof}

Before we provide the simplification of Hessian, we first prove that for $a, b\in \mathbb{R}^n$ $M = a^\top b I_n +b a^\top$,$MM^\top$ is 
positive-definite if and only if $b^\top a \neq 0$. At the same time, we provide the 
minimum eigenvalue of $MM^\top$, which will be used later.

\begin{lemma}\label{lemmaMMtop}
Let $a, b\in \mathbb{R}^n$ and $M = a^\top b I_n +b a^\top $. $MM^\top$ is 
positive-definite if and only if $b^\top a \neq 0$. 

Moreover,
\begin{align*}
    \lambda_{\min}(MM^\top) = \mu_2=\frac{4(a^\top b)^2 +\|a\|_2^2\|b\|_2^2-\|a\|_2\|b\|_2\sqrt{8(a^\top b)^2 +\|a\|_2^2\|b\|_2^2}}{2}.
\end{align*}
\end{lemma}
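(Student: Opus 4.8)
The plan is to reduce everything to a $2\times 2$ eigenvalue computation. Write $c:=a^\top b=b^\top a$, so that $M=cI_n+ba^\top$ and $M^\top=cI_n+ab^\top$ (note $M$ itself is not symmetric unless $a\parallel b$, which is exactly why one must work with $MM^\top$ rather than $M$). Expanding the product,
\[
MM^\top=c^2 I_n+c\,(ab^\top+ba^\top)+\|a\|_2^2\,bb^\top .
\]
The correction $N:=MM^\top-c^2I_n$ has range contained in $\mathrm{span}\{a,b\}$, hence $\mathrm{rank}(N)\le 2$; therefore $MM^\top$ has $c^2$ as an eigenvalue of multiplicity at least $n-2$, together with at most two further eigenvalues $\lambda_+,\lambda_-$, which I will pin down from the trace and the determinant.

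Using $\mathrm{tr}(ab^\top)=\mathrm{tr}(ba^\top)=a^\top b=c$ and $\mathrm{tr}(bb^\top)=\|b\|_2^2$ gives $\mathrm{tr}(MM^\top)=nc^2+2c^2+\|a\|_2^2\|b\|_2^2$, hence $\lambda_++\lambda_-=4c^2+\|a\|_2^2\|b\|_2^2$. By the matrix determinant lemma, $\det M=\det(cI_n+ba^\top)=c^{n-1}(c+a^\top b)=2c^n$, so $\det(MM^\top)=(\det M)^2=4c^{2n}$, which forces $\lambda_+\lambda_-=4c^{2n}/(c^2)^{n-2}=4c^4$. The positive-definiteness claim follows at once: if $c\neq 0$ then $\lambda_+\lambda_-=4c^4>0$ and $\lambda_++\lambda_-=4c^2+\|a\|_2^2\|b\|_2^2>0$, so $\lambda_\pm>0$, and the remaining eigenvalue $c^2$ is positive too, giving $MM^\top\succ0$; conversely if $c=0$ then $\det M=0$, so $MM^\top$ is singular. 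Hence $MM^\top\succ0\iff a^\top b\neq0$.

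For the explicit value of $\lambda_{\min}$, $\lambda_\pm$ are the roots of $\lambda^2-(4c^2+\|a\|_2^2\|b\|_2^2)\lambda+4c^4=0$; writing $p:=\|a\|_2^2\|b\|_2^2$ the discriminant simplifies as $(4c^2+p)^2-16c^4=p(8c^2+p)$, so
\[
\lambda_\pm=\frac{4c^2+\|a\|_2^2\|b\|_2^2\pm\|a\|_2\|b\|_2\sqrt{8c^2+\|a\|_2^2\|b\|_2^2}}{2},
\]
which for the minus sign (with $c=a^\top b$) is exactly $\mu_2$. It remains to verify $\lambda_-$ is the minimum over the whole spectrum $\{c^2,\lambda_+,\lambda_-\}$, i.e.\ $\lambda_-\le c^2$; this is equivalent to $2c^2+p\le\sqrt{p(8c^2+p)}$, and squaring (both sides are nonnegative) reduces it to $4c^4\le 4c^2p$, i.e.\ $(a^\top b)^2\le\|a\|_2^2\|b\|_2^2$, which is Cauchy--Schwarz. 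Therefore $\lambda_{\min}(MM^\top)=\lambda_-=\mu_2$.

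The argument is elementary linear algebra; the only steps needing care are the clean evaluation $\det(cI_n+ba^\top)=c^{n-1}(c+a^\top b)$ for general $n$ (matrix determinant lemma, valid as a polynomial identity in $c$ even when $c=0$), and the Cauchy--Schwarz comparison that decides which root of the quadratic is the smaller eigenvalue. The degenerate case $a\parallel b$ is covered automatically (then one root equals $c^2$), and the case $n=1$ is trivial; I take $n\ge 2$.
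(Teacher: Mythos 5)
Your proof is correct and follows essentially the same route as the paper's: expand $MM^\top=c^2I_n+c(ab^\top+ba^\top)+\|a\|_2^2bb^\top$, identify $c^2$ as an eigenvalue of multiplicity at least $n-2$ on the orthogonal complement of $\mathrm{span}\{a,b\}$, and recover the remaining two eigenvalues from the trace and the determinant via the same quadratic. In fact your write-up is slightly more complete than the paper's, since you explicitly verify via Cauchy--Schwarz that $\lambda_-\le c^2$ (so that $\mu_2$ really is the minimum of the full spectrum) and you note that the degenerate case $a\parallel b$ is handled automatically, both of which the paper leaves implicit.
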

\begin{proof}
Let $M = a^\top b I_n +b a^\top$, $c = a^\top b$. We know that $\forall x\in \mathbb{R}^n$,
\begin{align*}
    x^\top MM^\top x &=(M^\top x)^\top (M^\top x)\\&=\|M^\top x\|_2^2\geq 0.
\end{align*}
Thus, $MM^\top$ is semi-positive definite.

We can also have that
\begin{align*}
    |M| =|a^\top b I_n +b a^\top|=c^n|I_n +\frac{1}{c}b a^\top|
=2c^n \geq 0,
\end{align*}
where $c^n =0$ if and only if $b^\top a = 0.$

The last equation holds because 
\begin{align*}
    |I_n +uv^\top| = 1 +v^\top u
\end{align*}

Thus, $|MM^\top| > 0$, $MM^\top$ is positive definite.

We can further get the eigenvalues of $MM^\top$.

Expanding gives the convenient representation 
\begin{equation}\label{eq:M_expand} MM^\top=(a^\top b)^2 I_n + a^\top b\big(b a^\top + a b^\top\big) + a^\top ab b^\top. 
\end{equation}

$\forall x\in \mathbb{R}^n$, if $x^\top a =0$ and $x^\top b = 0$, we have:
\begin{align*}
    MM^\top x = (a^\top b)^2 x.
\end{align*}
Thus, $(a^\top b)^2$ is an eigenvalue of $M$, and its eigenspace contains the orthogonal complement of $\mathrm{span}\{a,b\}$.If $a$ and $b$ are linearly independent then $\dim(\mathrm{span}\{a,b\})=2$, so the multiplicity of the eigenvalue $\alpha^2$ is at least $n-2$.

To find the remaining eigenvalues we restrict $M$ to the subspace $\mathcal{S}:=\mathrm{span}\{a,b\}$. Assume first that $a$ and $b$ are linearly independent so that $\mathcal{S}$ is two-dimensional.

Using \eqref{eq:M_expand}, we can compute $tr(MM^\top)$, which is
\begin{align*}
    tr(MM^\top) &= tr((a^\top b)^2 I_n + a^\top b\big(b a^\top + a b^\top\big) + a^\top ab b^\top)\\&=n(a^\top b)^2+2(a^\top b)^2+\|a\|_2^2\|b\|_2^2\\
    &=(n+2)(a^\top b)^2 +\|a\|_2^2\|b\|_2^2.
\end{align*}
The second equation holds because of $tr(xy^\top) = tr(y^\top x)=y^\top x$.

We set the other two eigenvalues are $\mu_1$ and $\mu_2$.Thus
\begin{align*}
    tr(MM^\top) = \Sigma_{i=1}^n\lambda_i=(n-2)(a^\top b)^2 +\mu_1+\mu_2
    =(n+2)(a^\top b)^2 +\|a\|_2^2\|b\|_2^2,
\end{align*}
and
\begin{align*}
    |MM^\top| = \Pi_{i=1}^n\lambda_i=(a^\top b)^{2(n-2)}\mu_1\mu_2
    =4(a^\top b)^{2n}.
\end{align*}

So $\mu_1$ and $\mu_2$ are the two solutions of
\begin{equation}\label{mu1mu2}
    x^2-\left(4(a^\top b)^2 +\|a\|_2^2\|b\|_2^2\right)x +4(a^\top b)^4 = 0.
\end{equation}

Solving equation \ref{mu1mu2}, we have 
\begin{align*}
    \mu_1, \mu_2 = \frac{4(a^\top b)^2 +\|a\|_2^2\|b\|_2^2\pm \|a\|_2\|b\|_2\sqrt{8(a^\top b)^2 +\|a\|_2^2\|b\|_2^2}}{2}.
\end{align*}
Now we obtain all eigenvalues.Moreover, we can calculate the minimum of eigenvalues.

\begin{align*}
    \lambda_{\min}(MM^\top) = \mu_2=\frac{4(a^\top b)^2 +\|a\|_2^2\|b\|_2^2-\|a\|_2\|b\|_2\sqrt{8(a^\top b)^2 +\|a\|_2^2\|b\|_2^2}}{2}.
\end{align*}

\end{proof}

\begin{restatable}{lemma}{HessianSimplification}[Eigenvalues of the Hessian blocks]\label{lemma:hessian_simplify}
Under the same conditions, $H$ is convex.
If $\forall x\in \mathbb{R}^{d_k}$,$r_k^+(x) = 1$ or $r_k^-(x) = 1$ are strictly satisfied, the eigenvalues of the Hessian at $\theta^*$ are
\begin{align*}
    &\lambda_{\min}(H_{\mu_k\mu_k})
    = \frac{s_t^2}{(s_t^2+\gamma_t^2)^2}\,, \text{and }
\end{align*}
\begin{align*}
    &\lambda_{\min}(H_{U_kU_k})=\frac{4(U_k^\top \mu_k))^2 +\|U_k\|_2^2\|\mu_k\|_2^2-\|U_k\|_2\|\mu_k\|_2\sqrt{8(U_k^\top \mu_k))^2 +\|U_k\|_2^2\|\mu_k\|_2^2}}{2}.
\end{align*}
\end{restatable}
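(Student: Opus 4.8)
\noindent\emph{Proof proposal.}
The plan is to exploit that at the ground truth $\theta^*$ the score-matching residual vanishes, so that the Hessian collapses to a Gauss--Newton form. Writing $\ell(\theta;x,t)=\|s_\theta(x,t)-s^*(x,t)\|_2^2$ and $J(x,t)=\partial_\theta s_\theta(x,t)$, one has
\[
\nabla^2\mathcal L(\theta)=2\,\mathbb E_x\!\big[J(x,t)^\top J(x,t)\big]+2\,\mathbb E_x\!\big[(s_\theta(x,t)-s^*(x,t))\cdot\nabla^2_\theta s_\theta(x,t)\big],
\]
and at $\theta=\theta^*$ the second term is zero because $s_{\theta^*}=s^*$; finiteness of the first term uses the bounded support of \cref{x in D}. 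Hence $\nabla^2\mathcal L(\theta^*)=2\,\mathbb E_x[J^\top J]\succeq 0$, which already yields that $H$ is convex, and it remains to read off the two diagonal blocks. First I would split $\mathbb E_x=\tfrac12\mathbb E_{x\sim\mathcal N(s_t\mu_k,\Sigma_k)}+\tfrac12\mathbb E_{x\sim\mathcal N(-s_t\mu_k,\Sigma_k)}$; under the strict form of \cref{assump:separation within a cluster} every sample has $r_k^+(x)=1$ or $r_k^-(x)=1$, so on each piece $J$ reduces to the self-cluster expressions of \cref{lemma:jacobian_simplify}: $J_k^\mu=\mp\frac{s_t}{\gamma_t^2}(I-\alpha P_k)$, a constant matrix with $\alpha=\frac{s_t^2}{s_t^2+\gamma_t^2}$ and $P_k$ the projector onto $\mathrm{span}(U_k)$, and $J_k^U(x)=\frac{2s_t^2}{\gamma_t^2(s_t^2+\gamma_t^2)}\big((U_k^\top v)I+U_kv^\top\big)$ with $v=x-s_t\mu_k$ on the first peak and $v=x+s_t\mu_k$ on the second.

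Next I would settle the block-diagonal structure and the $\mu_k$-block. Since $J_k^\mu$ is constant on each peak while $J_k^U$ is linear, hence mean-zero, in the within-peak displacement, the cross term $\mathbb E_x[(J_k^\mu)^\top J_k^U]$ vanishes, so $\nabla^2\mathcal L(\theta^*)$ splits as $H_{\mu_k\mu_k}\oplus H_{U_kU_k}$ (no Schur complement is needed in this idealized regime). For the first block, $(J_k^\mu)^\top J_k^\mu=\frac{s_t^2}{\gamma_t^4}(I-\alpha P_k)^2$ on both peaks and $\mathbb E_x[(r_k^+-r_k^-)^2]=1$, so $H_{\mu_k\mu_k}\propto(I-\alpha P_k)^2$; since $P_k$ is a rank-one projector, $(I-\alpha P_k)^2$ has eigenvalue $1$ on $\mathrm{span}(U_k)^\perp$ and eigenvalue $(1-\alpha)^2=\gamma_t^4/(s_t^2+\gamma_t^2)^2$ along $U_k$, and the latter is the smaller, giving $\lambda_{\min}(H_{\mu_k\mu_k})=\frac{s_t^2}{(s_t^2+\gamma_t^2)^2}$ after simplification.

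Finally I would handle $H_{U_kU_k}$, which is where the real work lies. With $M(v)=(U_k^\top v)I+U_kv^\top$ we have $H_{U_kU_k}\propto\mathbb E_x[M(v)^\top M(v)]$, and the key step is to show that once the within-cluster expectation is carried out and the $O(\gamma_t)$ fluctuation terms and the responsibility-derivative terms already controlled in \cref{lemma:jacobian_simplify} are discarded, the dominant contribution is a positive multiple of $M_0^\top M_0$ with $M_0=(U_k^\top\mu_k)\,I+\mu_kU_k^\top$, the $\mu_k$-dependence entering through the inter-mode displacement $2s_t\mu_k$ to which the $U_k$-Jacobian is sensitive. Granting this reduction, \cref{lemmaMMtop} applied with $a=U_k$, $b=\mu_k$ gives at once that the block is positive definite precisely when $U_k^\top\mu_k\neq0$ together with the closed form
\[
\lambda_{\min}(H_{U_kU_k})=\frac{4(U_k^\top\mu_k)^2+\|U_k\|_2^2\|\mu_k\|_2^2-\|U_k\|_2\|\mu_k\|_2\sqrt{8(U_k^\top\mu_k)^2+\|U_k\|_2^2\|\mu_k\|_2^2}}{2},
\]
and passing from the exact hard-assignment idealization to a genuine neighborhood of $\theta^*$ is by continuity of $\nabla^2\mathcal L$, exactly as in \cref{lemma:strong_convexity}. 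The main obstacle is this last reduction: rigorously extracting the leading $\mu_k$-dependent block of $H_{U_kU_k}$ from the within-cluster expectation and verifying that every error term is subleading under the separation assumption; the Gauss--Newton reduction, the block-diagonalization, and the $P_k$-eigenvalue computation for $H_{\mu_k\mu_k}$ are then routine.
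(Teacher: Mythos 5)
Your overall route coincides with the paper's: a Gauss--Newton reduction at $\theta^*$ (the paper gets the same thing by Taylor-expanding the loss, so that $H=\mathbb{E}[JJ^\top]$ with the residual term absent), hard assignment reducing $J_k^\mu$ and $J_k^U$ to the self-cluster expressions of \cref{lemma:jacobian_simplify}, the eigenvalue computation for $(I-\alpha P_k)^2$ giving $\lambda_{\min}(H_{\mu_k\mu_k})=s_t^2/(s_t^2+\gamma_t^2)^2$, the vanishing of the cross block $H_{\mu_kU_k}$ by a within-peak mean-zero argument, and finally \cref{lemmaMMtop} with $a=U_k$, $b=\mu_k$ for the $U_k$-block. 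All of that is faithful to the paper.

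The problem is that the one step you explicitly defer --- ``granting this reduction'' of $H_{U_kU_k}$ to a positive multiple of $M_0^\top M_0$ with $M_0=(U_k^\top\mu_k)I+\mu_kU_k^\top$ --- is exactly the step that produces the stated eigenvalue formula, and your own setup works against it. You center the displacement per peak, taking $v=x-s_t\mu_k$ on the ``$+$'' peak and $v=x+s_t\mu_k$ on the ``$-$'' peak, so that $\mathbb{E}[v]=0$ within each peak; but then, since $M(v)=(U_k^\top v)I+U_kv^\top$ is linear in $v$, the decomposition $\mathbb{E}[M(v)M(v)^\top]=M(\mathbb{E}[v])M(\mathbb{E}[v])^\top+\mathbb{E}[M(\tilde v)M(\tilde v)^\top]$ has a vanishing first term and the $\mu_k$-dependent block $M_0^\top M_0$ never appears. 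The paper's computation instead keeps the displacement $v=x+s_t\mu_k$ and averages over the \emph{full} symmetric mixture, for which $\mathbb{E}_{x\sim p_t}[x]=0$ and hence $\mathbb{E}[v]=s_t\mu_k\neq 0$; the mean part then yields $M(s_t\mu_k)M(s_t\mu_k)^\top$ and the fluctuation part is the positive semidefinite remainder (the paper's ``$M(x)$ is semi-positive for $\mathbb{E}[x]=0$''), after which \cref{lemmaMMtop} gives the closed form. So the missing reduction is not a hard residual estimate --- it is a one-line mean/fluctuation split once you fix which displacement vector you average and over which measure --- but as written your proposal cannot reach the claimed $\lambda_{\min}(H_{U_kU_k})$, and the vague appeal to ``the inter-mode displacement $2s_t\mu_k$'' does not substitute for choosing the centering consistently with the paper's.
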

\begin{proof}
We first state the convexity of the loss function near the true value $\theta^\star$.

Let $\theta = \theta^\star + \Delta \theta$, we have
$$
s_\theta(x,t) = s_{\theta^\star}(x,t ) + (\nabla_\theta s_{\theta}(x,t)|_{\theta^\star})^\top[\Delta\theta]+O(\|\Delta\theta\|_2^2).
$$
Thus, 
\begin{align*}
L(\theta)&=\mathbb{E}_{x\sim p_t(x)}[(s_\theta(x,t)-\nabla\log p_t(x))^\top(s_\theta(x,t)-\nabla\log p_t(x))]\\
&=\mathbb{E}_{x\sim p_t(x)}[(s_{\theta^\star}(x,t ) + (\nabla_\theta s_{\theta}(x,t)|_{\theta^\star})^\top[\Delta\theta]+O(\|\Delta\theta\|_2^2)-\nabla\log p_t(x))^\top \\&(s_{\theta^\star}(x,t ) + (\nabla_\theta s_{\theta}(x,t)|_{\theta^\star})^\top[\Delta\theta]+O(\|\Delta\theta\|_2^2)-\nabla\log p_t(x))]\\
&=\mathbb{E}_{x\sim p_t(x)}[((\nabla_\theta s_{\theta}(x,t)|_{\theta^\star})^\top[\Delta\theta])^\top (\nabla_\theta s_{\theta}(x,t)|_{\theta^\star}[\Delta\theta])] + O(\|\Delta\theta\|_2^3)\\
&=(\Delta \theta)^\top\mathbb{E}_{x\sim p_t(x)}[(\nabla_\theta s_{\theta}(x,t)|_{\theta^\star})(\nabla_\theta s_{\theta}(x,t)|_{\theta^\star})^\top]\Delta\theta\\
&\overset{\Delta}{=} (\Delta \theta)^\top H\Delta\theta.
\end{align*}
\begin{align*}
&\frac{\partial^2 L(\theta)}{\partial \theta^2} =  2H.
\end{align*}
We then analyze the convexity of $\mathbb{E}_{x\sim p_t(x)}[(\nabla_\theta s_{\theta}(x,t)|_{\theta^\star})(\nabla_\theta s_{\theta}(x,t)|_{\theta^\star})^\top]\overset{\triangle}{=}H$. We can divide H into 4 parts:$H_{\mu\mu},H_{UU}, H_{\mu U} $ and $ H_{U\mu}$, where $H_{U\mu }=(H_{ \mu U })^\top$.

Let $J_k^\mu|_\theta=\frac{\partial s_\theta}{\partial\mu_k}|_\theta$.
\begin{align*}
    H &= \mathbb{E}_{x\sim p_t(x)}[(\nabla_\theta s_{\theta}(x,t)|_{\theta^\star})(\nabla_\theta s_{\theta}(x,t)|_{\theta^\star})^\top]\\
    &= \mathbb{E}_{x\sim p_t(x)}[J_{\theta^\star}(x, t)J_{\theta^\star}(x, t)^\top.
\end{align*}
\noindent\textbf{Term $H_{\mu\mu}$}

\noindent   We will show that $H_{\mu_k\mu_k}$ is $\alpha$-convex, where $\alpha > 0$.
\begin{align*}
    H_{\mu_k\mu_k}=\mathbb{E}_{x\sim p_t(x)}[J_{k}^\mu J_{k}^{\mu\top}]
\end{align*}
\begin{align*}
    H_{\mu_k\mu_k}\approx\mathbb{E}_{x\sim p_t(x)}[J_k^\mu J_k^{\mu\top}]\approx
    \frac{s_t^2}{\gamma_t^4}\mathbb{E}_{x\sim p_t(x)}[(r_k^+(x)-r_k^-(x))^2](I-\frac{s_t^2}{s_t^2+\gamma_t^2}U_kU_k^\top)^2.\\
\end{align*}
Let $P_k=U_kU_k^\top$, $\alpha=\frac{s_t^2}{s_t^2+\gamma_t^2}$,
\[
(I-\alpha P_k)(I-\alpha P_k)^\top
=(I-\alpha P_k)^2
=I-2\alpha P_k+\alpha^2P_k^2
=(I-\alpha P_k)^2.
\]
We then prove that $\lambda_{\min}((I-\alpha P_k)^2)=(\frac{\gamma_t^2}{s_t^2+\gamma_t^2})^2$.

First, we calculate the eigenvalue of $P$.
\begin{align*}
    P^2=P \Rightarrow\lambda_1=1, \, \lambda_2=0.
\end{align*}
Then we take subspace $Col(P)=\{v\;;v=Px,\,x\in \mathcal{R}^D\}$ corresponding to
$\lambda_1$, and subspace $Ker(P)=\{v\;;Pv=0,\,x\in \mathcal{R}^D\}$ corresponding to $\lambda_2$.

If $w \in Col(P)$, $Pw=w$:
\begin{align*}
    &(I-\alpha P)w=(1-\alpha)w,
\end{align*}
and 
\begin{align*}
    &(I-\alpha P)^2w=(1-\alpha)^2w,
\end{align*}
Thus,
\begin{align*}
    \lambda_1^\prime =(1-\alpha)^2.
\end{align*}
Similar to the previous derivation, if $w \in Ker(P)$, $Pw=0$:
\begin{align*}
    &(I-\alpha P)w=w\\
    &(I-\alpha P)^2w=w\\
\end{align*}
Thus,
\begin{align*}
    \lambda_2^\prime = 1.
\end{align*}
Therefore, $\lambda_{\min}((I-\alpha P_k)^2)=\left(\frac{\gamma_t^2}{s_t^2+\gamma_t^2}\right)^2$. Hence, we have 

\begin{align*}
    &\lambda_{\min}(H_{\mu_k\mu_k})\geq  \frac{s_t^2}{\gamma_t^4}\frac{c_k\gamma_t^4}{(s_t^2+\gamma_t^2)^2}
    \approx \frac{s_t^2}{(s_t^2+\gamma_t^2)^2},
\end{align*}where $c_k = \mathbb{E}_{x\sim p_t(x)}[(r_k^+(x)-r_k^-(x))^2]\approx 1$.

\noindent\textbf{Term $H_{U_k U_k}$}
\begin{align*}
    H_{U_kU_k}&\approx\mathbb{E}_{x\sim p_t(x)}[J_U^k J_U^{k\top}]\\&\approx
    \frac{4s_t^4}{\gamma_t^4(s_t^2+\gamma_t^2)^2}\mathbb{E}_{x\sim p_t(x)}[(U_k^\top(x+s_t\mu_k)I+(x+s_t\mu_k)U_k^\top)(U_k^\top(x+s_t\mu_k)I+(x+s_t\mu_k)U_k^\top)^\top]\\
    &=\frac{4s_t^4}{\gamma_t^4(s_t^2+\gamma_t^2)^2}(s_t^2U_k^\top\mu_k\mu_k^\top U_k I+s_t^2 \mu_k^\top U_k(\mu_kU_k^\top+U_k\mu_k^\top)+\mu_kU_k^\top U_k\mu_k^\top+M(x), )
\end{align*}
where $M(x)$ is semi-positive for $\mathbb{E}_{x\sim p_t(x)}[x]=0$.

Using lemma \ref{lemmaMMtop}, we can take $a = U_k$ and $b = \mu_k$ and obtain that

$H_{U_kU_k}$ is positive definite and

\begin{align*}
    &\lambda_{\min}(H_{U_kU_k})=\frac{4(U_k^\top \mu_k))^2 +\|U_k\|_2^2\|\mu_k\|_2^2-\|U_k\|_2\|\mu_k\|_2\sqrt{8(U_k^\top \mu_k))^2 +\|U_k\|_2^2\|\mu_k\|_2^2}}{2}.
\end{align*}


\noindent\textbf{Term $H_{\mu_k U_k}$ and Term $H_{ U_k \mu_k}$}

\noindent Since $H_{ U_k \mu_k} =H_{\mu_k U_k}^\top $ , we just analyze $H_{  \mu_kU_k}$.
We want to analyze the Hessian block
\[
H_{\mu_k U_k} = \mathbb{E}_{x \sim p_t} \left[ J_k^U(x)\, (J_k^\mu(x))^\top \right],
\]
and show that under symmetric assumptions, this cross-term is zero.

The first-order derivative with respect to \( \mu_k \) is approximately:
\[
J_k^\mu(x) \approx -\frac{s_t}{\gamma_t^2}\, (r_k^+(x) - r_k^-(x)) \left(I - \alpha\, U_k U_k^\top \right),
\qquad \alpha = \frac{s_t^2}{s_t^2 + \gamma_t^2}.
\]
The first-order derivative with respect to \( U_k \) is approximately:
\[
J_k^U(x) \approx -\frac{1}{\gamma_t^2} \left[ r_k^-(x) \frac{\partial \delta_k(x)}{\partial U_k} + r_k^+(x) \frac{\partial \epsilon_k(x)}{\partial U_k} \right],
\]
with
\[
\frac{\partial \delta_k(x)}{\partial U_k} = -2 \frac{s_t^2}{s_t^2 + \gamma_t^2} U_k (x + s_t \mu_k), \qquad
\frac{\partial \epsilon_k(x)}{\partial U_k} = -2 \frac{s_t^2}{s_t^2 + \gamma_t^2} U_k (x - s_t \mu_k).
\]
combining terms:
\[
J_k^U(x) = C \cdot U_k \left[ r_k^-(x)(x + s_t \mu_k) + r_k^+(x)(x - s_t \mu_k) \right],
\]
where \( C = \frac{2 s_t^2}{\gamma_t^2 (s_t^2 + \gamma_t^2)} \).
Assume that the underlying component distribution \( p_k(x) \) is symmetric:
\[
p_k(x) = p_k(-x),
\]
and the weights satisfy:
\[
r_k^+(-x) = r_k^-(x), \qquad r_k^-(-x) = r_k^+(x).
\]
Then we have:

\noindent\textbf{(a) \( J_k^\mu(x) \) is an odd function:}\\
\begin{align*}
J_k^\mu(-x)
&= -\frac{s_t}{\gamma_t^2}(r_k^+(-x) - r_k^-(-x))(I - \alpha U_k U_k^\top) \\
&= -\frac{s_t}{\gamma_t^2}(r_k^-(x) - r_k^+(x))(I - \alpha U_k U_k^\top) \\
&= -J_k^\mu(x).
\end{align*}
\textbf{(b) \( J_k^U(x) \) is an odd function:}\\
\begin{align*}
J_k^U(-x)
&= C\, U_k \left[ r_k^-(-x)(-x + s_t \mu_k) + r_k^+(-x)(-x - s_t \mu_k) \right] \\
&= C\, U_k \left[ r_k^+(x)(-x + s_t \mu_k) + r_k^-(x)(-x - s_t \mu_k) \right] \\
&= -C\, U_k \left[ r_k^-(x)(x + s_t \mu_k) + r_k^+(x)(x - s_t \mu_k) \right] \\
&= -J_k^U(x).
\end{align*}
Now compute:
\[
H_{\mu_k U_k}
= \int J_k^U(x) \, (J_k^\mu(x))^\top \, p_k(x)\, dx.
\]
Using symmetry:
\[
= \int J_k^U(-x) \, (J_k^\mu(-x))^\top \, p_k(-x)\, dx
= \int (-J_k^U(x)) \, (-J_k^\mu(x))^\top \, p_k(x)\, dx
= H_{\mu_k U_k}.
\]
Thus,
\begin{align*}
    H_{\mu_k U_k}&=\mathbb{E}_{x\sim p_{data}}[J_k^\mu (J_k^{U})^\top]=
    \mathbb{E}_{x\sim p_{data}}[\frac{2s_t^3}{\gamma_t^4 (s_t^2+\gamma_t^2)}(r_k^+(x)-r_k^-(x))(1-\frac{s_t^2}{s_t^2+\gamma_t^2}U_kU_k^\top)\\
    &(r_k^-(x)(U_k^\top(x+s_t\mu_k)I+U_k(x+s_t\mu_k)^\top)+r_k^+(x)(U_k^\top(x-s_t\mu_k)I+U_k(x-s_t\mu_k)^\top))].\\
\end{align*}
\begin{align*}
    &\lambda_{H_{\mu\mu}}=\mathbb{E}_{x\sim p_{data}}[(u^\top J_\mu^k)^2]\\
    &\lambda_{H_{UU}}=\mathbb{E}_{x\sim p_{data}}[(u^\top J_U^k)^2]\\
    &\lambda_{H_{\mu U}} = \mathbb{E}_{x\sim p_{data}}[(u^\top J_\mu^k)(u^\top J_U^k)]\leq \sqrt{\lambda_{H_{\mu\mu}}\lambda_{H_{\mu U}}}.
\end{align*}
\end{proof}
\noindent\textbf{Analyze H}

We have
    \[
H =
\begin{pmatrix}
H_{\mu_k\mu_k} &  H_{\mu_kU_k} \\[6pt]
H_{\mu_kU_k} & H_{U_kU_k}
\end{pmatrix}.
\]
If we can prove that $H_{\mu_k\mu_k}-H_{U_k\mu_k}H_{U_kU_k}^{-1}H_{U_k\mu_k}^\top $ is positive-definite, then $H$ is positive-definite for Schur's Theorem. 

We know that
\begin{align*}
    \lambda_H \geq\lambda_{S} \geq \lambda_{H_{\mu_k\mu_k}}-\frac{r^2\lambda_{H_{\mu_k\mu_k}}\lambda_{H_{U_kU_k}}}{\lambda_{H_{U_kU_k}}} =(1-r^2)\lambda_{H_{\mu_k\mu_k}}\geq(1-r^2)\frac{s_t^2}{(s_t^2+\gamma_t^2)^2} > 0,
\end{align*}
\[
 r\;=\;
\underset{\|u\|=1,\|v=1\|}{\max}\frac{u^\top H_{\mu_k U_k}v}{\sqrt{u^\top H_{\mu_k \mu_k}u\cdot v^\top H_{U_k U_k}v]}}\leq 1,
\]

where $r=1$ if and only if $u^\top J_\mu^k=cv^\top J_U^k$, $c \neq 0$, which is almost impossible to happen.

More specially, if we assume that $\forall x \in \mathbb{R}^{d_k}$,$r_k^+=1$ or $r_k^-=1$, since
\begin{align*}
    H_{\mu_k U_k}&=\mathbb{E}_{x\sim p_{data}}[J_k^\mu (J_k^{U})^\top]=
    \mathbb{E}_{x\sim p_{data}}[\frac{2s_t^3}{\gamma_t^4 (s_t^2+\gamma_t^2)}(r_k^+(x)-r_k^-(x))(1-\frac{s_t^2}{s_t^2+\gamma_t^2}U_kU_k^\top)\\
    &(r_k^-(x)(U_k^\top(x+s_t\mu_k)I+U_k(x+s_t\mu_k)^\top)+r_k^+(x)(U_k^\top(x-s_t\mu_k)I+U_k(x-s_t\mu_k)^\top))]\\
    &=\mathbb{E}_{x\sim \mathcal{N}(s_t\mu_k,\Sigma_k)}[\frac{2s_t^3}{\gamma_t^4 (s_t^2+\gamma_t^2)}(r_k^+(x)-r_k^-(x))\left(1-\frac{s_t^2}{s_t^2+\gamma_t^2}U_kU_k^\top\right)\\
    &(r_k^-(x)(U_k^\top(x+s_t\mu_k)I+U_k(x+s_t\mu_k)^\top)+r_k^+(x)(U_k^\top(x-s_t\mu_k)I+U_k(x-s_t\mu_k)^\top))]\\
    &=\mathbb{E}_{x\sim \mathcal{N}(s_t\mu_k,\Sigma_k)}[\frac{2s_t^3}{\gamma_t^4 (s_t^2+\gamma_t^2)}\left(1-\frac{s_t^2}{s_t^2+\gamma_t^2}U_kU_k^\top\right)+(U_k^\top(x-s_t\mu_k)I+U_k(x-s_t\mu_k)^\top))]\\
    &=0,
\end{align*}
We have $r=0$,
\[
\alpha = \min\{\frac{s_t^2}{(s_t^2+\gamma_t^2)^2},\frac{4(U_k^\top \mu_k))^2 +\|U_k\|_2^2\|\mu_k\|_2^2-\|U_k\|_2\|\mu_k\|_2\sqrt{8(U_k^\top \mu_k))^2 +\|U_k\|_2^2\|\mu_k\|_2^2}}{2}\}.
\]

Utill now, We have shown that  $H$ is $\alpha$-convex and L-lipschiz, where $\alpha = (1-r^2)\lambda_{H_{\mu_k\mu_k}}$. And we can know that $L(\theta)$ is exponentially convergent. 

\begin{theorem}
If we take $\eta_t = \eta = \frac{2}{\eta + L}$, and  $\kappa = \frac{L}{\alpha}$, then
\[
\|\theta^t - \theta^\star\|_2 \leq \left( \frac{\kappa - 1}{\kappa + 1} \right)^t \|\theta^{(0)} - \theta^\star\|_2.
\]
\end{theorem}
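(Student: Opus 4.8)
\emph{Proof plan.} The statement is the textbook linear‑convergence bound for gradient descent on an $\alpha$‑strongly convex, $L^\prime$‑smooth function, so the whole task is to (i) produce a neighborhood on which both curvature bounds hold at once, (ii) run the standard contraction estimate, and (iii) check the iterates never leave that neighborhood. I read the step size as $\eta=2/(\alpha+L^\prime)$ (the circular‑looking $2/(\eta+L^\prime)$ in the statement being a typo for this choice), which is exactly what makes $\kappa=L^\prime/\alpha$ the relevant conditioning.

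First I would fix the neighborhood. Lemma \ref{lemma:strong_convexity} supplies an $\alpha>0$ and a neighborhood of $\theta^*$ on which $\nabla^2\mathcal L(\theta)\succeq\alpha I$; shrinking it, take $U=\{\theta:\|\theta-\theta^*\|_2<\rho\}$ to be an open ball inside that neighborhood, small enough that the Jacobian/Hessian estimates established while proving Lemmas \ref{lemma:lipschitz} and \ref{lemma:hessian_simplify} also give the matching upper bound $\nabla^2\mathcal L(\theta)\preceq L^\prime I$ on $\bar U$ (continuity of $\nabla^2\mathcal L$ on the compact set $\bar U$ guarantees some finite upper bound exists, and Lemma \ref{lemma:lipschitz} identifies its value $L^\prime=L_lL$). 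On $U$ we then have $\alpha I\preceq\nabla^2\mathcal L(\theta)\preceq L^\prime I$. Since $s_{\theta^*}=s^*$ gives $\ell(\theta^*;x,t)=0$ pointwise, $\mathcal L(\theta^*)=0=\min\mathcal L$, so $\nabla\mathcal L(\theta^*)=0$.

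Next I would run the contraction on the update $\theta^{(m+1)}=\theta^{(m)}-\eta\,\nabla\mathcal L(\theta^{(m)})$. Writing $\nabla\mathcal L(\theta^{(m)})=\nabla\mathcal L(\theta^{(m)})-\nabla\mathcal L(\theta^*)=H_m(\theta^{(m)}-\theta^*)$ with $H_m=\int_0^1\nabla^2\mathcal L\big(\theta^*+s(\theta^{(m)}-\theta^*)\big)\,\mathrm ds$ (valid because the segment lies in the convex ball $U$), one gets $\theta^{(m+1)}-\theta^*=(I-\eta H_m)(\theta^{(m)}-\theta^*)$. As $\alpha I\preceq H_m\preceq L^\prime I$, the eigenvalues of $I-\eta H_m$ lie in $[1-\eta L^\prime,\,1-\eta\alpha]$; the choice $\eta=2/(\alpha+L^\prime)$ makes this interval symmetric about $0$ with endpoints $\pm(L^\prime-\alpha)/(L^\prime+\alpha)=\pm(\kappa-1)/(\kappa+1)$, hence $\|I-\eta H_m\|_{\mathrm{op}}\le(\kappa-1)/(\kappa+1)$ and $\|\theta^{(m+1)}-\theta^*\|_2\le\frac{\kappa-1}{\kappa+1}\|\theta^{(m)}-\theta^*\|_2$.

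Finally, close the induction: the factor is $<1$, so if $\theta^{(0)}\in U$ then every iterate stays in $U$ (its distance to $\theta^*$ only shrinks below $\rho$), the per‑step bound applies at each step, and composing $m$ contractions gives $\|\theta^{(m)}-\theta^*\|_2\le\big((\kappa-1)/(\kappa+1)\big)^m\|\theta^{(0)}-\theta^*\|_2$. The genuinely delicate point is this trapping argument — we must know the iterates do not escape the region of strong convexity — which is precisely why $U$ is taken to be a ball centered at $\theta^*$: contraction in $\|\cdot-\theta^*\|_2$ is self‑sustaining on such a ball but not on an arbitrarily shaped neighborhood. A secondary check is that $L^\prime$ from Lemma \ref{lemma:lipschitz} genuinely upper‑bounds $\|\nabla^2\mathcal L\|$ on $\bar U$ rather than being only a Lipschitz constant of $\mathcal L$ itself; this follows from the boundedness of $\nabla_\theta s_\theta$ and $\nabla^2_\theta s_\theta$ derived in the proofs of Lemmas \ref{lemma:lipschitz} and \ref{lemma:hessian_simplify}.
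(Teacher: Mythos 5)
Your proof is correct and is essentially the paper's (unwritten) argument: the paper establishes local strong convexity (Lemma \ref{lemma:strong_convexity}) and the Lipschitz constant $L^\prime=L_lL$ (Lemma \ref{lemma:lipschitz}) and then simply asserts the standard contraction bound, whereas you correctly read $\eta=2/(\alpha+L^\prime)$ for the circular $2/(\eta+L^\prime)$, write out the mean-value/spectral estimate, and supply the trapping argument that keeps the iterates in the ball of strong convexity, none of which the paper spells out. Your own ``secondary check'' is the one real caveat for both proofs: the paper's $L^\prime$ is derived as a Lipschitz constant of the loss rather than of its gradient, so the smoothness bound $\nabla^2\mathcal L\preceq L^\prime I$ on $\bar U$ is asserted rather than proved there too (near $\theta^*$ the Gauss--Newton form $H\approx\mathbb{E}[JJ^\top]$ would more naturally give a constant of order $L^2$).
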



\section{Multi-Mode MoG Optimization}
In this section, we analyze the convergence guarantee of multi-modal MoG latent with highly separated Gaussain assumption. For the $k$-th subspace, we have that

\begin{align*}
\nabla{\log}\, p_{t,k}(x)&=\frac{\nabla  p_{t,k}(x)}{p_{t,k}(x)}
\\&=-\frac{1}{\gamma_t^2}\frac{\Sigma_{l=1}^{n_k}\pi_{k,l}\mathcal{N}(x;s_t\mu_{k,l},s_t^2U_{k,l}^\star U_{k,l}^{\star\top}+\gamma_t^2I)\left(x-s_t\mu_{k,l}-\frac{s_t^2}{s_t^2+\gamma_t^2}U_{k,l}^\star U_{k,l}^{\star\top}(x-s_t\mu_{k,l})\right)}{\Sigma_{l=1}^{n_k}\pi_{k,l}\mathcal{N}(x;s_t\mu_{k,l},s_t^2U_{k,l}^\star U_{k,l}^{\star\top}+\gamma_t^2I)}.
\end{align*}

\subsection{Optimization}

\EquivalentGaussian*

We assume that the gap between the subspaces is large, and the gap within the subspace is relatively small, and the equivalent Gaussian is used to replace the whole subspace.
\EquivalentGaussapprox*
\begin{proof}
For $k$-th subspace, $w_k(x)=\Sigma_{l=1}^{n_k}\pi_{k,l}\mathcal{N}(x;s_t\mu_{k,l},\Sigma_{k,l})$, we take 
\[
\widetilde w_k(x)
=\;\mathcal N\bigl(x;\,\bar \mu_k,\bar\Sigma_k\bigr).
\]
where
\begin{align*}
&\mathbb{E}_{\widetilde w_k}[x]=\bar \mu_k=\mathbb{E}_{w_k}[x]=\Sigma_{l=1}^{n_k}\pi_{k,l}s_t\mu_{k,l}\\
&\mathrm{Cov}_{\widetilde w_k}(x)=\mathrm{Cov}_{w_k}(x)=\mathbb{E}[(x-\bar \mu_k)(x-\bar \mu_k)^\top]=\Sigma_{l=1}^{n_k}\pi_{k,l}(\Sigma_{k,l}+s_t^2\mu_{k,l}\mu_{k,l}^\top-s_t^2\bar \mu_{k,l} \bar \mu_{k,l}^\top)\\
&\Rightarrow \bar\Sigma_k = \Sigma_{l=1}^{n_k}(\Sigma_{k,l}+s_t^2\mu_{k,l}\mu_{k,l}^\top-s_t^2\bar \mu_{k,l} \bar \mu_{k,l}^\top).
\end{align*}

We next show the order of the estimation under the condition that $\|\mu_{k,i}-\mu_{k,j}\|_2 \leq \delta$, $\|U_{k,i}-U_{k,j}\|_2 \leq \epsilon$ and $\|x-\bar\mu_k\|_2 \leq \Delta$. Using Taylor's Theorem and take $x_0=\bar \mu_k$, we can obtain that
\begin{align*}
    &\log p(x)=\log p(x_0) + (x-x_0)^\top \nabla \log p(x_0)+\frac{1}{2}(x-x_0)^\top \nabla^2 \log p(x_0)(x-x_0)+O(\|x-x_0\|^3)\\
    &\log \tilde p(x)=\log  \tilde p(x_0) + (x-x_0)^\top \nabla \log \tilde p(x_0)+\frac{1}{2}(x-x_0)^\top \nabla^2 \log \tilde p(x_0)(x-x_0)+O(\|x-x_0\|^3).
\end{align*}

We analyzed the results of their subtraction item by item.

First,
\begin{align*}
    \log p(x_0)-\log\tilde p(x_0)&=\log \frac{\Sigma_{l=1}^{n_k} \pi_{k,l}\mathcal{N}(x_0;\mu_{k,l}, \Sigma_{k,l})}{\mathcal{N}(x_0;\bar \mu_k, \bar \Sigma_k)}\\
    &= \log \left(\Sigma_{l=1}^{n_k}\pi_{k,l}\frac{1}{|\Sigma_{k,l}|^{\frac{1}{2}}}\exp(-\frac{1}{2}(\bar \mu-\mu_{k,l})^\top\Sigma_{k,l}^{-1}(\bar \mu-\mu_{k,l}))\right)+\frac{1}{2}\log |\bar \Sigma_k|\\
    &=  \log \left(\Sigma_{l=1}^{n_k}\pi_{k,l}\frac{1}{|\Sigma_{k,l}|^{\frac{1}{2}}}(1+O(\delta^2))\right)+\frac{1}{2}\log |\bar \Sigma_k|\\
    &= \log\left(\Sigma_{l=1}^{n_k}\pi_{k,l}\frac{|\bar \Sigma_k|^{\frac{1}{2}}}{|\Sigma_{k,l}|^{\frac{1}{2}}}+O(\delta^2)\right)\\
    &=O\left(\Sigma_{l=1}^{n_k}\pi_{k,l}(\frac{|\bar \Sigma_k|^{\frac{1}{2}}}{|\Sigma_{k,l}|^{\frac{1}{2}}}-1\right)+O(\delta^2),\\
\end{align*}

and
\[
\| \log p(x_0)-\log\tilde p(x_0)\|_2=O(\epsilon+\delta^2).
\]

For the first derivative, we also have
\begin{align*}
    \nabla\log p(x_0)-\nabla\log\tilde p(x_0)&=\nabla\log {\Sigma_{l=1}^{n_k} \pi_{k,l}\mathcal{N}(x;\mu_{k,l}, \Sigma_{k,l})}|_{x_0}\\
    &=\frac{\Sigma_{l=1}^{n_k}\pi_{k,l}\mathcal{N}(x_0;\mu_{k,l}, \Sigma_{k,l})(-\Sigma_{k,l}^{-1}(\bar \mu-\mu_{k,l})))}{p(x_0)}.\\
\end{align*}
\[
\|\nabla\log p(x_0)-\nabla\log\tilde p(x_0)\|_2=O(\delta).
\]
For the second derivative, 
\begin{align*}
    \nabla^2\log p(x_0)-\nabla^2\log\tilde p(x_0) &= \frac{\nabla^2p(x_0)}{p(x_0)}-(\frac{\nabla p(x_0)}{p(x_0)})(\frac{\nabla p(x_0)}{p(x_0)})^\top-\frac{\nabla^2\tilde p(x_0)}{\tilde {p}(x_0)}\\
    &=(\frac{\nabla^2p(x_0)}{p(x_0)}-\frac{\nabla^2\tilde p(x_0)}{\tilde {p}(x_0)})-(\frac{\nabla p(x_0)}{p(x_0)})(\frac{\nabla p(x_0)}{p(x_0)})^\top.
\end{align*}
\begin{align*}
      \|\nabla^2\log p(x_0)-\nabla^2\log\tilde p(x_0)\|_2=O(\epsilon^2+\delta^2).
\end{align*}

Thus, $\|\log p(x)-\log \tilde p(x)\|_2=O(\epsilon+\delta\Delta+\Delta^3)$.
\end{proof}
\HessianSimplificationassym*
\begin{proof}
According to the previous conclusion, we only need to calculate $J_\mu$ and $J_U$.With these  assumptions and simplifications, similar to the symmetry case, we will prove that $J_{k,l}^\mu$  and $J_{k,l}^U$  have dominant terms.
\begin{align*}
    &J_{k,l}^\mu(x) \\&= -\frac{1}{\gamma_t^2}\frac{\partial s_\theta(x,t)}{\partial \mu_{k,l}} \\
    &=-\frac{1}{\gamma_t^2}\frac{\Sigma_{l=1}^{n_k}\left(\frac{\partial w_{k,l}(x)}{\partial \mu_{k,l}}\delta_{k,l}(x)+\frac{\partial \delta_{k,l}(x)}{\partial \mu_{k,l}}w_{k,l}(x)\right)w_k(x)-\frac{\partial w_k(x)}{\partial \mu_{k,l}}\Sigma_{l=1}^{n_k}w_{k,l}(x)\delta_{k,l}(x)}{w_k^2(x)}\\
    &=-\frac{1}{\gamma_t^2}\left(\frac{\Sigma_{l=1}^{n_k}\frac{\partial w_{k,l}(x)}{\partial \mu_{k,l}}\delta_{k,l}(x)}{w_k(x)}+\frac{\Sigma_{l=1}^{n_k}\frac{\partial \delta_{k,l}(x)}{\partial \mu_{k,l}}w_{k,l}(x)}{w_k(x)}-\frac{\frac{\partial w_k(x)}{\partial \mu_{k,l}}\Sigma_{l=1}^{n_k}w_{k,l}(x)\delta_{k,l}(x)}{w_k^2(x)}\right).\\
\end{align*}
Let's go ahead and do the calculation.
\begin{align*}
    &\frac{\Sigma_{l=1}^{n_k}\frac{\partial w_{k,l}(x)}{\partial \mu_{k,l}}\delta_{k,l}(x)}{w_k(x)}-\frac{(\frac{\partial w_k(x)}{\partial \mu_{k,l}})\Sigma_{l=1}^{n_k}w_{k,l}(x)\delta_{k,l}(x)}{w_k^2(x)}=\frac{\frac{\partial w_{k,l}(x)}{\partial \mu_{k,l}}}{w_k(x)}(\delta_{k,l}(x)-\bar\delta_k(x))\\
    &\frac{\Sigma_{l=1}^{n_k}\frac{\partial \delta_{k,l}(x)}{\partial \mu_{k,l}}w_{k,l}(x)}{w_k(x)}\approx \frac{s_t}{\gamma_t^2}\Sigma_{l = 1}^{n_k}r_{k,l}(x)\left(I-\frac{s_t^2}{s_t^2+\gamma_t^2} U_{k,l}U_{k,l}^\top\right).
\end{align*}
where $r_{k,l}(x) = \frac{\pi_{k,l}\mathcal N\bigl(x;\,\bar \mu_k,\bar\Sigma_k\bigr)}{\Sigma_{j=1}^K\;\mathcal N\bigl(x;\,\bar \mu_j,\bar\Sigma_j\bigr)}$.

Therefore, we can obtain that

\begin{align*}
    &\left\|
    \frac{
        \Sigma_{l=1}^{n_k}
        \frac{\partial w_{k,l}\left(x\right)}{\partial \mu_{k,l}}
        \delta_{k,l}\left(x\right)
    }{
        w_k\left(x\right)
    }
    -
    \frac{
        \frac{\partial w_k\left(x\right)}{\partial \mu_{k,l}}
        \Sigma_{l=1}^{n_k}
        \left(
            w_{k,l}\left(x\right)\delta_{k,l}\left(x\right)
        \right)
    }{
        w_k^2\left(x\right)
    }
    \right\|_2
    =
    O\left(
        \delta\left(R + s_t B_\mu\right)
        \frac{s_t^2}{\gamma_t^2}
    \right)
    \\
    &\left\|
    \frac{
        \Sigma_{l=1}^{n_k}
        \frac{\partial \delta_{k,l}\left(x\right)}{\partial \mu_{k,l}}
        w_{k,l}\left(x\right)
    }{
        w_k\left(x\right)
    }
    \right\|_2
    =
    O\left(s_t\right).
\end{align*}

where $\delta\leq\|\mu_{k,i}-\mu_{k,j}\|_2\ll 1$.

Thus, we have
\[
J_{k,l}^\mu(x)
=\frac{\partial s_\theta}{\partial\mu_{k,l}}
\approx
\frac{s_t}{\gamma_t^2}r_{k,l}(x)\left(I-\frac{s_t^2}{s_t^2+\gamma_t^2} U_{k,l}U_{k,l}^\top\right).
\]

We know that
\begin{align*}
H_{\mu_{k,l}\mu_{k,l}}
&=\mathbb{E}_{x\sim p_t}\bigl[J_{k,l}^\mu(x)\,J_{k,l}^\mu(x)^\top\bigr]\\
&=\frac{s_t^2}{\gamma_t^4}
\;\mathbb{E}\bigl[r_{k,l}(x)^2\bigr]\;
\bigl(I-\frac{s_t^2}{s_t^2+\gamma_t^2}\,U_{k,l}U_{k,l}^\top\bigr)\bigl(I-\frac{s_t^2}{s_t^2+\gamma_t^2}\,U_{k,l}U_{k,l}^\top\bigr)^\top.
\end{align*}

For a given $x$, since we focus on the equivalent Gaussian distribution for each cluster,we have 
\begin{align*}
H_{\mu_k\mu_k}\approx diag(\mathbb{E}[r_{k,1}^2] H_{\mu_{k,1}\mu_{k,1}},\,\mathbb{E}[r_{k,2}^2] H_{\mu_{k,2}\mu_{k,2}},\,\dots,\,\mathbb{E}[r_{k,n_k}^2] H_{\mu_{k,n_k}\mu_{k,n_k}}).
\end{align*}

We first show that $\mathbb{E}[r_{k,l}^2] H_{\mu_{k,l}\mu_{k,l}}$ is positive-definite, then we will further show that $H_{\mu_k\mu_k}$ is positive-definite.

For $H_{\mu_{k,l}\mu_{k,l}}$, we know that
\begin{align*}
    \lambda_{\min}(H_{\mu_{k,l}\mu_{k,l}}) &= c_{k,l}\lambda_{\min}(J_{k,l}^\mu (J_{k,l}^\mu)^\top)\\&=c_{k,l}\lambda_{\min}((I-\alpha P_k)^2)\\&= \frac{c_{k,l}\gamma_t^4}{(s_t^2+\gamma_t^2)^2},
\end{align*}
where \begin{align*}
c_{k,l}=\frac{s_t^2}{\gamma_t^4}\mathbb{E}[r_{k,l}^2]\approx\pi_{k,l}\frac{s_t^2}{\gamma_t^4}\,.
\end{align*}

We know that for a block matrix $A = diag(A_1,A_2, \dots, A_k)$,
\begin{align*}
    \lambda(A) = \cup_{i=1}^k\lambda(A_i).
\end{align*}
Therefore, 
\begin{align*}
        \lambda_{\min}(H_{\mu_k\mu_k}) = \min_{l=1\,\dots\,,n_k}\frac{c_{k,l}\gamma_t^4}{(s_t^2+\gamma_t^2)^2}.
\end{align*}
Thus, we take
\[
\lambda_{H_{\mu_k\mu_k}}= \frac{c_{k,n_k}\gamma_t^4}{(s_t^2+\gamma_t^2)^2}.
\]

Similar to previous situation, since
\begin{align*}
    \frac{\left\|\frac{\Sigma_{l=1}^{n_k}(\frac{\partial \delta_{k,l}(x)}{\partial U_{k,l}} w_{k,l}(x))(w_k(x))-(\frac{\partial w_k(x)}{\partial U_{k,l}})\Sigma_{l=1}^{n_k}w_{k,l}(x)\delta_{k,l}(x)}{w_k^2(x)}\right\|_2}{\left\|\frac{\Sigma_{l=1}^{n_k}\frac{\partial \delta_{k,l}(x)}{\partial U_{k,l}} w_{k,l}(x)}{w_k(x)}\right\|_2} \rightarrow0.
\end{align*}
we can obtain that
\begin{align*}
J_{k,l}^U(x)&=-\frac{1}{\gamma_t^2}\frac{\Sigma_{l=1}^{n_k}(\frac{\partial w_{k,l}(x)}{\partial U_{k,l}} \delta_{k,l}(x)+w_{k,l}(x)\frac{\partial \delta_{k,l}(x)}{\partial U_{k,l}} )w_k(x)-(\frac{\partial w_k(x)}{\partial U_{k,l}})\Sigma_{l=1}^{n_k}w_{k,l}(x)\delta_{k,l}(x)}{w_k^2(x)}\\
&=-\frac{1}{\gamma_t^2}\frac{\Sigma_{l=1}^{n_k}w_{k,l}(x)\frac{\partial \delta_{k,l}(x)}{\partial U_{k,l}} }{w_k(x)}\\
&\approx \frac{1}{\gamma_t^2}\frac{s_t^2}{s_t^2+\gamma_t^2}\;r_{k,l}(x)\;
\Bigl[U_{k,l}(x-\mu_{k,l})^\top \;+\; \,(x-\mu_{k,l})^\top U_{k,l}I\Bigr].
\end{align*}

And
\begin{align*}
H_{U_kU_k}\approx diag(\mathbb{E}[r_{k,1}^2] H_{U_{k,1}U_{k,1}},\,\mathbb{E}[r_{k,2}^2] H_{U_{k,2}U_{k,2}},\,\dots,\,\mathbb{E}[r_{k,n_k}^2] H_{U_{k,n_k}U_{k,n_k}}),
\end{align*}

where
\begin{align*}
    H_{U_{k,l}U_{k,l}}&=\mathbb{E}[J_{k,l}^U(x)(J_{k,l}^U(x))^\top]\\
    &=\mathbb{E}[(\frac{\alpha}{\gamma_t^2})^2\left( U_{k,l}(x-\mu_{k,l})^\top(x-\mu_{k,l})U_{k,l}^\top+U_{k,l}^\top(x-\mu_{k,l})U_{k,l}(x-\mu_{k,l})^\top\right)] \\&+\mathbb{E}[(\frac{\alpha}{\gamma_t^2})^2\left(U_{k,l}^\top(x-\mu_{k,l})(x-\mu_{k,l})U_{k,l}^\top+(U_{k,l}^\top(x-\mu_{k,l}))^2\right)].
\end{align*}

Similar to our calculation in \ref{lemma:hessian_simplify}, we can use \ref{lemmaMMtop}
to calculate the minimum eigenvalue of $H_{U_{k,l}U_{k,l}}$.

$H_{U_{k,l}U_{k,l}}$ is positive definite and

\begin{align*}
    &\lambda_{\min}(H_{U_{k,l}U_{k,l}})=\frac{4(U_{k,l}^\top \mu_{k,l}))^2 +\|U_{k,l}\|_2^2\|\mu_{k,l}\|_2^2-\|U_{k,l}\|_2\|\mu_{k,l}\|_2\sqrt{8(U_{k,l}^\top \mu_{k,l}))^2 +\|U_{k,l}\|_2^2\|\mu_{k,l}\|_2^2}}{2}.
\end{align*}

Recall that 
\begin{align*}
H_{U_kU_k}\approx diag(\mathbb{E}[r_{k,1}^2] H_{U_{k,1}U_{k,1}},\,\mathbb{E}[r_{k,2}^2] H_{U_{k,2}U_{k,2}},\,\dots,\,\mathbb{E}[r_{k,n_k}^2] H_{U_{k,n_k}U_{k,n_k}}).
\end{align*}
and $\mathbb{E}[r_{k,l}^2 ] \approx \pi_{k,l}$, we can obtain the minimum eigenvalue of
$H_{U_kU_k}$, which is
\[
\min_{l=1,2,\dots,n_k}{\pi_{k,l}\frac{4(U_{k,l}^\top \mu_{k,l}))^2 +\|U_{k,l}\|_2^2\|\mu_{k,l}\|_2^2-\|U_{k,l}\|_2\|\mu_{k,l}\|_2\sqrt{8(U_{k,l}^\top \mu_{k,l}))^2 +\|U_{k,l}\|_2^2\|\mu_{k,l}\|_2^2}}{2}}.
\]


\end{proof}

\convexityasym*
\begin{proof}
\begin{align*}
H_{\mu_kU_k}=diag(H_{\mu_{k,1}U_{k,1}},H_{\mu_{k,2}U_{k,2}}, \dots, H_{\mu_{k,1n_k}U_{k,n_k}}).
\end{align*}
\[
\|H_{\mu_kU_k}\|
\le \sqrt{\|H_{\mu_k\mu_k}\|\;\|H_{U_kU_k}\|}
=O\!\Bigl(\frac{s_t^3}{\gamma_t^2\,(s_t^2+\gamma_t^2)^2}\Bigr).
\]
\[
H =
\begin{pmatrix}
\mathrm{diag}\bigl(H_{\mu_{k,1}\mu_{k,1}}, \dots, H_{\mu_{k,n_k}\mu_{k,n_k}}\bigr) & \mathrm{diag}\bigl(H_{\mu_{k,1}U_{k,1}}, \dots, H_{\mu_{k,n_k}U_{k,n_k}}) \\[6pt]
\mathrm{diag}\bigl(H_{\mu_{k,1}U_{k,1}}, \dots, H_{\mu_{k,n_k}U_{k,n_k}}) & \mathrm{diag}\bigl(H_{U_{k,1}U_{k,1}}, \dots, H_{U_{k,n_k}U_{k,n_k}}\bigr)
\end{pmatrix}.
\]
Let 
\[
S=H_{\mu\mu}-H_{\mu U}H_{UU}^{-1}H_{U\mu}
\]
we have
\begin{align*}
    \lambda_H \geq\lambda_{S} \geq \lambda_{H_{\mu_k\mu_k}}-\frac{r^2\lambda_{H_{\mu_k\mu_k}}\lambda_{H_{U_kU_k}}}{\lambda_{H_{U_kU_k}}} =(1-r^2)\lambda_{H_{\mu_k\mu_k}}\geq(1-r^2)\frac{s_t^2}{(s_t^2+\gamma_t^2)^2} > 0.
\end{align*}
\[
 r\;=\;
\underset{\|u\|=1,\|v=1\|}{\max}\frac{u^\top H_{\mu_k U_k}v}{\sqrt{u^\top H_{\mu_k \mu_k}u\cdot v^\top H_{U_k U_k}v]}} \leq 1.
\]
$r=1$ if and only if $u^\top J_\mu^k=cv^\top J_U^k$, $c \neq 0$, which is almost impossible to happen.

More specifically, if we assume that $\forall x \in \mathbb{R}^{d_k},\exists l \in [n_k], r_{k,l}(x)=1$, we have

\begin{align*}
&H_{\mu_{k,l}U_{k,l}}
=\mathbb{E}_{x\sim p_k}\bigl[J_{k,l}^U(x)\,(J_{k,l}^\mu(x))^\top\bigr]
\\&=\frac{1}{\gamma_t^4}\frac{s_t^3}{s_t^2+\gamma_t^2}\;
\mathbb{E}_{x\sim p_k}\Bigl[r_{k,l}(x)^2((x-\mu_{k,l})U_{k,l}^\top \;+\; \,(x-\mu_{k,l})^\top U_{k,l}I)\Bigr]\left(I-\frac{s_t^2}{s_t^2+\gamma_t^2} U_{k,l}U_{k,l}^\top\right)\\
&=\frac{1}{\gamma_t^4}\frac{s_t^3}{s_t^2+\gamma_t^2}\;
\mathbb{E}_{x\sim \pi_{k,l}\mathcal{N}_{k,l}}\Bigl[r_{k,l}(x)^2((x-\mu_{k,l})U_{k,l}^\top \;+\; \,(x-\mu_{k,l})^\top U_{k,l}I)\Bigr]\left(I-\frac{s_t^2}{s_t^2+\gamma_t^2} U_{k,l}U_{k,l}^\top\right)\\
&\approx0
\end{align*}
The second equation holds because $\forall x$, if $x \notin \mathcal{N}_{k,l}(\mu_{k,l},\Sigma_{k,l})$, $r_{k,l}(x) =0.$ And the
third equation holds because if $x \sim \mathcal{N}_{k,l},(\mu_{k,l},\Sigma_{k,l})$, $\forall $ Const $C$,\[
\mathbb{E}_{x\sim \pi_{k,l}\mathcal{N}_{k,l}}[C(x-\mu_{k,l})] = 0.
\].

Thus, let $\alpha^\prime$ be the minimum eigenvalue of $H$,
\begin{equation}
    \alpha^\prime = \min\{\lambda_1,\lambda_2\},
\end{equation}
where
\begin{equation*}
    \lambda_1=\min_{l=1\,\dots\,,n_k}\frac{c_{k,l}\gamma_t^4}{(s_t^2+\gamma_t^2)^2},
\end{equation*}
and
\begin{equation*}
    \lambda_2=\min_{l=1,2,\dots,n_k}{\pi_{k,l}\frac{4(U_{k,l}^\top \mu_{k,l}))^2 +\|U_{k,l}\|_2^2\|\mu_{k,l}\|_2^2-\|U_{k,l}\|_2\|\mu_{k,l}\|_2\sqrt{8(U_{k,l}^\top \mu_{k,l}))^2 +\|U_{k,l}\|_2^2\|\mu_{k,l}\|_2^2}}{2}}.
\end{equation*}
\end{proof}

\section{Extension to MoG Latent Without Separation Assumption}
\subsection{2-Mode Analysis}

In this section, we relax the high separation assumption (where $r_k^+(x)r_k^-(x) \approx 0$). Instead, we treat the overlap between manifold components as a bounded perturbation to the ideal system. We aim to prove that the Hessian remains positive definite provided the overlap factor is sufficiently small.

\subsubsection{Definition of Overlap Factor}

We define the pointwise overlap factor $\xi_k(x)$ as the product of the assignment probabilities for the positive and negative components of the $k$-th manifold:

\begin{equation}
    \xi_k(x) \triangleq r_k^+(x)r_k^-(x).
\end{equation}

Since $r_k^+(x), r_k^-(x) \in [0, 1]$ and $r_k^+(x) + r_k^-(x) = 1$, the overlap factor is naturally bounded: $0 \leq \xi_k(x) \leq 0.25$.

We denote the maximum expected overlap magnitude as $\epsilon_{\text{overlap}}$:

\begin{equation}
    \epsilon_{\text{overlap}} = \sup_{x \in \text{supp}(p_t)} \xi_k(x).
\end{equation}

\subsubsection{Jacobian Analysis}

We revisit the derivation of the Jacobian $J_k^\mu$. In the original derivation, $J_k^\mu$ was decomposed into Term A (dominant term) and Term B (previously ignored):

\begin{equation*}
    J_k^\mu(x) = \underbrace{J_{\text{ideal}}^\mu(x)}_{\text{Term A}} + \underbrace{E^\mu(x)}_{\text{Term B}}.
\end{equation*}

When $\xi_k(x) \to 0$, we can recover the ideal Jacobian derived previously:

\begin{equation*}
    J_{\text{ideal}}^\mu(x) = -\frac{s_t}{\gamma_t^2}(r_k^{+}(x)-r_k^{-}(x))\left(I-\frac{s_t^2}{s_t^2+\gamma_t^2}U_kU_k^\top\right).
\end{equation*}

Term B contains the cross-product of weights, which is exactly our overlap factor $\xi_k(x)$. Specifically:

\begin{equation*}
    E^\mu(x) = -\frac{4 s_t^2}{\gamma_t^2 w_k^2(x)} \cdot \xi_k(x) \cdot \Sigma_k^{-1}x \left(I+\frac{s_t^2}{s_t^2+\gamma_t^2}U_kU_k^\top\right)\mu_k.
\end{equation*}

We can bound the norm of this error term. Since terms like $\frac{x}{w_k(x)}$ and projection matrices are bounded within the support, there exists a constant $C_1$ such that:

\begin{equation}
    \|E^\mu(x)\|_2 \leq C_1 \cdot \xi_k(x).
\end{equation}

Similarly, for the Jacobian with respect to $U_k$, we can decompose it into an ideal part and an error part proportional to the overlap:

\begin{equation*}
    J_k^U(x) = J_{\text{ideal}}^U(x) + E^U(x), \quad \text{where } \|E^U(x)\|_F \leq C_2 \cdot \xi_k(x).
\end{equation*}

\subsubsection{Hessian Analysis}

The Hessian matrix $H$ is defined as the expected outer product of the Jacobians:

\begin{equation*}
    H = \mathbb{E}_{x \sim p_t(x)} [J(x)J(x)^\top].
\end{equation*}

Let $J(x) = J_{\text{ideal}}(x) + E(x)$. Substituting this into the Hessian definition:

\begin{equation*}
\begin{aligned}
H &= \mathbb{E} \left[ (J_{\text{ideal}} + E)(J_{\text{ideal}} + E)^\top \right] \\
  &= \underbrace{\mathbb{E}[J_{\text{ideal}}J_{\text{ideal}}^\top]}_{H_{\text{ideal}}} + \underbrace{\mathbb{E}[J_{\text{ideal}}E^\top + E J_{\text{ideal}}^\top + E E^\top]}_{\Delta H}.
\end{aligned}
\end{equation*}

Here, $H_{\text{ideal}}$ is the Hessian matrix under the high separation assumption and $\Delta H$ is the perturbation matrix induced by the overlap.

From the previous proof , we established that $H_{\text{ideal}}$ is block-diagonal (or has negligible off-diagonals due to symmetry) and positive definite. Let $\alpha > 0$ be its minimum eigenvalue:

\begin{equation*}
\begin{aligned}
\lambda_{\min}(H_{\text{ideal}}) &\approx \mathbb{E}[{(r_k^+(x)-r_k^-(x))^2}]\min{(\lambda_{\min}(H_{\mu_{k}\mu_{k}}),\lambda_{\min}(H_{U_{k}U_{k}}))} \\
&= {\mathbb{E}[(1-4\xi_k(x))]}\min{(\lambda_{\min}(H_{\mu_{k}\mu_{k}}),\lambda_{\min}(H_{U_{k}U_{k}}))}  \\
&\geq {(1-4\epsilon_{\text{overlap}})}\min{(\lambda_{\min}(H_{\mu_{k}\mu_{k}}),\lambda_{\min}(H_{U_{k}U_{k}}))} \triangleq \alpha.
\end{aligned}
\end{equation*}


We apply the Triangle Inequality and Cauchy-Schwarz inequality to bound the spectral norm of $\Delta H$:

\begin{equation*}
\begin{aligned}
\|\Delta H\|_2 &\leq 2\|\mathbb{E}[J_{\text{ideal}}E^\top]\|_2 + \|\mathbb{E}[EE^\top]\|_2 \\
&\leq 2 \sqrt{\mathbb{E}[\|J_{\text{ideal}}\|^2] \mathbb{E}[\|E\|^2]} + \mathbb{E}[\|E\|^2].
\end{aligned}
\end{equation*}

Since $\|E^\mu(x)\| \leq C_1 \cdot \xi_k(x)$ and $\|E^U(x)\| \leq C_2 \cdot \xi_k(x)$, the perturbation norm is dominated by the overlap factor:
\begin{equation*}
    J_k^U(x) = J_{\text{ideal}}^U(x) + E^U(x), \quad \text{where } \|E^U(x)\|_F \leq C_2 \cdot \xi_k(x).
\end{equation*}

The Hessian perturbation matrix is given by $\Delta H \approx \mathbb{E}[J_{\text{ideal}} E^\top + E J_{\text{ideal}}^\top]$. To bound its spectral norm $\|\Delta H\|_2$, we define the signal bounds $$S_\mu \triangleq \sup_{x} \| J_{\text{ideal}}^\mu(x) \|_2 \approx \frac{s_t}{\gamma_t^2}$$ and $$S_U \triangleq \sup_{x} \| J_{\text{ideal}}^U(x) \|_2 \approx \frac{s_t R^2}{\gamma_t^2}.$$We can define the composite perturbation constant $C^\prime$ as:

\begin{equation*}C^\prime = 2 (S_\mu + S_U)(C_1 + C_2).
\end{equation*}\label{primecdef}

And thus,

\begin{equation*}
    \|\Delta H\|_2 \leq C^\prime \cdot \epsilon_{\text{overlap}}.
\end{equation*}

\subsubsection{Positive Definiteness via Weyl's Inequality}\label{firstorderinequality}

We now use Matrix Perturbation Theory to prove the convexity of the actual loss landscape. With Weyl's Inequality for Hermitian Matrices, we have: Let $H = H_{\text{ideal}} + \Delta H$. The eigenvalues of $H$ are bounded by:

\begin{equation}
    \lambda_{\min}(H) \geq \lambda_{\min}(H_{\text{ideal}}) - \|\Delta H\|_2.
\end{equation}

Substituting our bounds:

\begin{equation}
    \lambda_{\min}(H) \geq \alpha - C^\prime \cdot \epsilon_{\text{overlap}}.
\end{equation}

\textbf{Condition for Convexity:} For the Hessian $H$ to remain positive definite (ensuring strong convexity), we require:

\begin{equation}
    \alpha - C^\prime \cdot \epsilon_{\text{overlap}} > 0 \implies \epsilon_{\text{overlap}} < \frac{\alpha}{C^\prime}.
\end{equation}

This physically implies that as long as the manifolds are not excessively overlapping , the loss function remains locally strongly convex.






\subsubsection{Convergence Analysis}

Based on the perturbation analysis, we state the revised convergence theorem.

\begin{theorem}[Linear Convergence under Bounded Overlap]
Let $L(\theta)$ be the loss function. Assume the overlap factor satisfies $\epsilon_{\text{overlap}} < \frac{\alpha}{C^\prime}$. Then, the Hessian $H$ at $\theta^\star$ is positive definite with minimum eigenvalue:

\begin{equation*}
    \lambda_{\min}(H) \geq \alpha_{\text{eff}} = \alpha - C^\prime\epsilon_{\text{overlap}} > 0.
\end{equation*}

Consequently, gradient descent with step size $\eta$ converges linearly:

\begin{equation*}
    \|\theta^t - \theta^\star\|_2 \leq \left( \frac{\kappa_{\text{eff}} - 1}{\kappa_{\text{eff}} + 1} \right)^t \|\theta^{(0)} - \theta^\star\|_2,
\end{equation*}

where the effective condition number is degraded by the overlap:

\begin{equation*}
    \kappa_{\text{eff}} = \frac{L }{\alpha - C^\prime\epsilon_{\text{overlap}}}.
\end{equation*}
\end{theorem}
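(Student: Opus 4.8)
\emph{Proof proposal.}
The statement has two halves, and both are short given what is already in place. For the spectral bound I would start from the decomposition $H = H_{\mathrm{ideal}} + \Delta H$ established above, feed in the two ingredients already proved in this subsection — the separation estimate $\lambda_{\min}(H_{\mathrm{ideal}}) \ge \alpha = (1-4\epsilon_{\text{overlap}})\min\{\lambda_{\min}(H_{\mu_k\mu_k}),\lambda_{\min}(H_{U_kU_k})\}$ and the perturbation estimate $\|\Delta H\|_2 \le C'\epsilon_{\text{overlap}}$ with $C' = 2(S_\mu+S_U)(C_1+C_2)$ — and apply Weyl's inequality $\lambda_{\min}(A+B)\ge\lambda_{\min}(A)-\|B\|_2$ for symmetric matrices. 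This immediately gives $\lambda_{\min}(H)\ge \alpha - C'\epsilon_{\text{overlap}} = \alpha_{\mathrm{eff}}$, which is strictly positive precisely under the hypothesis $\epsilon_{\text{overlap}} < \alpha/C'$. That is the entire first display.

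For the convergence rate I would mirror the proof of Theorem~\ref{thm:local_conv}. First, promote the pointwise bound $\nabla^2 L(\theta^\star)\succeq \alpha_{\mathrm{eff}} I$ to a uniform one on a small ball $U = \{\theta:\|\theta-\theta^\star\|_2\le\rho\}$ using continuity of $\theta\mapsto\nabla^2 L(\theta)$ (the score $s_\theta$ is smooth in $\theta$, so the Gram matrix defining $H$ varies continuously); shrinking $\rho$, one has $\nabla^2 L(\theta)\succeq \alpha_{\mathrm{eff}} I$ on $U$, i.e.\ $L$ is $\alpha_{\mathrm{eff}}$-strongly convex there. By Lemma~\ref{lemma:lipschitz} the gradient of $L$ is Lipschitz with the constant denoted $L$ in the statement, so $L$ is also $L$-smooth on $U$. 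With the step size $\eta = 2/(\alpha_{\mathrm{eff}}+L)$, the textbook one-step contraction for an $\alpha_{\mathrm{eff}}$-strongly convex, $L$-smooth function — obtained by expanding $\|\theta^{(m)}-\eta\nabla L(\theta^{(m)})-\theta^\star\|_2^2$ and invoking co-coercivity of $\nabla L$ about its minimizer $\theta^\star$ — gives
\[
\|\theta^{(m+1)}-\theta^\star\|_2 \le \frac{\kappa_{\mathrm{eff}}-1}{\kappa_{\mathrm{eff}}+1}\,\|\theta^{(m)}-\theta^\star\|_2,\qquad \kappa_{\mathrm{eff}}=\frac{L}{\alpha_{\mathrm{eff}}},
\]
as long as the iterates lie in $U$.

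The remaining point — and the only one needing real care — is closing the induction that keeps the trajectory inside $U$, so that the \emph{local} curvature bound is never used circularly. Assuming $\theta^{(0)}\in U$, if $\theta^{(m)}\in U$ then the contraction above yields $\|\theta^{(m+1)}-\theta^\star\|_2 \le \|\theta^{(m)}-\theta^\star\|_2 \le \rho$, so $\theta^{(m+1)}\in U$; since $U$ is convex the whole update segment stays in $U$ and the curvature and smoothness estimates are legitimate at that step. Iterating the per-step bound from $m=0$ then gives the claimed geometric rate $\big((\kappa_{\mathrm{eff}}-1)/(\kappa_{\mathrm{eff}}+1)\big)^t$. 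I expect no conceptual obstacle here: the bulk of the work was the perturbation bound on $\|\Delta H\|_2$, which the preceding subsection has already carried out, so the proof of the theorem itself is essentially assembly of Weyl's inequality, the continuity/locality argument, and the standard gradient-descent contraction lemma.
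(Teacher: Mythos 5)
Your proposal is correct and follows essentially the same route as the paper, whose own proof is a one-line appeal to Weyl's inequality plus the strong-convexity contraction already set up in the preceding subsection. In fact your write-up is more careful than the paper's: the induction keeping the iterates inside the neighborhood $U$ (so the local curvature bound is not used circularly) is exactly the step the paper leaves implicit, and your assembly of Weyl's inequality with the bounds $\lambda_{\min}(H_{\mathrm{ideal}})\ge\alpha$ and $\|\Delta H\|_2\le C'\epsilon_{\text{overlap}}$ matches the intended argument.
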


\begin{proof}
The proof follows directly from the strong convexity of $L(\theta)$ established by Weyl's inequality. As $\epsilon_{\text{overlap}} \to 0$, we recover the ideal convergence rate.
\end{proof}

\subsection{Multi-Mode Analysis}

In this section, we analyze the convergence properties for the mutli-Mode Mixture of Gaussians model. We explicitly model the \textbf{overlap} between Gaussian components as a perturbation.

\subsubsection{The Overlap Factor}

We formally define the \textbf{Pairwise Overlap Factor} $\xi_{i,j}(x)$ between two components $i$ and $j$:

\begin{equation}
    \xi_{i,j}(x) \triangleq r_{k,i}(x) r_{k,j}(x).
\end{equation}

And we define the \textbf{Maximum Expected Overlap} $\epsilon_{\text{overlap}}$ for the manifold as:

\begin{equation}
    \epsilon_{\text{overlap}} = \max_{i} \sum_{j \neq i} \mathbb{E}_{x \sim p_t} [\xi_{i,j}(x)].
\end{equation}

This scalar $\epsilon_{\text{overlap}}$ quantifies the deviation from the ideal high separation regime. If components are perfectly separated, $\xi_{i,j} \to 0$ and $\epsilon_{\text{overlap}} \to 0$.

\subsubsection{Jacobian Derivation}

We need to compute the Jacobian of the score matching error vector $s_\theta(x,t) - \nabla \log p_t(x)$ with respect to the parameter $\mu_{k,l}$. Let $J_{l}^\mu(x) = \frac{\partial}{\partial \mu_{k,l}} \nabla \log p_{t,k}(x)$.


Similarly, we decompose the Jacobian for the $l$-th component into a \textbf{Signal Term} (Self) and a \textbf{Noise Term} (Interference).

\begin{equation*}
    J_{\mu}^l(x) = \underbrace{J_{\mu,\text{ideal}}^{l}(x)}_{\text{Signal}} + \underbrace{E_{\mu,\text{cross}}^{l}(x)}_{\text{Noise}}.
\end{equation*}


This term arises when we ignore the change in weights of other clusters ($j \neq l$). It dominates when $r_{k,l} \approx 1$:

\begin{equation*}
    {J_{\mu,\text{ideal}}^{l}(x)} \approx -\frac{s_t}{\gamma_t^2} r_{k,l}(x) \left( I - \frac{s_t^2}{s_t^2 + \gamma_t^2} U_{k,l} U_{k,l}^\top \right).
\end{equation*}


This term captures the gradient leaking into other clusters due to overlap:

\begin{equation}
   E_{\mu,\text{cross}}^{l}(x) = \sum_{j=1}^{n_k} C_1^\prime(x) \cdot \underbrace{r_{k,j}(x) r_{k,l}(x)}_{\xi_{j,l}(x)},
\end{equation}

where $C_1^\prime(x)$ collects bounded vector terms. The norm of the error term is strictly bounded by the overlap:

\begin{equation*}
    \| E_{\mu,\text{cross}}^{l}(x) \|_2 \le C_1^\prime \sum_{j \neq l} \xi_{j,l}(x).
\end{equation*}

For the Jacobian with respect to $U_k$, we have Similar derivation.

\begin{equation*}
    \| E_{U,\text{cross}}^{l}(x) \|_2 \le C_2^\prime \sum_{j \neq l} \xi_{j,l}(x).
\end{equation*}

\subsubsection{Hessian Block Structure}

The Hessian $H$ for the parameters $\boldsymbol{\mu} = [\mu_{k,1}, \dots, \mu_{k,n_k}]$ is a block matrix composed of $n_k \times n_k$ blocks, where each block is $D \times D$.

\begin{equation*}
H_{\boldsymbol{\mu}\boldsymbol{\mu}} =
\begin{pmatrix}
H_{1,1} & H_{1,2} & \cdots & H_{1,n_k} \\
H_{2,1} & H_{2,2} & \cdots & H_{2,n_k} \\
\vdots & \vdots & \ddots & \vdots \\
H_{n_k,1} & H_{n_k,2} & \cdots & H_{n_k,n_k}
\end{pmatrix}.
\end{equation*}

The $(i,j)$-th block is defined as:

\begin{equation*}
    H_{i,j} = \mathbb{E}_{x} [ J_{i}^\mu(x) (J_{j}^\mu(x))^\top ].
\end{equation*}


For diagonal blocks ($i=j=l$), the curvature is strictly determined by the expectation of the squared weights $\mathbb{E}[r_{k,l}(x)^2]$. Crucially, overlap causes \textbf{signal attenuation}, as the weight $r_{k,l}(x)$ drops below 1 in transition regions.

Using the identity $r_{k,l}(x)^2 = r_{k,l}(x)(1 - \sum_{j \neq l} r_{k,j}(x))$, we derive the exact expectation:

\begin{equation*}
\begin{aligned}
\mathbb{E}[r_{k,l}(x)^2] &= \mathbb{E}[r_{k,l}(x)] - \sum_{j \neq l} \mathbb{E}[r_{k,l}(x)r_{k,j}(x)] \\
&= \pi_{k,l} - \sum_{j \neq l} \mathbb{E}[\xi_{j,l}(x)] \\
&= \pi_{k,l} - \epsilon_{k,l}^{\text{total}}.
\end{aligned}
\end{equation*}

Thus, we lower-bound the diagonal curvature by accounting for the total overlap mass $\epsilon_{k,l}^{\text{total}}$ leaking from cluster $l$:

\begin{equation*}
    H_{l,l} \approx \mathbb{E}[ (J_l^{\text{ideal}})(J_l^{\text{ideal}})^\top ] \succeq \lambda_{\text{diag,l}} \cdot I,
\end{equation*}

where the effective base curvature is:

\begin{equation*}
    \lambda_{\text{diag,l}} = {(\pi_{k,l} - \epsilon_{k,l}^{\text{total}}) }\min{(\lambda_{\min}(H_{\mu_{k,l}\mu_{k,l}}),\lambda_{\min}(H_{U_{k,l}U_{k,l}}))}
\end{equation*}

Here, the term $(\pi_{k,l} - \epsilon_{k,l}^{\text{total}})$ represents the effective probability mass contributing to convexity. This formulation explicitly shows that smaller clusters (small $\pi_{k,l}$) are significantly more vulnerable to instability, as the effective mass can vanish if the overlap $\epsilon_{k,l}^{\text{total}}$ becomes comparable to the cluster size $\pi_{k,l}$.


For $i \neq j$, the block $H_{i,j}$ represents the interference.

\begin{equation*}
    H_{i,j} \approx \mathbb{E}_{x} [ J_i^{\text{ideal}} (J_j^{\text{ideal}})^\top ] \propto \mathbb{E}[r_{k,i}(x) r_{k,j}(x)].
\end{equation*}

\subsubsection{Perturbation Analysis }

We write the full Hessian as a sum of a block-diagonal matrix and a perturbation matrix:

\begin{equation*}
    H_{\boldsymbol{\mu}\boldsymbol{\mu}} = H_{\text{diag}} + \Delta H_{\text{overlap}}.
\end{equation*}

For \textbf{the minimum eigenvalue of $H_{\text{diag}}$,}
    \begin{equation*}
        \lambda_{\min}(H_{\text{diag}}) = \min_l \lambda_{\min}(H_{l,l}) = \min_l \lambda_{\text{diag,l}}\triangleq\lambda_{\text{base}}.
    \end{equation*}
For \textbf{Spectral Norm of $\Delta H_{\text{overlap}}$,} by Weyl's Inequality, the minimum eigenvalue of the full Hessian is:
    \begin{equation*}
        \lambda_{\min}(H) \ge \lambda_{\min}(H_{\text{diag}}) - \|\Delta H_{\text{overlap}}\|_2.
    \end{equation*}
and
\begin{equation}
    \Delta H_{\text{overlap}} \le \tilde{C} \cdot \mathbb{E}[\xi_{i,j}(x)],
\end{equation}
where
\begin{equation*} \tilde{C} = 2 \left( S_{\mu} C_1^\prime + S_{U} C_2^\prime \right)
\end{equation*}\label{tildecdef}
Substituting the bounds:

\begin{equation*}
    \lambda_{\min}(H) \ge \lambda_{\text{base}} - \tilde{C} \cdot \epsilon_{\text{overlap}}.
\end{equation*}

Therefore, $H$ is positive definite \textbf{if and only if}:

\begin{equation*}
    \epsilon_{\text{overlap}} < \frac{\lambda_{\text{base}}}{\tilde{C}}.
\end{equation*}

\textbf{Interpretation:} The optimization landscape is locally strictly convex provided the overlap between clusters is smaller than the intrinsic curvature of the individual Gaussians.

\subsubsection{Full Convergence Theorem}

Combining the analysis of $\mu$ and the similar decoupling argument for $U$ (using Schur complements to handle $H_{\mu U}$ terms which are also $O(\epsilon)$), we arrive at the final result.

\begin{theorem}
Let $\mathcal{L}(\theta)$ be the score matching loss. Assume the maximum expected overlap $\epsilon_{\text{overlap}}$ satisfies the condition $\epsilon_{\text{overlap}} < \tau$ for some threshold $\tau \propto \lambda_{\text{base}}$. Then the Hessian $H(\theta^\star)$ is strictly positive definite.

\textbf{Linear Convergence:} Gradient descent with step size $\eta$ converges as:
    \begin{equation*}
    \| \theta^{(t)} - \theta^\star \|_2 \le \rho^t \| \theta^{(0)} - \theta^\star \|_2,
    \end{equation*}
    where the convergence rate $\rho < 1$ is determined by the effective condition number:
    \begin{equation*}
    \kappa_{\text{eff}} = \frac{L}{\lambda_{\text{base}} - \tilde{C}\epsilon_{\text{overlap}}}.
    \end{equation*}
\end{theorem}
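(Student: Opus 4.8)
The plan is to assemble the theorem from three ingredients that are already in place in the excerpt: (i) a uniform lower bound on the minimum eigenvalue of the Hessian at $\theta^\star$, obtained from the overlap-perturbation analysis; (ii) the global smoothness constant $L' = L_l L$ for the score-matching objective from \cref{lemma:lipschitz}; and (iii) the classical contraction estimate for gradient descent on a function that is simultaneously strongly convex and smooth on a neighborhood of its minimizer.

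First I would record the local strong convexity. Using the Jacobian splits $J_l^\mu = J_{\mu,\text{ideal}}^l + E_{\mu,\text{cross}}^l$ and $J_l^U = J_{U,\text{ideal}}^l + E_{U,\text{cross}}^l$, the full Hessian decomposes as $H = H_{\text{diag}} + \Delta H_{\text{overlap}}$, where $H_{\text{diag}}$ is block-diagonal with $\lambda_{\min}(H_{\text{diag}}) = \lambda_{\text{base}} = \min_l \lambda_{\text{diag},l}$ after accounting for the signal-attenuation factor $\pi_{k,l} - \epsilon_{k,l}^{\text{total}}$, and the perturbation obeys $\|\Delta H_{\text{overlap}}\|_2 \le \tilde{C}\,\epsilon_{\text{overlap}}$ with $\tilde C = 2(S_\mu C_1' + S_U C_2')$. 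The $\mu$--$U$ coupling blocks $H_{\mu U}$ are themselves $O(\epsilon_{\text{overlap}})$ — in the strictly separated regime $r_{k,l}\equiv 1$ they vanish by the zero-mean argument of \cref{lemma:non_sym_convex} — so a Schur-complement reduction folds the entire $\mu$--$U$ coupling into $\Delta H_{\text{overlap}}$ up to constants. Weyl's inequality then gives $\lambda_{\min}(H(\theta^\star)) \ge \lambda_{\text{base}} - \tilde{C}\,\epsilon_{\text{overlap}}$, which is strictly positive precisely when $\epsilon_{\text{overlap}} < \tau := \lambda_{\text{base}}/\tilde{C}$. Setting $\alpha_{\text{eff}} := \lambda_{\text{base}} - \tilde{C}\,\epsilon_{\text{overlap}}$ and invoking continuity of $\nabla^2\mathcal{L}$, there is a neighborhood $U$ of $\theta^\star$ on which $\nabla^2\mathcal{L}(\theta) \succeq \tfrac12\alpha_{\text{eff}} I$ and $\|\nabla^2\mathcal{L}(\theta)\|_2 \le L'$.

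Next I would run the textbook gradient-descent argument. Since $\theta^\star$ is a stationary point of $\mathcal{L}$ and $\mathcal{L}$ is $\tfrac12\alpha_{\text{eff}}$-strongly convex and $L'$-smooth on $U$, with step size $\eta = 2/(\alpha_{\text{eff}} + L')$ the map $\theta \mapsto \theta - \eta\nabla\mathcal{L}(\theta)$ is a contraction on $U$ with factor $(\kappa_{\text{eff}}-1)/(\kappa_{\text{eff}}+1)$, where $\kappa_{\text{eff}} = L'/\alpha_{\text{eff}}$. An induction on $m$ shows that if $\theta^{(0)}\in U$ then every iterate stays in $U$, since the distance to $\theta^\star$ only decreases, so the contraction applies at every step and yields $\|\theta^{(m)}-\theta^\star\|_2 \le \big((\kappa_{\text{eff}}-1)/(\kappa_{\text{eff}}+1)\big)^m\|\theta^{(0)}-\theta^\star\|_2$; writing $\rho = (\kappa_{\text{eff}}-1)/(\kappa_{\text{eff}}+1) < 1$ gives the stated bound, and letting $\epsilon_{\text{overlap}}\to 0$ recovers the separated-case rate of \cref{thm:local_conv}.

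The delicate part is not the gradient-descent step, which is entirely standard, but making the perturbation bookkeeping airtight: one must verify that every cross term — the $j\neq l$ interference blocks inside $H_{\mu\mu}$ and $H_{UU}$, the attenuation correction on the diagonal, and the full $\mu$--$U$ coupling after the Schur reduction — is genuinely $O(\epsilon_{\text{overlap}})$ with the claimed composite constant $\tilde{C}$, and that the bounds $\|E_{\mu,\text{cross}}^l(x)\|_2 \le C_1'\sum_{j\neq l}\xi_{j,l}(x)$ and $\|E_{U,\text{cross}}^l(x)\|_2 \le C_2'\sum_{j\neq l}\xi_{j,l}(x)$ hold uniformly on $\mathrm{supp}(p_t)$. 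This in turn relies on the bounded-support \cref{x in D} to control the prefactors $C_1'(x), C_2'(x), S_\mu, S_U$. Once that uniform $O(\epsilon_{\text{overlap}})$ control is secured, the remainder is a direct application of Weyl's inequality followed by the standard strongly-convex/smooth contraction.
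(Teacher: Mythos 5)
Your proposal follows essentially the same route as the paper: the decomposition $H = H_{\text{diag}} + \Delta H_{\text{overlap}}$ with $\lambda_{\min}(H_{\text{diag}}) = \lambda_{\text{base}}$, the bound $\|\Delta H_{\text{overlap}}\|_2 \le \tilde{C}\,\epsilon_{\text{overlap}}$, Weyl's inequality, a Schur-complement reduction for the $\mu$--$U$ coupling, and the standard strongly-convex/smooth gradient-descent contraction. Your added care about the neighborhood argument, the induction keeping iterates in $U$, and the uniform verification of the cross-term constants is welcome but does not change the argument, which matches the paper's.
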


This proves that the High Separation Assumption is not a binary requirement, but rather a continuum. The algorithm is robust to finite overlap, with the convergence rate degrading gracefully as the overlap increases.

\begin{remark}
It is important to note that physically, $\epsilon_{\text{overlap}}$ will not be arbitrarily large. 
\end{remark}

\section{The Detail of the real-world experiments}\label{sec:minist experiments}
In the part, we provide the detail of the experiments, including dataset and training pipeline. We use MNIST and CIFAR-10 as the datasets, and we adopt the mixture Gaussian distribution as the prior distribution in both cases. 

For MNIST, our model consists of MLP-based encoder and decoder networks, each with a single hidden layer of 256 dimensions. The model is trained with the AdamW optimizer at a learning rate of 0.0005.  We train 10 VAEs with the numbers 1 to 10 as the ten clusters.

On CIFAR-10, we implement a 3-layer RNN encoder and decoder for CIFAR-10. The encoder hidden dimensions are [64, 128, 256], and the decoder's are [256, 128, 64].And we train 10 VAEs for each of the ten clusters based on the classification by category. Each layer in both networks stacks 3 recurrent blocks.The model is trained with the AdamW optimizer at a learning rate of 0.0001.
    
Our experiment was conducted on RTX4090.

\end{document}